\documentclass{article}

\usepackage[english]{babel}

\usepackage{algorithm}
\usepackage{algpseudocode}
\usepackage{amsmath}
\usepackage[letterpaper,top=2cm,bottom=2cm,left=3cm,right=3cm,marginparwidth=1.75cm]{geometry}

\usepackage{graphicx} 

\usepackage{amsmath}
\usepackage{amssymb}
\usepackage{mathtools}
\usepackage{amsthm}

\usepackage{csquotes}
\usepackage[style=numeric, backend=bibtex]{biblatex}
\theoremstyle{plain}
\newtheorem{theorem}{Theorem}[section]

\newtheorem{lemma}[theorem]{Lemma}

\theoremstyle{definition}

\newtheorem{assumption}[theorem]{Assumption}
\theoremstyle{remark}

\usepackage[textsize=tiny]{todonotes}

\title{Convergence Guarantees of Gradient Descent for Neural Networks via Generalized Lipschitz Smoothness}
\author{Siqiao Mu \qquad \qquad Diego Klabjan}
\date{Aug 7, 2026}

\begin{document}

\maketitle
\begin{abstract}
We establish convergence guarantees of gradient descent for general feedforward neural networks of arbitrary width or depth, with no special requirements on the initialization or dataset. We only assume that the activation functions are Lipschitz smooth, Lipschitz continuous, and linearly bounded--- properties that hold for linear, tanh, softplus, and sigmoid activation functions. For the loss function, we require that it is Lipschitz smooth in the model outputs, which is true for mean-squared error. The key theoretical insight is that the Lipschitz properties of the activation functions are partially preserved even through repeated compositions, leading to a novel generalized Lipschitz smoothness condition  where the change in gradient is upper bounded by the change in the parameter space, multiplied by polynomial terms of the parameter norms at both endpoints. This type of condition holds for both the model function \textit{and} the loss function, enabling a descent lemma where the loss decreases as long as the learning rate is small enough with respect to the parameter norms. By ensuring that the parameter norms do not grow too quickly to infinity, we prove that the minimum squared gradient norm converges to zero in $T$ iterations at rate $O(1/T^{1/L})$ for an $L$-layer neural network. 
\end{abstract}


\section{Introduction}

The behavior of gradient descent on the loss functions of neural networks has evaded a complete theoretical understanding for decades. These loss landscapes are not only highly nonconvex, but more critically, they do not satisfy  \textit{Lipschitz smoothness}, where the change in gradient is linearly bounded by the change in parameter space. Instead, the local Lipschitz constant, which dictates the sharpness of the landscape, can grow arbitrarily large as the parameters approach infinity. Because Lipschitz smoothness is essential for proving the convergence of gradient algorithms, a fundamental gap has emerged in machine learning research: popular optimization methods are analyzed under the smoothness assumption, yet empirically validated on neural network landscapes that are not theoretically well-characterized.

A rich body of research examines the convergence of gradient descent to \textit{global minima} of neural networks in highly constrained settings --- requiring, for example, a sufficiently wide (overparametrized) or infinitely wide network \cite{du2018gradient, Li, pmlr-v97-allen-zhu19a, 3327757.3327948}, specific architecture or activation functions \cite{arora2018a}, balanced or Gaussian initialization scheme, nondegenerate \cite{chatterjee2026convergencegradientdescentdeep} or orthogonal \cite{dana2025convergence} dataset. Under such conditions, the iterates remain within a bounded set around the initialization, effectively satisfying \textit{local} Lipschitz smoothness. While these analyses provide valuable insights into how gradient descent can achieve unexpectedly strong performance in deep learning, they require many assumptions and often fail to apply to scenarios beyond their constraints. For example, it has been shown that the infinite-width regime does not adequately capture the behavior of training finite-width neural networks \cite{arora2019, yehudaiNEURIPS2019_5481b2f3, yang2022featurelearninginfinitewidthneural, vyas2023empirical}.

A separate line of research seeks to identify weaker requirements, such as \textit{non-uniform} or \textit{generalized Lipschitz} smoothness conditions, that still allow gradient descent to converge. For twice-differentiable functions,  Lipschitz smoothness is equivalent to requiring that the Hessian norm is bounded by a constant. One relaxation is ($L_0$, $L_1$)-smoothness \cite{Zhang2020Why}, which requires that the Hessian norm is upper bounded by an affine function of the gradient norm. This condition has been further generalized to any nondecreasing function \cite{li2023convex}, and other works consider bounding the Hessian with an affine function of the objective value \cite{vaswani2026convergencesteepestdescentadam}. A related concept is \textit{relative smoothness}, where the Hessian is upper bounded by the Hessian of some convex reference function, allowing for convergence of gradient descent or mirror descent \cite{Bauschke2017, lunesterov}. However, while these conditions can capture more complex settings, they are not generally satisfied by the loss landscapes of deep neural networks.

In this work, we provide a fresh analytical framework addressing the limitations of the above two approaches. We identify a generalized Lipschitz smoothness condition, which we call \textit{double polynomial smoothness}, that fully characterizes feedforward neural networks of any width or depth, as long as the activation functions are Lipschitz smooth, Lipschitz continuous, and linearly bounded, and the loss is Lipschitz smooth in the model outputs. By recursively relating the Lipschitz properties of one layer to the next, we establish that the change in model function gradient is upper bounded by the change in parameter space, multiplied by polynomial terms of the parameter norms at both endpoints.\footnote{In particular, we use ``double" to highlight the dependence on the parameter norms at \textit{both} endpoints, and also to distinguish from the use of the term ``polynomial smoothness" in adjacent research fields such as the analysis of Approximate Message Passing algorithms \cite{nguyen2023stabilityapproximatemessagepassing}.} This property not only correctly describes the structure of the neural network, but it can also be passed along to the loss function, yielding a descent lemma where the loss decreases as long as the learning rate is small enough with respect to the parameter norms. The last step is demonstrating that the parameter norms grow sublinearly to infinity, at rate $T^{\frac{L-1}{L}}$. This allows us to prove that the minimum squared gradient norm converges to zero at rate $O(1/T^{1/L})$ for $L$-layer neural networks. 

Our convergence analysis apply to a broad range of feedforward neural networks, including models with linear, tanh, softplus, and sigmoid activation functions. The advantage of our approach is that activation functions are relatively easy to mathematically characterize, and they easily satisfy requirements such as Lipschitz smoothness or continuity that would otherwise be too restrictive to impose on the overall loss function. We require that the loss is Lipschitz smooth in the model outputs, a mild condition satisfied by mean-squared error. We have no special requirements on the width, depth, initialization or dataset; we only assume that the dataset is normalized as a matter of convenience, to streamline some algebraic steps. As we do not require that the iterates remain in a bounded set, our analysis seamlessly accounts for the \textit{feature learning} regime of neural networks, where the parameters may progress far from initialization to learn feature representations of the data \cite{chizatlazy, pmlr-v125-woodworth20a}.

Our contributions can be summarized as follows. 
\begin{itemize}
    \item We establish the \textit{double polynomial smoothness} of feedforward neural networks and their loss functions, proving that the change in gradient is upper bounded by the change in parameter space, multiplied by  polynomial terms of the parameter norms of both endpoints. 
    \item We prove that gradient descent on the loss function of $L$-layer neural networks converges at rate $O(1/T^{1/L})$, provided that the step size is small enough with respect to the parameter norm and loss value.
\end{itemize}

\section{Related Work}

\paragraph{Convergence to global minima of neural network loss functions.} Due to the enormous success of neural networks, a large body of work focuses on identifying  circumstances under which gradient descent converges to \textit{global minima} of neural network loss functions, where the resulting model achieves zero error on the training set. These analyses typically rely on a patchwork of assumptions on the model dimension, initialization or dataset in order to demonstrate that some gradient dominance condition persists for all points in the gradient descent trajectory (although not globally), driving the loss to zero. Many such results hold in an over-parametrized regime where the hidden layers are extremely wide relative to the data dimension or dataset size \cite{Li, du2018gradient, pmlr-v97-allen-zhu19a, dana2025convergence, pmlr-v206-xu23c}. Still others consider \textit{infinite-width} neural networks \cite{https://doi.org/10.1002/cpa.22200, barboni2026traininginfinitelydeepwide}; in particular, one central framework is the neural tangent kernel (NTK) analysis, where as the width of the network goes to infinity, the gradient descent training process reduces to a kernel method. In these analyses, the iterates can be shown to remain in a bounded set close to initialization. They also often rely on random initialization; for example, the NTK analysis calls on the fact that a neural network with infinite random initializations converges to a Gaussian process by way of the Central Limit Theorem \cite{3327757.3327948}.
Finally, still other works require restrictions on the dataset; for example the works \cite{arora2018a, 11568902} consider gradient descent for deep linear neural networks on whitened data, and the work \cite{chatterjee2026convergencegradientdescentdeep} analyzes a complementary setting where the neural network dimension is arbitrary, but the dimension of the input data is at least the number of data points. While these works provide important insights into how neural networks can achieve near-zero training loss under strict assumptions, they leave the broader behavior of gradient descent beyond such assumptions unexplained.


At the same time, a general analysis of the convergence rate of gradient descent to a \textit{stationary point} for neural networks is conspicuously missing. A well-known issue regarding 
this setting is the lack of a global Lipschitz constant or bound on the Hessian norm as the iterates converge towards infinity \cite{sun2019optimizationdeeplearningtheory}. By contrast, if training remains within a compact set, the Hessian is necessarily bounded for all iterates, and the function is locally Lipschitz smooth. Accounting for the potentially unbounded behavior of the parameters is not just a technical challenge; it is critical to understanding the ``feature learning" regime of neural networks, where the final iterate moves significantly from initialization as the model learns feature representations of the data. In contrast, it has been shown that the ``lazy training" regime where iterates do not move far from initialization does not lead to models that perform well, adding on to the limitations of the aforementioned works \cite{chizatlazy, pmlr-v125-woodworth20a}.

\paragraph{Generalized Lipschitz smoothness.} In the interest of developing more faithful analyses of the neural network setting, a recent line of work has targeted relaxations of the Lipschitz smoothness condition that can still yield convergence of gradient algorithms, sometimes called \textit{generalized Lipschitz smoothness} or \textit{non-uniform smoothness}. The work \cite{Zhang2020Why} proposes the $(L_0, L_1)$-smoothness condition, where the Hessian norm is bounded by an affine function of the gradient norm, to explain the performance of gradient clipping. This condition was then generalized to polynomial functions \cite{NEURIPS2019_7a2b33c6, yang2025adaptive}, later further extended to any nondecreasing functions of the gradient norm \cite{li2023convex}. Additional works have considered bounding the Hessian norm by polynomial functions of the objective value \cite{vaswani2026convergencesteepestdescentadam, pmlr-v139-mei21a}. However, these generalized smoothness conditions are typically \textit{empirically} motivated and are only satisfied by some special cases of neural networks--- for example, two-layer neural networks where one layer is frozen \cite{vaswani2026convergencesteepestdescentadam}, two-layer neural networks with a strictly increasing activation function on linearly separable data \cite{taheri2023fastconvergencelearningtwolayer}, or deep linear neural networks where the weights remain in a ``strongly balanced" subspace at all times \cite{alimisis2026needwarmuptheoreticalperspective}.

A related area of research is \textit{relative smoothness}, motivated by mirror descent analysis, where the Hessian norm is upper bounded by the Hessian of some convex reference function that is used to map iterates to and from a dual space \cite{Bauschke2017, lunesterov}. In particular, the works \cite{lunesterov, pmlr-v238-fatkhullin24a} identify that a two-layer linear neural network satisfies a relative smoothness condition where the loss Hessian is bounded by a quadratic term in the parameter norms. This condition is closest to the double polynomial smoothness presented in our work. However, they only address mirror descent using this reference function, not vanilla gradient descent, and they do not consider general neural networks with nonlinear activation functions.

Finally, some works attempt to circumvent issues with global smoothness by considering separately the local conditions around the gradient descent trajectory. The work \cite{mishkinNEURIPS2024_1ac83203} proposes \textit{directional smoothness}, where the gradient variation depends solely on the local conditions on the optimization path. However, this requires solving implicit equations to obtain the learning rate, and the existing analysis only applies to convex functions. The work \cite{fox2026glocal} introduces the notion of ``glocal" smoothness, where the loss function may have a prohibitively large \textit{global} Lipschitz constant but a smaller \textit{local} Lipschitz constant. This enables the analysis of algorithms on more complex landscapes, but still excludes neural networks, which lack a global Lipschitz constant over an unbounded domain. Finally, the work \cite{berahas2024nonuniform} considers the broad scenario where one has access to a local first-order smoothness oracle that bounds the local Lipschitz smoothness constant within a ball of some radius $R$. However, a key challenge of this framing lies in choosing $R$ at each step such that it includes the next iterate. In contrast, the double polynomial smoothness derived in this work is a \textit{global} condition that holds for any two points regardless of their distance apart.

\paragraph{Matrix factorization and low-rank fine-tuning.} Many prior works have observed the close connection between  matrix factorization  and neural network training \cite{sun2019optimizationdeeplearningtheory, xiong2024how}. In particular, we build on the convergence proof structure of the Low-Rank Adaptation (LoRA) algorithm \cite{mu2026convergencerateloragradient}, which can be reinterpreted as training a specific two-layer linear neural network. However, the general neural network setting is significantly more complex due to the nonlinearity of the activation functions and the recursive relationships generated by the number of layers.

\section{Setup}

Let $\odot$ denote the Hadamard product and $\lVert\cdot\rVert_F$ denote the Frobenius norm, where for the real matrix $X \in \mathbb{R}^{m \times n}$, we have $\lVert X\rVert_F = \sqrt{\sum_{i=1}^m \sum_{j=1}^n X_{ij}^2}$. We also write $\lVert \cdot \rVert$ for $\lVert \cdot \rVert_F$ throughout. We denote $\langle A,B\rangle=\operatorname{tr}(A^TB)$.

We consider empirical risk minimization on a dataset of $n$ training samples $\{x_i, y_i \}^n_{i = 1}$, where $x_i \in \mathbb{R}^d$ and $y_i \in \mathbb{R}$. We model the data with $f$, a feedforward neural network. Denote by $\sigma: \mathbb{R} \to \mathbb{R}$ the activation function, and we overload notation such that for $X\in \mathbb{R}^{n \times m}$,  $\sigma(X) \in \mathbb{R}^{n \times m}$ denotes the element-wise application of the activation function. Then an $L$-layer feedforward neural network $f$, predicting on data sample $x$, is parametrized by $L$ weight matrices $w_1, w_2,..., w_L$ as follows
\begin{equation*}
    f(x; w_1, w_2,...,w_L) = \sigma(w_{L} \sigma (w_{L-1} \sigma(... w_2 \sigma (w_1 x)))),
\end{equation*}
where $L \geq 2$, 
$w_\ell \in \mathbb{R}^{d_\ell \times d_{\ell - 1}}$, $d_0 = d$, and $d_{L} = 1$. We also denote the width of the neural network as $d_{max} = \max \{d_0,..., d_L\}$. We require the following assumptions on the activation function $\sigma$.

\begin{assumption}
\label{assump:lipschitz}
    The activation function $\sigma: \mathbb{R} \to \mathbb{R}$ is continuously  differentiable and linearly bounded such that for all $w \in \mathbb{R}$, we have 
    \begin{equation}
    \label{eq:l0}
        |\sigma(w)| \leq c_0 + c_1 |w|.
    \end{equation}
    Moreover, $\sigma$ is Lipschitz continuous and Lipschitz smooth with nonnegative constants $c_2$, $c_3$ such that for all  $w_1, w_2 \in \mathbb{R}$, we have
    \begin{equation}
        \label{eq:l2}
        |\sigma(w_1) - \sigma(w_2)| \leq c_2 |w_1 - w_2|,
    \end{equation}
    \begin{equation}
        \label{eq:l3}
         |\sigma'(w_1) - \sigma'(w_2)| \leq c_3 |w_1 - w_2|.
    \end{equation}

\end{assumption}

\noindent Assumption \ref{assump:lipschitz} also implies that $|\sigma'(w)| \leq c_2$ for all $w \in \mathbb{R}$. For simplicity, we consider neural networks where each layer has the same activation function $\sigma$. However, since we just bound away the activation function using Assumption \ref{assump:lipschitz}, the analysis can also be applied to neural networks with different activation functions at each layer, as long as they all satisfy Assumption \ref{assump:lipschitz}.

\begin{assumption}
\label{assump:boundeddata}
    There exists constant $c_x$ such that for $i = 1,\dots,n$, the input data $x_i$ are uniformly upper bounded where
    \begin{align*}
        \lVert x_i \rVert \leq & c_x d^{1/2}.
    \end{align*}
\end{assumption}

\begin{assumption}
\label{assump:bounded1}
    We have $ \max\{c_x, c_0, c_1, c_2, c_3\} \leq 1$. 
\end{assumption}

We assume the Lipschitz constants and $c_x$ are less than or equal to 1. This is solely to simplify the algebra, and is not required to achieve the main conclusion. We remark that the coefficients $c_0, c_1, c_2, c_3$ are all less than or equal to $1$ for standard tanh, sigmoid, linear, and softplus functions. Moreover, normalization of the dataset is common in practice.

We construct a block matrix $W$ with the weight matrices $w_1,..., w_L$ on the diagonal. This matrix has dimension $D_2 \times D_1$, where $D_1 = \sum_{\ell = 0}^{L-1} d_\ell$ and $D_2 = \sum_{\ell = 1}^{L} d_\ell$. 

\[
W = \begin{bmatrix}
w_1 & & &  \\
& w_2 & & \\
& & \ddots & \\
 & & & w_L
\end{bmatrix}, \quad W \in \mathbb{R}^{D_2 \times D_1}.
\]

For each layer $\ell$, define the matrices $e_{\ell1} \in \mathbb{R}^{d_{\ell} \times D_2}$ and $e_{\ell2} \in \mathbb{R}^{D_1 \times d_{\ell -1 }}$ as follows
\begin{align*}
    e_{\ell1} = &
    \begin{bmatrix}
    0_{d_\ell \times d_1} & 0_{d_\ell \times d_2} & \hdots & 0_{d_{\ell} \times  d_{\ell - 1}} & I_{d_\ell \times d_\ell} & 0_{d_\ell \times d_{\ell + 1}} & \hdots & 0_{d_\ell \times d_{L}} 
    \end{bmatrix} \\
    e_{\ell2} = &
    \begin{bmatrix}
    0_{d_{\ell-1} \times d_0} & 0_{d_{\ell-1} \times d_1} & \hdots & 0_{d_{\ell-1} \times  d_{\ell - 2}} & I_{d_{\ell-1} \times d_{\ell-1}} & 0_{d_{\ell-1} \times d_{\ell}} & \hdots & 0_{d_{\ell -1} \times d_{L - 1}} 
    \end{bmatrix}^T,
\end{align*}
such that the linear operator $E_\ell[W]$ defined as follows extracts the $\ell$th diagonal block from $W$, recovering the weight matrix $w_\ell$ of the layer,
\begin{equation}
\label{eq:defineE}
    E_{\ell}[W] = e_{\ell 1} W e_{\ell 2}= w_\ell.
\end{equation}
We also denote the corresponding adjoint operator on a matrix $M \in \mathbb{R}^{d_\ell \times d_{\ell - 1}}$ as follows,
\begin{equation}
\label{eq:defineEstar}
    E^*_\ell[M] = e^T_{\ell 1} M e^T_{\ell 2}.
\end{equation}

We are ready to write $f$ in a reparametrized form $F$ as a function of $W \in \mathbb{R}^{D_2 \times D_1}$. We have
\begin{equation}
\label{eq:restructured_f}
    f(x_i; w_1,..,w_L) = F(x_i; W) = \sigma (E_L [W] \sigma (E_{L-1} [W] \sigma(... E_2 [W]  \sigma (E_1[W] x_i)))).
\end{equation}

Let $\hat{y}_i = F(x_i; W)$ represent the prediction on the sample $(x_i, y_i)$. We define the individual loss on the output data $y_i$ as $J_i: \mathbb{R} \to \mathbb{R}$. Then the overall loss function $\mathcal{L}:\mathbb{R}^{D_2 \times D_1} \to \mathbb{R}$ is given as
\begin{align}
\label{eq:loss}
    \mathcal{L}(W) = &  \frac{1}{n} \sum_{i = 1}^n J_i (\hat{y}_i) = \frac{1}{n} \sum_{i = 1}^n J_i (F(x_i; W)).
\end{align}
\begin{assumption}
\label{assump:lipJ}
    The individual loss $J_i$ is nonnegative and $c_J$-Lipschitz smooth in the model outputs, such that for $u$, $v \in \mathbb{R}$ representing different model outputs, we have
    \begin{align*}
        |J_i'(u) - J_i'(v) | \leq c_J | u - v | .
    \end{align*}
    For convenience, we assume $c_J \geq 1$.
\end{assumption}
For example, for mean-squared error, $J_i(u) = (u - y_i)^2$, and $J_i'(u) = 2(u - y_i)$. So mean-squared error satisfies this assumption with $c_J = 2$. 

By the chain rule, we have the loss gradient is as follows.
\begin{align*}
    \nabla \mathcal{L}(W) = &  \frac{1}{n} \sum_{i = 1}^n J_i'(F(x_i;W))\cdot \nabla_W F(x_i ; W),
\end{align*}
and we minimize $\mathcal{L}(W)$ using the gradient descent algorithm,
\begin{align}
\label{eq:gd}
    W_{t+1} = W_t - \eta_t \nabla \mathcal{L}(W_t).
\end{align}
The evolution of the block diagonal elements of $W$ under gradient descent matches that of the original weight matrices $w_1,...,w_L$, while the off diagonal elements exhibit zero change because they do not affect the value of $F$ or $\mathcal{L}$.

\section{Analyses}

We first aim to show that for any $W_1$, $W_2$, we can bound $\lVert \nabla \mathcal{L}(W_2) - \nabla \mathcal{L}(W_1) \rVert$ relative to $\lVert W_2 - W_1 \rVert$. We relate the change in loss function gradient to the change in model function value $F$ and change in model gradient $\nabla_W F$, as stated in Lemma \ref{lemma:LLsmooth}.

\begin{lemma}
\label{lemma:LLsmooth}
    For $W_1, W_2 \in \mathbb{R}^{D_2 \times D_1}$, we have
    \begin{align*}
        \lVert \nabla \mathcal{L}(W_2) - \nabla \mathcal{L}(W_1) \rVert \leq & \frac{2}{n} \sum_{i = 1}^n  |F(x_i ; W_2) - F(x_i ; W_1) |  \lVert\nabla_W F(x_i ; W_2) \rVert \\
    &+ \frac{2}{n} \sum_{i = 1}^n | F(x_i ; W_1) - y_i| \lVert \nabla_W F(x_i ; W_2) - \nabla_W F(x_i ; W_1) \rVert .
    \end{align*}
\end{lemma}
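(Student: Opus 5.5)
The plan is to work directly from the chain-rule expression for the gradient stated just before the lemma,
\[
\nabla \mathcal{L}(W) = \frac{1}{n}\sum_{i=1}^n J_i'(F(x_i;W))\,\nabla_W F(x_i;W),
\]
and to split the per-sample difference with an add-and-subtract trick. The constants $2$ and $|F(x_i;W_1)-y_i|$ in the statement are those of mean-squared error, so I will argue with $J_i'(u) = 2(u-y_i)$. For MSE this is exactly the derivative, and it gives both the Lipschitz constant $c_J=2$ for $J_i'$ and the pointwise value $|J_i'(u)| = 2|u-y_i|$. For a general loss under Assumption \ref{assump:lipJ}, the same argument goes through with $2$ replaced by $c_J$ in the first term and $2|F(x_i;W_1)-y_i|$ replaced by $|J_i'(F(x_i;W_1))|$ in the second.

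First I fix a sample $i$ and abbreviate $F_k = F(x_i;W_k)$ and $G_k = \nabla_W F(x_i;W_k)$ for $k=1,2$. I then write
\[
J_i'(F_2)G_2 - J_i'(F_1)G_1 = \bigl(J_i'(F_2)-J_i'(F_1)\bigr)G_2 + J_i'(F_1)\bigl(G_2-G_1\bigr).
\]
Anchoring the scalar difference at $W_2$ and the gradient difference at $W_1$ is deliberate: it makes $\lVert G_2\rVert$ appear in the first term and the residual at $W_1$ appear in the second, exactly as in the statement.

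Next I take Frobenius norms and apply the triangle inequality. Homogeneity of the norm under scalar multiplication gives
\[
\bigl\lVert J_i'(F_2)G_2 - J_i'(F_1)G_1 \bigr\rVert \le |J_i'(F_2)-J_i'(F_1)|\,\lVert G_2\rVert + |J_i'(F_1)|\,\lVert G_2-G_1\rVert .
\]
For the first term, the Lipschitz property of $J_i'$ (here $J_i'(F_2)-J_i'(F_1) = 2(F_2-F_1)$ exactly) gives $|J_i'(F_2)-J_i'(F_1)| \le 2|F_2-F_1|$. For the second term, $|J_i'(F_1)| = 2|F_1-y_i|$.

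Finally I average over $i$, applying the triangle inequality once more to the sum $\frac1n\sum_i$, which yields the claimed bound. There is no genuine obstacle here; the only point requiring care is choosing the correct anchoring in the add-and-subtract step. The reverse choice would produce $\lVert G_1\rVert$ and $|F_2-y_i|$ instead, which does not match the statement.
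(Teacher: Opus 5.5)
Your proposal is correct and matches the paper's proof: the same add-and-subtract of $J_i'(F(x_i;W_1))\nabla_W F(x_i;W_2)$, followed by the triangle inequality and the Lipschitz bound on $J_i'$. The only difference is that the paper stops at the general form with $c_J$ and $|J_i'(F(x_i;W_1))|$, which is the form later used in Lemma~\ref{lemma:Lipschitzsmooth}. Your explicit specialization to mean-squared error is what actually produces the constants $2$ and $|F(x_i;W_1)-y_i|$ written in the statement.
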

Now we explicitly compute $\nabla_W F(x_i ; W)$ for some input data $x_i$. We recursively define the intermediate variables $z_\ell(W) \in \mathbb{R}^{d_\ell}$ and $h_\ell(W) \in \mathbb{R}^{d_{\ell}}$ as follows
\begin{equation*}
    h_0(W) =  x_i,
\end{equation*}
and for $\ell = 1,..., L$,
\begin{equation}
\label{eq:definez}
    z_\ell(W) =  E_\ell[W] h_{\ell - 1}(W),
\end{equation}
\begin{equation}
    \label{eq:defineh}
     h_\ell(W) = \sigma(z_\ell(W)).
\end{equation}
Then we have $F(W) = h_L(W) = \sigma(z_L(W))$. We also define the variable $\delta_{\ell} \in \mathbb{R}^{d_\ell}$ as  
\begin{equation}
\label{eq:deltadefine}
    \delta_L(W) = \sigma'(z_L(W)), \qquad \delta_{\ell }(W) = \sigma'(z_{\ell }(W)) \odot (E_{\ell + 1}  [W]^T \delta_{\ell + 1}(W)).
\end{equation}
\begin{lemma}
\label{lemma:gradformula}
    For some input 
    data $x_i$, the gradient of $F$ with respect to $W$ is given as
    \begin{equation*}
        \nabla_W F(x_i ; W) = \sum_{\ell = 1}^{L}  E^*_{\ell}[\delta_{\ell} (W) h_{\ell - 1}(W)^T].
    \end{equation*}
\end{lemma}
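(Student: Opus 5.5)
The formula is standard backpropagation written in the block-matrix parametrization, so I would prove it by computing the directional derivative of $F$ in an arbitrary direction $V \in \mathbb{R}^{D_2 \times D_1}$. I would then identify the gradient through the Frobenius pairing $\langle \cdot,\cdot\rangle$. Since $E_\ell$ is linear, perturbing $W$ to $W + sV$ perturbs each layer weight to $E_\ell[W] + sE_\ell[V]$. By continuity of $\sigma'$ (Assumption \ref{assump:lipschitz}), every $z_\ell$ and $h_\ell$ is continuously differentiable in $W$. Writing $\dot z_\ell, \dot h_\ell$ for the derivatives at $s=0$, differentiating \eqref{eq:definez} and \eqref{eq:defineh} gives
\begin{align*}
\dot h_0 = 0, \qquad \dot z_\ell = E_\ell[V] h_{\ell-1} + E_\ell[W]\dot h_{\ell-1}, \qquad \dot h_\ell = \sigma'(z_\ell)\odot \dot z_\ell .
\end{align*}

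The key step is to show, by backward induction on $\ell$ from $L$ down to $1$, that
\begin{align*}
\dot F = \langle \delta_L, \dot z_L\rangle = \langle \delta_\ell, \dot z_\ell\rangle + \sum_{k=\ell+1}^{L} \langle \delta_k, E_k[V] h_{k-1}\rangle .
\end{align*}
The base case $\ell = L$ holds because $F = \sigma(z_L)$ is scalar, so $\dot F = \sigma'(z_L)\dot z_L = \langle \delta_L, \dot z_L\rangle$ with $d_L = 1$. For the inductive step, I would expand $\dot z_\ell$ using the recursion above:
\begin{align*}
\langle \delta_\ell, \dot z_\ell\rangle = \langle \delta_\ell, E_\ell[V]h_{\ell-1}\rangle + \langle E_\ell[W]^T\delta_\ell, \sigma'(z_{\ell-1})\odot \dot z_{\ell-1}\rangle .
\end{align*}
The identity $\langle a, b\odot c\rangle = \langle a\odot b, c\rangle$ turns the second term into $\langle \delta_{\ell-1}, \dot z_{\ell-1}\rangle$, by the definition \eqref{eq:deltadefine}. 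At $\ell = 1$ the remaining term $\langle \delta_1, E_1[W]\dot h_0\rangle$ vanishes, since $\dot h_0 = 0$. This leaves
\begin{align*}
\dot F = \sum_{\ell=1}^L \langle \delta_\ell, E_\ell[V] h_{\ell-1}\rangle .
\end{align*}

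Finally I convert each term. First, $\langle \delta_\ell, E_\ell[V]h_{\ell-1}\rangle = \operatorname{tr}(\delta_\ell^T E_\ell[V] h_{\ell-1}) = \langle \delta_\ell h_{\ell-1}^T, E_\ell[V]\rangle$. Next I use the adjoint identity $\langle M, E_\ell[V]\rangle = \langle E^*_\ell[M], V\rangle$. This follows from $\operatorname{tr}(M^T e_{\ell1} V e_{\ell2}) = \operatorname{tr}((e_{\ell1}^T M e_{\ell2}^T)^T V)$, by cyclicity of the trace together with \eqref{eq:defineE} and \eqref{eq:defineEstar}. Hence
\begin{align*}
\dot F = \Big\langle \sum_{\ell} E^*_\ell[\delta_\ell h_{\ell-1}^T], V\Big\rangle .
\end{align*}
Since $V$ is arbitrary and $F$ is $C^1$, this identifies $\nabla_W F$ as claimed.

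The main obstacle is only bookkeeping: getting the inductive invariant right so that the $\delta$ recursion absorbs the $\dot h_{\ell-1}$ terms. Everything else is routine linear algebra.
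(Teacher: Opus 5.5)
Your proof is correct and takes essentially the same route as the paper. Both differentiate the forward recursion and telescope $\langle \delta_\ell, \dot z_\ell\rangle$ backward, using the transpose and Hadamard inner-product identities so that the $\delta$ recursion absorbs the $E_\ell[W]\dot h_{\ell-1}$ terms, and then identify the gradient through the adjoint $E^*_\ell$. Where the paper leaves things implicit, you state the inductive invariant, the $C^1$ regularity, and the adjoint identity $\langle M, E_\ell[V]\rangle = \langle E^*_\ell[M], V\rangle$ explicitly.
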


We can decompose the change in $\nabla_W F(x_i ; W)$ in terms of changes in $\delta_{\ell}$ and $h_{\ell}$, which can be defined recursively in terms of other layers. Because of the Lipschitz properties of $\sigma$ from Assumption \ref{assump:lipschitz}, these terms can also be bounded by $\lVert W_2 - W_1 \rVert$. Collecting these terms together allows us to establish the \textit{double polynomial smoothness} and continuity of $F$, established in the following lemmas.

\begin{lemma}
\label{lemma:lip_f}
    For all $W_1$, $W_2 \in \mathbb{R}^{D_2 \times D_1}$, and $x_i$, we have
    \begin{align*}
        \lVert F(x_i ; W_2 ) - F(x_i ; W_1) \rVert \leq d_{max}^{1/2} \lVert W_2 - W_1 \rVert \sum_{k = 0}^{L - 1} \sum_{j = 0}^{L - k - 1}  \lVert W_1 \rVert^j \lVert W_2 \rVert^{k}.
    \end{align*}
\end{lemma}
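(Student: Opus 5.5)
We will prove Lemma \ref{lemma:lip_f} by induction on the layer index, bounding the layer-wise quantities $\lVert h_\ell(W_2) - h_\ell(W_1)\rVert$ in terms of $\lVert W_2 - W_1\rVert$, together with an auxiliary bound on $\lVert h_\ell(W)\rVert$ itself.

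\textbf{Auxiliary bound on the activations.} We first show that for every $W$ and every $\ell=0,\dots,L$,
\[
\lVert h_\ell(W)\rVert \le d_{max}^{1/2}\sum_{j=0}^{\ell}\lVert W\rVert^j .
\]
For $\ell=0$ this is Assumption \ref{assump:boundeddata} with $c_x\le 1$, since $d\le d_{max}$. For the inductive step, apply \eqref{eq:l0} coordinatewise to $h_\ell=\sigma(z_\ell)$. This gives $\lVert h_\ell\rVert \le c_0 d_\ell^{1/2} + c_1\lVert z_\ell\rVert$, because the constant vector contributes at most $\sqrt{d_\ell}$ in the Euclidean norm. Then use $\lVert z_\ell\rVert\le \lVert E_\ell[W]\rVert\,\lVert h_{\ell-1}\rVert \le \lVert W\rVert\,\lVert h_{\ell-1}\rVert$, which holds because $E_\ell$ extracts a block, so its Frobenius norm is at most $\lVert W\rVert$. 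With $c_0,c_1\le 1$ from Assumption \ref{assump:bounded1}, we get
\[
\lVert h_\ell\rVert \le d_{max}^{1/2} + \lVert W\rVert\, d_{max}^{1/2}\sum_{j=0}^{\ell-1}\lVert W\rVert^j ,
\]
which is the claim.

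\textbf{Difference recursion.} Write $h_\ell^{(a)} = h_\ell(W_a)$ for $a=1,2$. By \eqref{eq:l2} applied coordinatewise with $c_2\le1$,
\[
\lVert h_\ell^{(2)}-h_\ell^{(1)}\rVert \le \lVert E_\ell[W_2]h_{\ell-1}^{(2)} - E_\ell[W_1]h_{\ell-1}^{(1)}\rVert .
\]
We add and subtract $E_\ell[W_2]h_{\ell-1}^{(1)}$; this ordering is chosen so that the $W_2$-norm multiplies the propagated difference and the $W_1$-norm appears through the activation bound. This yields
\[
\Delta_\ell \le \lVert W_2\rVert\,\Delta_{\ell-1} + \lVert W_2-W_1\rVert\,\lVert h_{\ell-1}^{(1)}\rVert ,
\]
where $\Delta_\ell = \lVert h_\ell^{(2)}-h_\ell^{(1)}\rVert$ and $\Delta_0=0$. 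Substituting the auxiliary bound for $\lVert h^{(1)}_{\ell-1}\rVert$ and unrolling the recursion gives
\[
\Delta_L \le d_{max}^{1/2}\lVert W_2-W_1\rVert\sum_{k=0}^{L-1}\lVert W_2\rVert^{k}\sum_{j=0}^{L-1-k}\lVert W_1\rVert^j .
\]
Here the term with power $\lVert W_2\rVert^k$ comes from the layer $\ell=L-k$, whose activation bound contributes $\sum_{j=0}^{L-k-1}\lVert W_1\rVert^j$. This is exactly the stated double sum, since $F(x_i;W)=h_L(W)$.

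\textbf{Expected obstacle.} The main delicate step is the index bookkeeping in unrolling the recursion, so that the ranges match the statement: $k$ from $0$ to $L-1$ and $j$ up to $L-k-1$. A secondary point is the norm inequality $\lVert Ah\rVert\le\lVert A\rVert_F\lVert h\rVert$ combined with $\lVert E_\ell[W]\rVert\le\lVert W\rVert$. Both are routine but must be stated explicitly.
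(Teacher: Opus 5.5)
Your proof is correct and follows essentially the same route as the paper. The paper unrolls the identical add-and-subtract recursion on $\lVert z_\ell(W_2)-z_\ell(W_1)\rVert$ in Lemma \ref{lemma:zchange}, with $\lVert W_2\rVert$ multiplying the propagated difference and the activation bound taken at $W_1$, and applies the Lipschitz property of $\sigma$ once at the end; you recurse on $h_\ell$ directly, which is an equivalent bookkeeping choice whose unrolling (via $k=L-m$) yields exactly the stated double sum.
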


\begin{lemma}
\label{lemma:lip_nablaf}
For all $W_1$, $W_2 \in \mathbb{R}^{D_2 \times D_1}$, and $x_i$, we have
\begin{align*}
    \lVert \nabla_W F(x_i ; W_2) - \nabla_W F(x_i ; W_1) \rVert \leq 4 d_{max} L^3 \lVert W_2 - W_1 \rVert \sum_{j = 0}^{2L - 2}  \lVert W_1 \rVert^j  \sum_{k = 0}^{2L - j- 2} \lVert W_2 \rVert^{k}.
\end{align*}

\end{lemma}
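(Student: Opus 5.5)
We use the formula of Lemma \ref{lemma:gradformula} and split layer by layer. Since the $E^*_\ell$ place blocks in disjoint positions and $\lVert E^*_\ell[M]\rVert = \lVert M\rVert$, we get
\begin{align*}
\lVert \nabla_W F(x_i;W_2) - \nabla_W F(x_i;W_1)\rVert \le \sum_{\ell=1}^L \Big( \lVert \delta_\ell(W_2) - \delta_\ell(W_1)\rVert\,\lVert h_{\ell-1}(W_2)\rVert + \lVert \delta_\ell(W_1)\rVert\,\lVert h_{\ell-1}(W_2)-h_{\ell-1}(W_1)\rVert\Big).
\end{align*}
Each of the four factors will then be bounded by polynomials in $a=\lVert W_1\rVert$ and $b=\lVert W_2\rVert$. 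We use $\lVert E_\ell[W]\rVert\le\lVert W\rVert$ and $\lVert E_\ell[W_2]-E_\ell[W_1]\rVert\le\lVert W_2-W_1\rVert$ throughout.

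\textbf{Forward quantities.} From \eqref{eq:l0}, Assumptions \ref{assump:boundeddata} and \ref{assump:bounded1}, and $\lVert\sigma(z)\rVert\le \lVert c_0\mathbf 1\rVert + c_1\lVert z\rVert$, we get $\lVert h_\ell(W)\rVert \le d_{max}^{1/2}(1+\lVert W\rVert+\dots+\lVert W\rVert^\ell)$ by induction. Using \eqref{eq:l2}, write $z_\ell(W_2)-z_\ell(W_1)=(E_\ell[W_2]-E_\ell[W_1])h_{\ell-1}(W_2)+E_\ell[W_1](h_{\ell-1}(W_2)-h_{\ell-1}(W_1))$. Unrolling gives $\lVert h_\ell(W_2)-h_\ell(W_1)\rVert \le d_{max}^{1/2}\lVert W_2-W_1\rVert\sum_{k+j\le \ell-1} a^j b^k$. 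This is the same recursion that proves Lemma \ref{lemma:lip_f}, so we can reuse it at every intermediate layer.

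\textbf{Backward quantities.} Since $|\sigma'|\le c_2\le1$, we have $\lVert\delta_\ell(W)\rVert\le \lVert W\rVert^{L-\ell}$. Here the Hadamard product with a vector bounded entrywise by 1 does not increase the norm, and $\lVert\delta_L\rVert\le 1$ because $d_L=1$. For the difference,
\begin{align*}
\delta_\ell(W_2)-\delta_\ell(W_1) ={}& (\sigma'(z_\ell(W_2))-\sigma'(z_\ell(W_1)))\odot E_{\ell+1}[W_2]^T\delta_{\ell+1}(W_2)\\
&+\sigma'(z_\ell(W_1))\odot\big((E_{\ell+1}[W_2]-E_{\ell+1}[W_1])^T\delta_{\ell+1}(W_2)+E_{\ell+1}[W_1]^T(\delta_{\ell+1}(W_2)-\delta_{\ell+1}(W_1))\big).
\end{align*}

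\textbf{Main obstacle.} The first term is a Hadamard product of two vectors. We bound it by $\lVert u\odot v\rVert\le\lVert u\rVert_\infty\lVert v\rVert\le \lVert u\rVert\lVert v\rVert$, and then use \eqref{eq:l3} to bound $\lVert u\rVert \le \lVert z_\ell(W_2)-z_\ell(W_1)\rVert$, which the forward bound controls. This step produces the highest degree. The $z$-difference carries degree up to $\ell-1$ times $\lVert W_2-W_1\rVert$, and the factor $b^{L-\ell}$ from $\delta_{\ell+1}$ and $E_{\ell+1}$ multiplies it. Solving the backward recursion (downward in $\ell$) gives
\begin{align*}
\lVert\delta_\ell(W_2)-\delta_\ell(W_1)\rVert\le \ell' d_{max}^{1/2} L\,\lVert W_2-W_1\rVert \sum_{j+k\le L-2+(L-\ell)} a^jb^k
\end{align*}
for some modest count factor. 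The remaining care is bookkeeping. The product $\lVert\delta_\ell\text{-diff}\rVert\cdot\lVert h_{\ell-1}(W_2)\rVert$ has total degree at most $(2L-\ell-2)+(\ell-1)\le 2L-3\le 2L-2$. The second product also has degree at most $2L-2$. Every monomial $a^jb^k$ with $j+k\le 2L-2$ appears in $\sum_{j=0}^{2L-2}a^j\sum_{k=0}^{2L-j-2}b^k$, so we only need to count multiplicities. There are at most $L$ layers, at most $L$ unrolling steps per recursion, and each monomial arises from at most $L$ splittings, which gives the $L^3$. The factors $d_{max}^{1/2}\cdot d_{max}^{1/2}$ give $d_{max}$, and the absolute 4 absorbs the constant number of terms (two in the gradient split, two or three in each recursion). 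To keep this clean, we will state and prove the forward and backward bounds as separate inductive claims, each with a coefficient $\ell$ or $L-\ell+1$. We then multiply these out and crudely bound all coefficients by $L$.
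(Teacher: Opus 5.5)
Your proposal is correct and is essentially the paper's proof. Both arguments use the same product-rule split of the gradient formula from Lemma~\ref{lemma:gradformula}, the forward recursion for the $z_\ell, h_\ell$ differences, the backward recursion for the $\delta_\ell$ differences via \eqref{eq:l3} and $\lVert \delta_\ell(W)\rVert \le \lVert W\rVert^{L-\ell}$, and a monomial count into $\sum_{j+k\le 2L-2}\lVert W_1\rVert^j\lVert W_2\rVert^k$; you attach $W_1$ and $W_2$ to the opposite factors in each split, which is immaterial because this target is symmetric in the two norms.

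One slip needs fixing. The $k=L$ term $\lVert W_1\rVert^{L-\ell}\lVert z_L(W_2)-z_L(W_1)\rVert$ of your unrolled $\delta$-recursion has degree $2L-\ell-1$, not $2L-\ell-2$. So $\lVert\delta_\ell(W_2)-\delta_\ell(W_1)\rVert\,\lVert h_{\ell-1}(W_2)\rVert$ reaches degree exactly $2L-2$, not $2L-3$. That is still inside the target. The multiplicities are at most $(L-\ell+2)\ell$ per layer from this product, plus at most $L$ overall from the other product, so the constant stays well below $4L^3$.
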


By plugging Lemmas \ref{lemma:lip_f} and \ref{lemma:lip_nablaf} into Lemma \ref{lemma:LLsmooth}, we have the following bound establishing the double polynomial smoothness of $\mathcal{L}$.

\begin{lemma}
\label{lemma:Lipschitzsmooth}
    For all $W_1, W_2 \in \mathbb{R}^{D_2 \times D_1}$, we have
    \begin{equation*}
        \lVert \nabla \mathcal{L}(W_2) - \nabla \mathcal{L}(W_1) \rVert \leq 8 c_J d_{max} L^3  \lVert W_2 - W_1 \rVert (1 + \mathcal{L}(W_1)^{1/2})   \sum_{j = 0}^{2L - 2}  \lVert W_1 \rVert^j  \sum_{i = 0}^{2L - j- 2} \lVert W_2 \rVert^{i}.
    \end{equation*}

\end{lemma}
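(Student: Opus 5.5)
The plan is to substitute the model-level bounds into Lemma~\ref{lemma:LLsmooth} and then collapse everything into the single double sum in the statement. Lemma~\ref{lemma:LLsmooth} splits the loss-gradient difference into two sums. The first is a ``function-change times gradient-size'' term, $|F(x_i;W_2)-F(x_i;W_1)|\,\lVert\nabla_W F(x_i;W_2)\rVert$. The second is a ``residual times gradient-change'' term, $|F(x_i;W_1)-y_i|\,\lVert \nabla_W F(x_i;W_2)-\nabla_W F(x_i;W_1)\rVert$. Lemma~\ref{lemma:lip_nablaf} handles the gradient change directly, and Lemma~\ref{lemma:lip_f} handles the function change. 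What remains is (i) a bound on $\lVert \nabla_W F(x_i;W_2)\rVert$ and (ii) a bound on the averaged residual in terms of $\mathcal{L}(W_1)$. The constant $c_J\ge 1$ enters through the factor $2$ in Lemma~\ref{lemma:LLsmooth}, with $2\le 2c_J$.

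For (i), I will use Lemma~\ref{lemma:gradformula}, which gives $\lVert\nabla_W F\rVert\le \sum_{\ell}\lVert\delta_\ell\rVert\,\lVert h_{\ell-1}\rVert$, since $E_\ell^*$ is an isometric embedding. From \eqref{eq:deltadefine} with $|\sigma'|\le c_2\le 1$, induction downward in $\ell$ gives $\lVert\delta_\ell\rVert\le d_{max}^{1/2}\lVert W\rVert^{L-\ell}$, using $\lVert E_\ell[W]\rVert\le\lVert W\rVert$. From \eqref{eq:l0}, Assumption~\ref{assump:boundeddata} and Assumption~\ref{assump:bounded1}, induction upward gives $\lVert h_\ell\rVert\le d_{max}^{1/2}\sum_{k=0}^{\ell}\lVert W\rVert^k$. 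I will try to arrange the constants so that
\[
\lVert \nabla_W F(x_i;W_2)\rVert\le d_{max}^{1/2}L\sum_{m=0}^{L-1}\lVert W_2\rVert^m
\]
up to a small constant. If a stray $d_{max}$ factor survives, it is absorbed because the target has a full $d_{max}$. For (ii), under the paper's MSE-type reading, Cauchy--Schwarz gives
\[
\frac1n\sum_i|F(x_i;W_1)-y_i|\le\Big(\frac1n\sum_i (F(x_i;W_1)-y_i)^2\Big)^{1/2}=\mathcal{L}(W_1)^{1/2}.
\]
Since Lemma~\ref{lemma:lip_nablaf} is uniform in $i$, the second sum is then at most $2\cdot 4d_{max}L^3\lVert W_2-W_1\rVert\,\mathcal{L}(W_1)^{1/2}\,\Sigma$, where $\Sigma$ is the target double sum.

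The first sum is bounded by the product of Lemma~\ref{lemma:lip_f} and the gradient bound. This produces terms $\lVert W_1\rVert^j\lVert W_2\rVert^{k+m}$ with $j+k\le L-1$ and $m\le L-1$, so the total degree $j+(k+m)$ is at most $2L-2$, and each term appears in $\Sigma$. The main obstacle is the bookkeeping of multiplicities. For fixed exponents $(j,p)$ with $p=k+m$, the number of pairs $(k,m)$ is at most $L$. Combined with the factor $L$ from the gradient bound, the coefficient is at most $d_{max}L^2\le d_{max}L^3$. Hence the first sum is at most $2d_{max}L^3\lVert W_2-W_1\rVert\,\Sigma$.

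Adding the two pieces, bounding $2\le 8c_J$ and $8\le 8c_J$, and factoring out $(1+\mathcal{L}(W_1)^{1/2})$ gives the stated inequality. If the first-term constants come out slightly larger than expected, there is still slack, since only $2$ of the $8$ has been used for that piece.
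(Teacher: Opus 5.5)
Your architecture is the paper's: feed Lemma~\ref{lemma:lip_f}, a bound on $\lVert\nabla_W F\rVert$, and Lemma~\ref{lemma:lip_nablaf} into Lemma~\ref{lemma:LLsmooth}. Your multiplicity count for the first term is also essentially the paper's computation: at most $L$ pairs $(k,m)$ per exponent pair, times the factor $L$ from the gradient bound, with all exponents landing in the target double sum $\Sigma$.

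\textbf{The gap is step (ii).} The inequality $\frac1n\sum_i|F(x_i;W_1)-y_i|\le\mathcal{L}(W_1)^{1/2}$ holds only for $J_i(u)=(u-y_i)^2$. The lemma is meant for any nonnegative $c_J$-smooth $J_i$ as in Assumption~\ref{assump:lipJ}, which is why $c_J$ appears in the conclusion. For a general such loss, $\mathcal{L}(W_1)$ says nothing about the residuals $|F(x_i;W_1)-y_i|$.

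The displayed form of Lemma~\ref{lemma:LLsmooth}, with the factor $2$ and the residual, is itself the MSE case ($c_J=2$, $J_i'(u)=2(u-y_i)$), so it invited your reading. What its proof gives in general is
\[
\frac{c_J}{n}\sum_{i=1}^n|F(x_i;W_2)-F(x_i;W_1)|\,\lVert\nabla_WF(x_i;W_2)\rVert+\frac1n\sum_{i=1}^n|J_i'(F(x_i;W_1))|\,\lVert\nabla_WF(x_i;W_2)-\nabla_WF(x_i;W_1)\rVert .
\]

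The missing idea is the self-bounding property of a nonnegative smooth function. Evaluating the quadratic upper bound at $u-J_i'(u)/c_J$ gives $0\le J_i\big(u-J_i'(u)/c_J\big)\le J_i(u)-J_i'(u)^2/(2c_J)$, hence $|J_i'(u)|^2\le 2c_JJ_i(u)$. Concavity of the square root then gives $\frac1n\sum_i|J_i'(F(x_i;W_1))|\le(2c_J\mathcal{L}(W_1))^{1/2}$. Combined with Lemma~\ref{lemma:lip_nablaf}, and using $c_J\ge1$, the second term is at most
\[
4\sqrt{2c_J}\,d_{max}L^3\,\mathcal{L}(W_1)^{1/2}\lVert W_2-W_1\rVert\,\Sigma\le 8c_Jd_{max}L^3\,\mathcal{L}(W_1)^{1/2}\lVert W_2-W_1\rVert\,\Sigma .
\]
This, rather than ``$2\le 2c_J$'', is where $c_J$ actually enters; the first term picks up $c_J$ directly from the Lipschitz bound on $J_i'$.

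\textbf{A smaller slip in step (i).} Your fallback that a stray dimension factor ``is absorbed because the target has a full $d_{max}$'' is wrong. The first term already carries $d_{max}^{1/2}$ from Lemma~\ref{lemma:lip_f} times $d_{max}^{1/2}$ from the gradient bound. Your looser bound $\lVert\delta_\ell\rVert\le d_{max}^{1/2}\lVert W\rVert^{L-\ell}$ would therefore produce $d_{max}^{3/2}$, and the remaining slack in that term depends only on $L$ and $c_J$, so it cannot absorb this.

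The extra factor is also avoidable. Since $d_L=1$, $\lVert\delta_L\rVert=|\sigma'(z_L)|\le1$, so $\lVert\delta_\ell\rVert\le\lVert W\rVert^{L-\ell}$ (Lemma~\ref{lemma:deltabound}). This gives $\lVert\nabla_WF\rVert\le d_{max}^{1/2}L\sum_{m=0}^{L-1}\lVert W\rVert^m$ with nothing left over.
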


We note that Lemma \ref{lemma:Lipschitzsmooth} also includes a factor of $1 + \mathcal{L}(W_1)^{1/2}$ in addition to the polynomials of the parameter norms $\lVert W_1 \rVert$, $\lVert W_2 \rVert$. Because $\mathcal{L}(W_1)$ can also be upper bounded in terms of $\lVert W_1 \rVert$, we also describe this as double polynomial smoothness. However, leaving it in this form helps to streamline the convergence analysis by allowing us to leverage descent in $\mathcal{L}$ over time.

Lemma \ref{lemma:Lipschitzsmooth} directly leads to the following descent lemma. In particular, after integrating over the segment between $W_1$ and $W_2$, the polynomial terms in $\lVert W_2 \rVert$ translate to higher order terms of $\lVert W_2 - W_1 \rVert$. This departs from many other generalized Lipschitz smoothness conditions, which  usually only contain a quadratic term of $\lVert W_2 - W_1 \rVert$ in the descent lemma.

\begin{lemma}
\label{lemma:descent} For all $W_1$ and $W_2 \in \mathbb{R}^{D_2 \times D_1}$, we have
\begin{align*}
\mathcal{L}(W_2) - \mathcal{L}(W_1) \leq &\langle \nabla \mathcal{L}(W_1), W_2 - W_1 \rangle \\
    &+ C \Big[ (1 + \mathcal{L}(W_1)^{1/2})  
    \sum_{i = 0}^{2L - 2 }  \lVert W_1 \rVert^{i } \Big]  \sum_{k = 0}^{2L - 2} 
    \frac{1}{k + 2} \lVert  W_2 - W_1 \rVert^{k+2},
\end{align*}
where $C =  2^{2L + 2} c_J d_{max} L^4$.  
\end{lemma}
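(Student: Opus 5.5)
We will prove the descent inequality by the fundamental theorem of calculus along the segment from $W_1$ to $W_2$, combined with the double polynomial smoothness of Lemma \ref{lemma:Lipschitzsmooth}. Set $\Delta = W_2 - W_1$ and $W(s) = W_1 + s\Delta$ for $s \in [0,1]$. Since $\sigma$ is continuously differentiable and $J_i$ is differentiable with continuous derivative (Assumptions \ref{assump:lipschitz} and \ref{assump:lipJ}), $\mathcal{L}$ is $C^1$. Hence
\begin{align*}
\mathcal{L}(W_2) - \mathcal{L}(W_1) - \langle \nabla \mathcal{L}(W_1), \Delta\rangle = \int_0^1 \langle \nabla \mathcal{L}(W(s)) - \nabla\mathcal{L}(W_1), \Delta\rangle \, ds .
\end{align*}
By Cauchy--Schwarz, the right-hand side is at most $\int_0^1 \lVert \nabla \mathcal{L}(W(s)) - \nabla\mathcal{L}(W_1)\rVert \, \lVert \Delta\rVert \, ds$.

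Next we apply Lemma \ref{lemma:Lipschitzsmooth} with the pair $(W_1, W(s))$, keeping $W_1$ as the first argument. This matters because the loss factor $1 + \mathcal{L}(W_1)^{1/2}$ must be evaluated at $W_1$, not at the intermediate point. Using $\lVert W(s) - W_1\rVert = s\lVert\Delta\rVert$, the integrand is bounded by
\[
8 c_J d_{max} L^3 \, s \lVert\Delta\rVert^2 \,(1+\mathcal{L}(W_1)^{1/2}) \sum_{j=0}^{2L-2} \lVert W_1\rVert^j \sum_{i=0}^{2L-j-2} \lVert W(s)\rVert^i .
\]
We then enlarge the inner sum to run over $i = 0,\dots,2L-2$ for every $j$. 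This decouples the two sums into the product $\big(\sum_j \lVert W_1\rVert^j\big)\big(\sum_{i=0}^{2L-2}\lVert W(s)\rVert^i\big)$.

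The main step is to re-expand the powers of $\lVert W(s)\rVert$ around $W_1$. We use $\lVert W(s)\rVert^i \le (\lVert W_1\rVert + s\lVert \Delta\rVert)^i = \sum_{m=0}^i \binom{i}{m} \lVert W_1\rVert^{i-m} s^m \lVert\Delta\rVert^m$. Each resulting term is a power of $\lVert W_1\rVert$ times $s^m\lVert\Delta\rVert^m$. Combined with the outer factor $\sum_j \lVert W_1\rVert^j$, this produces powers $\lVert W_1\rVert^{j+i-m}$ that can exceed $2L-2$.

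We expect this bookkeeping to be the main obstacle, since the target bound contains only $\sum_{i\le 2L-2}\lVert W_1\rVert^i$ multiplied by $\sum_k \lVert\Delta\rVert^{k+2}/(k+2)$. The first approach is the crude bound $\binom{i}{m} \le 2^{2L-2}$, followed by a product bound on the count of terms, aiming for a factor of order $2^{2L-1}L$ that is absorbed into $C = 2^{2L+2}c_J d_{max}L^4$. Powers of $\lVert W_1\rVert$ above $2L-2$ cannot, however, be dominated by lower powers when $\lVert W_1\rVert > 1$. So if the direct product overshoots, the fallback is to work with the structure of Lemma \ref{lemma:Lipschitzsmooth} before enlarging the inner sum. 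Because the constraint $j + i \le 2L-2$ holds there, the expansion yields $\lVert W_1\rVert^{j+i-m} \le \max(1,\lVert W_1\rVert)^{2L-2-m}$ times $\lVert\Delta\rVert^m$ with $m \le 2L-2$. This keeps every power of $\lVert W_1\rVert$ within $\{0,\dots,2L-2\}$, bounded by the sum $\sum_{i=0}^{2L-2}\lVert W_1\rVert^i$. The number of index triples $(j,i,m)$ is at most $(2L-1)^3 \le 8L^3$, and each binomial is at most $2^{2L-2}$, which gives the constant.

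Finally we integrate in $s$. Each term has the form $s \cdot s^m \lVert\Delta\rVert^{m+2}$, and $\int_0^1 s^{m+1}\,ds = \frac{1}{m+2}$. This produces exactly $\sum_{k=0}^{2L-2}\frac{1}{k+2}\lVert\Delta\rVert^{k+2}$ with $k = m$. Collecting the factor $8 c_J d_{max} L^3$, the binomial factor $2^{2L-2}$ and the index counting into a single constant bounded by $2^{2L+2} c_J d_{max} L^4$ completes the proof.
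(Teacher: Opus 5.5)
Your route is the paper's: the fundamental theorem of calculus along the segment, Cauchy--Schwarz, Lemma~\ref{lemma:Lipschitzsmooth} with $W_1$ as first argument, keeping the constraint $j+i\le 2L-2$, expanding $\lVert W_1+s\Delta\rVert^i$ binomially, and using $\int_0^1 s^{m+1}\,ds=\frac{1}{m+2}$. Drop the decoupled first route outright rather than keeping it as something that might work. Its $m=0$ terms give $\lVert W_1\rVert^{j+i}$ with $j+i$ up to $4L-4$, multiplied only by $s\lVert\Delta\rVert^2$. These can never be dominated by the target's $\sum_{p\le 2L-2}\lVert W_1\rVert^p$.

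The genuine gap is the constant, which is part of the statement and is used in Lemma~\ref{lemma:onestep} through $\rho=2d_{max}^{1/2}LC$. After the factor $8c_Jd_{max}L^3$ from Lemma~\ref{lemma:Lipschitzsmooth}, the remaining budget is $C/(8c_Jd_{max}L^3)=2^{2L-1}L$. You bound each monomial $\lVert W_1\rVert^{j+i-m}$ separately by $\sum_{p=0}^{2L-2}\lVert W_1\rVert^p$ and multiply by $2^{2L-2}$ and by $(2L-1)^3\le 8L^3$ triples. That gives $2^{2L+4}c_Jd_{max}L^6$, a factor $4L^2$ too large.

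Even with correct counting the bound fails. The index $m$ is not a multiplicity, since it becomes the outer index $k$, so only the $L(2L-1)$ pairs $(j,i)$ count. That still gives $2^{2L+1}(2L-1)c_Jd_{max}L^4>C$ for every $L\ge 2$. So "which gives the constant" is false as written.

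The missing observation is that for fixed $j$ and $m$, the exponents $j+i-m$ for $i=m,\dots,2L-2-j$ are distinct elements of $\{0,\dots,2L-2\}$. Hence
\[
\sum_{i=m}^{2L-2-j}\binom{i}{m}\lVert W_1\rVert^{j+i-m}\le 2^{2L-2}\sum_{p=0}^{2L-2}\lVert W_1\rVert^{p},
\]
with no multiplicity coming from $i$. Only the $2L-1$ values of $j$ contribute, giving $8\cdot 2^{2L-2}(2L-1)L^3\le 2^{2L+2}L^4$. This is exactly how the paper gets $C$: it rewrites $\sum_{i=k}^{N}\lVert W_1\rVert^{i-k}$ as $\sum_{i=0}^{N-k}\lVert W_1\rVert^{i}$, then shifts to $\sum_{i=j}^{2L-2-k}\lVert W_1\rVert^{i}$.

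Alternatively, keep your per-monomial bound but sum the binomials exactly rather than bounding each by $2^{2L-2}$. The hockey-stick identity gives $\sum_{j}\sum_{i}\binom{i}{m}=\binom{2L}{m+2}\le 2^{2L}$. That yields the constant $2^{2L+3}c_Jd_{max}L^3$, which is at most $C$ because $L\ge 2$.
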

Applying this lemma to gradient descent (\ref{eq:gd}) yields descent in the loss function value, as long as $\eta_t$ is chosen appropriately. The equation for $\eta_t$ only requires access to the current parameter norm $\lVert W_t \rVert$ and loss value $\mathcal{L}(W_t)$ and is therefore easily computable without  any subroutines or implicit solutions.

\begin{lemma}
\label{lemma:onestep}
    For one step of gradient descent $W_{t+1} = W_t - \eta_t \nabla \mathcal{L}(W_t)$, if the learning rate $\eta_t$ satisfies

\begin{equation}
\label{eq:lr}
    \eta_t = \frac{1}{\rho(1 +  \mathcal{L}(W_t)^{1/2})  
    \sum_{i = 0}^{2L - 2 }  \lVert W_t \rVert^{i } }
\end{equation}
where $\rho = 2^{2L + 3} c_J d_{max}^{3/2} L^5 $,
then the function value descends in one step,
\begin{equation*}
    \mathcal{L}(W_{t+1}) - \mathcal{L}(W_t) \leq - \frac{\eta_t}{2L} \lVert \nabla \mathcal{L}(W_t) \rVert^2,
\end{equation*}
and for $T$ steps we obtain the bound
\begin{equation}
\label{eq:cumulativenorms}
    \sum_{t = 0}^{T - 1} \eta_t \lVert \nabla \mathcal{L}(W_t) \rVert^2 \leq 2L \mathcal{L}(W_0).
\end{equation}

\end{lemma}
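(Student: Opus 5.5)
We apply Lemma \ref{lemma:descent} with $W_1 = W_t$ and $W_2 = W_{t+1} = W_t - \eta_t \nabla \mathcal{L}(W_t)$. The inner-product term becomes $-\eta_t \lVert \nabla\mathcal{L}(W_t)\rVert^2$. Write $g_t = \lVert \nabla \mathcal{L}(W_t)\rVert$ and $P_t = (1+\mathcal{L}(W_t)^{1/2})\sum_{i=0}^{2L-2}\lVert W_t\rVert^i$, so that $\eta_t = 1/(\rho P_t)$. The remaining term is then $C P_t \sum_{k=0}^{2L-2}\frac{1}{k+2}\eta_t^{k+2} g_t^{k+2}$. The goal is to show this is at most $(1-\frac{1}{2L})\eta_t g_t^2$. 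Since $1-\frac{1}{2L}\ge \frac12$, it suffices to show it is at most $\frac12\eta_t g_t^2$.

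The key step is to control $\eta_t g_t$, i.e.\ the step length, which requires an a priori bound on the gradient norm in terms of $\lVert W_t\rVert$ and $\mathcal{L}(W_t)$. The plan is to take $W_1 = W_2 = W_t$ in the first-order pieces, or more simply to use the chain-rule formula $\nabla\mathcal{L} = \frac1n\sum_i J_i'(F)\nabla_W F$ together with a gradient norm bound from Lemma \ref{lemma:gradformula}. Each $\lVert h_\ell\rVert$ is bounded via Assumptions \ref{assump:lipschitz}--\ref{assump:bounded1} by something like $d_{max}^{1/2}\sum_{j\le \ell}\lVert W\rVert^j$, and each $\lVert\delta_\ell\rVert$ by $\lVert W\rVert^{L-\ell}$. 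This gives $\lVert\nabla_W F\rVert \le L d_{max}^{1/2}\sum_{j=0}^{L-1}\lVert W\rVert^j$. For the loss factor, $|J_i'(F)|$ should be controlled by $c_J$ times something like $(1+\mathcal{L}^{1/2})$, in the way Lemma \ref{lemma:Lipschitzsmooth} produced that factor. For MSE-type losses this follows since $|J_i'(u)|\le |J_i'(y^*)|+c_J|u-y^*|$; I would mirror whatever bound was used in the proof of Lemma \ref{lemma:Lipschitzsmooth}. The upshot is $g_t \le c_J L d_{max}^{1/2}P_t$, using $\sum_{j\le L-1}\lVert W\rVert^j\le \sum_{j\le 2L-2}\lVert W\rVert^j$. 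Hence
\[
\eta_t g_t \le \frac{c_J L d_{max}^{1/2}}{\rho} \le \frac{1}{2^{2L+3}L^4 d_{max}} \le 1 .
\]
This is where the extra $d_{max}^{1/2}L$ in $\rho$ relative to $C$ is spent.

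With $\eta_t g_t\le 1$, each term satisfies $\eta_t^{k+2}g_t^{k+2}\le \eta_t^2 g_t^2$. The sum of the $2L-1$ terms, each weighted by at most $\frac12$, is then at most $L\,\eta_t^2 g_t^2$. The extra term is therefore at most $C P_t L \eta_t^2 g_t^2 = \frac{CL}{\rho}\eta_t g_t^2$. Now $\frac{CL}{\rho} = \frac{2^{2L+2}c_J d_{max}L^5}{2^{2L+3}c_J d_{max}^{3/2}L^5}\le \frac12$. Thus $\mathcal{L}(W_{t+1})-\mathcal{L}(W_t)\le -\frac12\eta_t g_t^2 \le -\frac{\eta_t}{2L}g_t^2$. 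The constants are loose enough that the small amount of slack in this step is harmless.

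Finally, we sum the one-step inequality over $t=0,\dots,T-1$. The left side telescopes to $\mathcal{L}(W_T)-\mathcal{L}(W_0)\ge -\mathcal{L}(W_0)$, because $J_i\ge0$ implies $\mathcal{L}\ge 0$. Multiplying by $2L$ gives \eqref{eq:cumulativenorms}. The main obstacle is the gradient-norm bound $g_t\lesssim P_t$, in particular bounding $|J_i'(F(x_i;W_t))|$ by a multiple of $1+\mathcal{L}(W_t)^{1/2}$. If that does not follow directly from Assumption \ref{assump:lipJ}, I would reuse the same device that produced the $(1+\mathcal{L}(W_1)^{1/2})$ factor in Lemma \ref{lemma:Lipschitzsmooth}, namely the $|F-y_i|$ form appearing in Lemma \ref{lemma:LLsmooth}. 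Averaging with Cauchy--Schwarz then gives $\frac1n\sum_i|F-y_i|\le \mathcal{L}^{1/2}$ up to constants.
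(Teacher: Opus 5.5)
Your overall route is the paper's: apply Lemma \ref{lemma:descent} at $(W_t,W_{t+1})$, bound the step length $\eta_t\lVert\nabla\mathcal{L}(W_t)\rVert$ via the gradient bound (\ref{eq:gradfbound}), absorb the higher-order terms, and telescope using $\mathcal{L}\ge 0$. Your absorption step differs slightly from the paper's and is correct. The paper bounds each of the $2L-1$ terms separately by $\frac{\eta_t}{2L}g_t^2$: the quadratic term via $\eta_t C P_t\le 1/(2L)$, the others via $\eta_t g_t\le 1/C$. You instead collapse every power onto $\eta_t^2 g_t^2$ using only $\eta_t g_t\le 1$, then use $CL/\rho=1/(2d_{max}^{1/2})\le 1/2$. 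This even gives the sharper decrease $-\frac{1}{2}\eta_t g_t^2$.

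The one step you leave open is the bound on $\frac{1}{n}\sum_i |J_i'(F(x_i;W_t))|$, and neither of your fallbacks closes it under Assumption \ref{assump:lipJ}.
\begin{itemize}
\item Expanding $J_i'$ around a point $y^*$ leaves $|F(x_i;W_t)-y^*|$. For a general nonnegative smooth loss this is not controlled by $\mathcal{L}(W_t)$.
\item If you instead bound it through $\lVert W_t\rVert$ (Lemma \ref{lemma:hbound}), the bound on $g_t$ has degree $2L-1$ in $\lVert W_t\rVert$, one more than $P_t$. Then $\eta_t g_t\le 1$ fails for large $\lVert W_t\rVert$.
\item The $|F-y_i|$ form is specific to mean-squared error.
\end{itemize}
The missing idea is the standard consequence of nonnegativity plus $c_J$-smoothness. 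Evaluating the quadratic upper bound for $J_i$ at $u-J_i'(u)/c_J$ gives $0\le J_i(u)-\frac{1}{2c_J}|J_i'(u)|^2$, i.e. $|J_i'(u)|^2\le 2c_J J_i(u)$. Concavity of the square root then yields $\frac{1}{n}\sum_i|J_i'(F(x_i;W))|\le (2c_J\mathcal{L}(W))^{1/2}$, which is exactly the paper's (\ref{eq:boundloss}) from the proof of Lemma \ref{lemma:Lipschitzsmooth}. With it, $g_t\le (2c_J)^{1/2}d_{max}^{1/2}L\,P_t$. This differs from your claimed $c_J d_{max}^{1/2}L\,P_t$ when $c_J<2$, but harmlessly: $\eta_t g_t\le (2c_J)^{1/2}/(2^{2L+3}c_J d_{max}L^4)\le 1$. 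The rest of your argument then goes through unchanged.
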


To show convergence of $\lVert \nabla \mathcal{L}(W_t) \rVert^2$ to zero, we require that the learning rate (\ref{eq:lr}) not decay too quickly. By Lemma \ref{lemma:onestep}, we can bound $\mathcal{L}(W_t) \leq \mathcal{L}(W_0)$. It remains to control the growth of $\sum_{i = 0}^{2L - 2 }  \lVert W_t \rVert^{i }$, which is dominated by $\lVert W_t \rVert^{2L - 2}$. We have for $N = 2L - 2$,
\begin{align*}
    \lVert W_T \rVert^N \leq & \lVert W_0 \rVert^N + N \sum_{t = 0}^{T - 1} \eta_t^{1/N} \lVert \nabla \mathcal{L}(W_t) \rVert + \sum_{t = 0}^{T - 1} \sum_{k = 2}^N \binom{N}{k}  \eta_t^{k/2} \lVert \nabla \mathcal{L}(W_t) \rVert^k. 
\end{align*}
By (\ref{eq:cumulativenorms}), it can be shown that the last term is bounded by a constant. However, the growth of $\sum_{t = 0}^{T - 1} \eta_t^{1/N} \lVert \nabla \mathcal{L}(W_t) \rVert$ depends on the value of $N$. For two-layer neural networks, $L = 2$ and $N = 2$, and we can directly use  (\ref{eq:cumulativenorms}) to show that this term grows as $O(1/
\sqrt{T})$. However, for deeper neural networks $L > 2$, we have to use the Cauchy-Schwarz inequality to bound
\begin{align*}
    \sum_{t = 0}^{T - 1} \eta_t^{1/N} \lVert \nabla \mathcal{L}(W_t) \rVert &= \sum_{t = 0}^{T - 1} \eta_t^{1/N - 1/2} \eta_t^{1/2} \lVert \nabla \mathcal{L}(W_t) \rVert \leq \Big( \sum_{t = 0}^{T - 1} \eta_t^\frac{2 - N}{N} \Big)^{1/2} \Big( \sum_{t = 0}^{T - 1} \eta_t \lVert \nabla \mathcal{L}(W_t) \rVert^2 \Big)^{1/2}.
\end{align*}
We show by induction that for $N = 2L - 2$, $( \sum_{t = 0}^{T - 1} \eta_t^\frac{2 - N}{N} )^{1/2}$ and therefore $\sum_{t = 0}^{T - 1} \eta_t^{1/N} \lVert \nabla \mathcal{L}(W_t) \rVert$ grows as $O(T^{\frac{L-1}{L}})$, leading to an overall $O(1/T^{1/L})$ convergence rate for the minimum squared gradient norm. We state our main result as follows.

\begin{theorem}
\label{thm:generalNN}
Consider the loss function $\mathcal{L}$ (\ref{eq:loss}) of an $L$-layer neural network (\ref{eq:restructured_f}). Suppose Assumptions \ref{assump:lipschitz}, \ref{assump:boundeddata}, \ref{assump:bounded1}, and \ref{assump:lipJ} hold. Then after $T$ steps of gradient descent (\ref{eq:gd}), with learning rate set as (\ref{eq:lr}), we have
\begin{equation*}
    \min_{t = 0,...,T-1} \lVert \nabla \mathcal{L}(W_t) \rVert^2 = O \Big( \frac{1}{T^{1/L}} \Big),
\end{equation*}
where the $O(\cdot)$ notation hides polynomial dependence on $c_J$, $d_{max}$, $\mathcal{L}(W_0)$, and $\lVert W_0 \rVert$, and exponential dependence on $L$.
\end{theorem}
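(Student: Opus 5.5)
We would prove Theorem \ref{thm:generalNN} by combining the one-step descent of Lemma \ref{lemma:onestep} with a growth bound on $\lVert W_t\rVert$. The bound we need is $\lVert W_t\rVert^{N} = O(t^{(L-1)/L})$ for $N = 2L-2$, i.e.\ $\lVert W_t\rVert = O(t^{1/(2L)})$.

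\textbf{Step 1 (reduction).} By Lemma \ref{lemma:onestep}, $\mathcal{L}(W_t)\le \mathcal{L}(W_0)$ for all $t$. Hence
\[
\eta_t^{-1}\le \rho\,(1+\mathcal{L}(W_0)^{1/2})\,(2L-1)\,\max\{1,\lVert W_t\rVert\}^{N}.
\]
From (\ref{eq:cumulativenorms}),
\[
\min_t \lVert\nabla\mathcal{L}(W_t)\rVert^2 \cdot \sum_{t<T}\eta_t \le 2L\,\mathcal{L}(W_0).
\]
So it suffices to show $\sum_{t<T}\eta_t=\Omega(T^{1/L})$. Writing $a_t=\eta_t^{-1/N}$, which is essentially $\max\{1,\lVert W_t\rVert\}$ up to constants, it is enough to prove $a_t=O(t^{1/(2L)})$. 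Then $\eta_t=a_t^{-N}=\Omega(t^{-(L-1)/L})$, and summing gives $\sum_{t<T}\eta_t=\Omega(T^{1/L})$.

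\textbf{Step 2 (norm growth).} We have $\lVert W_{t+1}\rVert\le \lVert W_t\rVert+\eta_t g_t$ with $g_t=\lVert\nabla\mathcal{L}(W_t)\rVert$. Rather than expand binomially, we work with $a_t$ directly. Since $\eta_t\le a_t^{-N}$ (up to constants), we get
\[
a_{t+1}\lesssim a_t+\eta_t g_t\le a_t+\eta_t^{1/2}g_t\,a_t^{-N/2},
\]
and therefore
\[
a_{t+1}^{N/2+1}-a_t^{N/2+1}\lesssim (N/2+1)\,\eta_t^{1/2}g_t+\text{higher-order terms}.
\]
Here $N/2+1=L$. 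The higher-order terms are $\binom{L}{k}a_t^{L-k}(\eta_t g_t)^k$ for $k\ge2$. Using $a_t^{L-k}\le a_t^{-N(k/2-1)}\cdots$, they are bounded by powers $(\eta_t g_t^2)^{k/2}$ and thus summable by (\ref{eq:cumulativenorms}). The leading term satisfies
\[
\sum_{t<T}\eta_t^{1/2}g_t\le \sqrt{T}\,\Big(\sum_t\eta_t g_t^2\Big)^{1/2}=O(\sqrt T)
\]
by Cauchy--Schwarz. This gives $a_T^{L}=O(\sqrt T)$, i.e.\ $a_T=O(T^{1/(2L)})$, exactly the rate needed. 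Equivalently, in the paper's sketched form, one bounds $(\sum_t\eta_t^{(2-N)/N})^{1/2}$ by an induction on $T$ using the current bound on $\lVert W_t\rVert$.

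\textbf{Main obstacle.} The hard part is making Step 2 rigorous. The obstacles are:
\begin{itemize}
\item the constants from $1+\lVert W\rVert+\dots+\lVert W\rVert^N$ versus $\max\{1,\lVert W\rVert\}^N$;
\item the small-norm regime, where we replace $\lVert W_t\rVert$ by $\max\{1,\lVert W_t\rVert\}$;
\item showing that each higher-order binomial term really reduces to $(\eta_t g_t^2)^{k/2}$ times a bounded factor.
\end{itemize}
The key inequality for the last point is $\eta_t a_t^{N}\lesssim 1$, which makes $a_t^{L-k}\eta_t^{k/2}\lesssim a_t^{L-k-Nk/2}$ with a nonpositive exponent for $k\ge1$.

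If the direct power-$L$ potential fails to close, we would instead induct on $T$. The hypothesis is $a_t\le K t^{1/(2L)}$ for $t<T$. From it we derive
\[
\sum_{t<T}\eta_t^{1/N-1/2}\cdot\eta_t^{1/2}g_t\le \Big(\sum_t a_t^{N-2}\Big)^{1/2}\,(2L\,\mathcal{L}(W_0))^{1/2},
\]
and then choose $K$ large (exponential in $L$, polynomial in the other constants) so that the bound reproduces itself at step $T$.
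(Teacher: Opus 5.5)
Your proof is correct, and Step 2 takes a genuinely different and shorter route than the paper.

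\textbf{Comparison with the paper.} The paper tracks the potential $\lVert W_t\rVert^{N}$ with $N=2L-2$, the power appearing in $\eta_t$. Its first-order term is therefore $N\eta_t^{1/N}g_t$. Splitting $\eta_t^{1/N}=\eta_t^{(2-N)/(2N)}\eta_t^{1/2}$ in a weighted Cauchy--Schwarz leaves the factor $\sum_{t<T}\eta_t^{(2-N)/N}\lesssim\sum_{t<T}(1+\lVert W_t\rVert^N)^{(N-2)/N}$. That factor depends on the very norms being bounded, so the paper closes the self-referential inequality by induction on the ansatz $\lVert W_t\rVert^N\le\alpha t^{N/(N+2)}+\beta$, with an implicitly chosen $\alpha^*$.

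You instead use the exponent $N/2+1=L$. This is exactly the largest exponent $p$ for which $\eta_t\le\lVert W_t\rVert^{-N}$ forces $\lVert W_t\rVert^{p-1}\eta_t^{1/2}\le 1$. The first-order term is then at most $L\,\eta_t^{1/2}g_t$, unweighted Cauchy--Schwarz gives $O(\sqrt T)$ at once, and no induction is needed. Both routes give $\lVert W_T\rVert^{2L-2}=O(T^{(L-1)/L})$, hence $\sum_{t<T}\eta_t=\Omega(T^{1/L})$.

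Your route gives an explicit closed-form bound and makes the origin of the $T^{1/L}$ rate transparent: the norm grows like $T^{1/(2L)}$. For $L=2$ it is literally the paper's two-layer argument, of which it is the natural extension. The paper's route bounds the quantity in $\eta_t$ directly without tuning an exponent, at the cost of the induction. Your fallback plan is precisely the paper's proof.

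\textbf{One point to fix when writing it out.} Do not run the recursion on $a_t=\eta_t^{-1/N}$. That quantity is comparable to $\max\{1,\lVert W_t\rVert\}$ only up to a factor larger than one. So $a_{t+1}\lesssim a_t+\eta_t g_t$ would hide a multiplicative constant on $a_t$ and would not telescope.

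Use $b_t=\max\{1,\lVert W_t\rVert\}$ instead. Then $b_{t+1}\le b_t+\eta_t g_t$ and $\eta_t\le b_t^{-N}$ hold with no constants, since $\rho(1+\mathcal{L}(W_t)^{1/2})\ge 1$ and $\sum_{i=0}^{N}\lVert W_t\rVert^i\ge b_t^N$. Also $b_t^{L-k}\eta_t^{k/2}\le b_t^{L(1-k)}\le 1$ for $k\ge 1$. Together these give
\[
b_T^L\le b_0^L+L\sqrt{2L\,\mathcal{L}(W_0)\,T}+\sum_{k=2}^{L}\binom{L}{k}\big(2L\,\mathcal{L}(W_0)\big)^{k/2}.
\]
The constant $\rho(1+\mathcal{L}(W_0)^{1/2})(2L-1)$ then enters only when converting $b_t^{N}=O(t^{(L-1)/L})$ into $\eta_t=\Omega(t^{-(L-1)/L})$. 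This disposes of all three obstacles you list.
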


\section{Discussion}

In this work, we close an important gap in the theory of deep learning by establishing the $O(1/T^{1/L})$ upper bound on the convergence rate of gradient descent on neural networks with minimal assumptions. Our results open up a number of questions regarding the behavior of gradient descent on neural networks. The most natural question is regarding the tightness of the bounds, and obtaining a lower bound on the convergence rate. Moreover, gradient descent on Lipschitz smooth functions is famously \textit{dimension-free}, but our analysis includes polynomial dependence on the network width $d_{max}$ and exponential dependence on the number of layers $L$. Determining whether these dependencies can be removed is an important future direction. Finally, although we significantly relax the assumptions of prior work, this classical framework still heavily relies on some kind of smoothness --- in plain words, that shrinking in parameter space continuously maps to  shrinking in the gradient. This leaves the behavior of gradient descent on ReLU neural networks largely undetermined.

\printbibliography

\pagebreak

\appendix
\section{Proofs}

\subsection{Proof of Lemma \ref{lemma:LLsmooth}}

\begin{proof}
We have $\nabla \mathcal{L}(W) =  \frac{1}{n} \sum_{i = 1}^n J_i'(F(x_i;W))\cdot \nabla_W F(x_i ; W)$. The change in gradient is as follows,
\begin{align*}
    \lVert \nabla \mathcal{L}(W_2) - \nabla \mathcal{L}(W_1) \rVert = & \lVert \frac{1}{n} \sum_{i = 1}^n J_i'(F(x_i;W_2)) \nabla_W F(x_i ; W_2) - \frac{1}{n} \sum_{i = 1}^n J_i'(F(x_i;W_1)) \nabla_W F(x_i ; W_1) \rVert \\
    = & \lVert \frac{1}{n} \sum_{i = 1}^n \Big[ J_i'(F(x_i;W_2)) \nabla_W F(x_i ; W_2) - J_i'(F(x_i;W_1)) \nabla_W F(x_i ; W_2) \\
     &+ J_i'(F(x_i;W_1)) \nabla_W F(x_i ; W_2) - J_i'(F(x_i;W_1)) \nabla_W F(x_i ; W_1)  \Big] \rVert \\
     = & \lVert \frac{1}{n} \sum_{i = 1}^n \Big[ (J_i'(F(x_i;W_2)) - J_i'(F(x_i;W_1))) \nabla_W F(x_i ; W_2) \\
    &+ J_i'(F(x_i;W_1)) (\nabla_W F(x_i ; W_2) - \nabla_W F(x_i ; W_1))  \Big] \rVert \\
    \leq & \lVert \frac{1}{n} \sum_{i = 1}^n  (J_i'(F(x_i;W_2)) - J_i'(F(x_i;W_1))) \nabla_W F(x_i ; W_2) \rVert \\
    &+ \lVert \frac{1}{n} \sum_{i = 1}^n J_i'(F(x_i;W_1)) (\nabla_W F(x_i ; W_2) - \nabla_W F(x_i ; W_1)) \rVert \\
    \overset{\text{Assumption \ref{assump:lipJ}}}{\leq} & \frac{1}{n} \sum_{i = 1}^n  c_J |F(x_i ; W_2) - F(x_i ; W_1) |  \lVert\nabla_W F(x_i ; W_2) \rVert \\
    &+ \frac{1}{n} \sum_{i = 1}^n |J_i'(F(x_i;W_1))| \lVert \nabla_W F(x_i ; W_2) - \nabla_W F(x_i ; W_1) \rVert .
\end{align*}
\end{proof}

\subsection{Proof of Lemma \ref{lemma:gradformula}}

\begin{proof}
    
For an element-wise function $\sigma(X)$, we have
\begin{equation*}
    d \sigma(X) = \sigma'(X) \odot dX.
\end{equation*}
So we have the following differentials.
\begin{align*}
    dh_{\ell} = &  \sigma'(z_{\ell}(W)) \odot d z_\ell\\
    d z_\ell = & d(E_\ell[W]) h_{\ell - 1} + E_\ell [W] d h_{\ell - 1} \\
    = & E_{\ell}[dW] h_{\ell - 1} + E_\ell [W] d h_{\ell - 1} \\
    d z_1 = & E_1[dW] x_i
\end{align*}

We have $\delta_L(W) = \sigma'(z_L(W))$, and $\delta_{\ell - 1}(W) = \sigma'(z_{\ell - 1}(W)) \odot (E_{\ell}  [W]^T \delta_{\ell}(W))$. Then 
\begin{equation*}
    dF = \langle \delta_L, dz_L \rangle,
\end{equation*}
and for $\ell = 2,.., L$,
\begin{align*}
    \langle \delta_\ell(W), d z_\ell \rangle = & \langle \delta_\ell(W), E_\ell[dW] h_{\ell - 1} \rangle + \langle \delta_\ell(W), E_{\ell} [W] d h_{\ell - 1}\rangle, \\
     = & \langle \delta_\ell(W), E_\ell[dW] h_{\ell - 1} \rangle + \langle \delta_\ell(W), E_{\ell} [W] (\sigma'(z_{\ell - 1}(W)) \odot d z_{\ell - 1})\rangle, \\
    \overset{\text{Lemma \ref{hlemma:innerprod}}}{=} & \langle \delta_\ell(W), E_\ell[dW] h_{\ell - 1} \rangle + \langle E_{\ell} [W]^T \delta_\ell(W),  \sigma'(z_{\ell - 1}(W)) \odot d z_{\ell - 1} \rangle \\
    \overset{\text{Lemma \ref{hlemma:odot}}}{=} & \langle \delta_\ell(W), E_\ell[dW] h_{\ell - 1} \rangle + \langle  \sigma'(z_{\ell - 1}(W)) \odot (E_{\ell} [W]^T \delta_\ell(W)),   d z_{\ell - 1}(W) \rangle \\
    = & \langle \delta_\ell(W), E_\ell[dW] h_{\ell - 1} \rangle + \langle  \delta_{\ell - 1}(W),   d z_{\ell - 1} \rangle .
\end{align*}
For $\ell = 1$ we have
\begin{align*}
    \langle \delta_1(W), d z_1 \rangle = \langle \delta_1(W), E_1[dW] h_0(W) \rangle 
\end{align*}
which yields the following equation for $dF$:
\begin{align*}
    dF = & \sum_{\ell = 1}^{L} \langle \delta_\ell(W), E_\ell[dW] h_{\ell - 1} (W)\rangle \\
    = & \langle \sum_{\ell = 1}^{L} E^*_{\ell}[\delta_{\ell} (W) h_{\ell - 1}(W)^T], dW \rangle.
\end{align*}
We therefore have the following formula for the gradient $\nabla_W F (x_i ; W)$
\begin{equation*}
    \nabla_W F(x_i ; W) = \sum_{\ell = 1}^{L}  E^*_{\ell}[\delta_{\ell} (W) h_{\ell - 1}(W)^T].
\end{equation*}
\end{proof}

\subsection{Proof of Lemma \ref{lemma:lip_f}}

\begin{proof}
    \begin{align*}
        \lVert F(x_i ; W_2 ) - F(x_i ; W_1) \rVert = & \lVert \sigma(z_{L}(W_2)) - \sigma(z_L(W_1)) \rVert \\
        \leq & \lVert z_L(W_2) - z_L(W_1) \rVert \\
        \overset{\text{Lemma \ref{lemma:zchange}}}{\leq} & d_{max}^{1/2} \lVert W_2 - W_1 \rVert \sum_{j = 0}^{L - 1} \sum_{k = 0}^{L - j - 1}  \lVert W_1 \rVert^j \lVert W_2 \rVert^{k}, \\
        = & d_{max}^{1/2} \lVert W_2 - W_1 \rVert \sum_{k = 0}^{L - 1} \sum_{j = 0}^{L - k - 1}  \lVert W_1 \rVert^j \lVert W_2 \rVert^{k}.
    \end{align*}
\end{proof}

\subsection{Proof of Lemma \ref{lemma:lip_nablaf}}

\begin{proof}

We decompose the change in $\nabla_W F(x_i ; W)$ in terms of the change in $\delta$ and $h$ as follows
\begin{align*}
    \lVert \nabla_W F(x_i ; W_2) - \nabla_W F(x_i ; W_1) \rVert \leq & \sum_{\ell = 1}^L \lVert E_\ell^*[\delta_\ell(W_2) h_{\ell - 1}(W_2)^T - \delta_\ell(W_1) h_{\ell - 1}(W_1)^T ] \rVert \\
    \leq & \sum_{\ell = 1}^L \lVert \delta_\ell(W_2) h_{\ell - 1}(W_2)^T - \delta_\ell(W_1) h_{\ell - 1}(W_1)^T \rVert \\
    \leq & \sum_{\ell = 1}^L [ \lVert \delta_\ell(W_2) \rVert  \lVert h_{\ell - 1}(W_2) - h_{\ell - 1}(W_1) \rVert  + \lVert h_{\ell - 1}(W_1) \rVert \lVert \delta_\ell(W_2) - \delta_\ell(W_1) \rVert].
\end{align*}

We break up the sum as follows because $\lVert h_0(W_2) - h_0(W_1) \rVert = \lVert x_i - x_i \rVert = 0$, and to handle separately the bound on $\lVert \delta_L(W_2) - \delta_L(W_1) \rVert$,
\begin{align*}
    \lVert \nabla_W F(x_i ; W_2) - \nabla_W F(x_i ; W_1) \rVert \leq & \sum_{\ell = 2}^L \lVert \delta_\ell(W_2) \rVert  \lVert h_{\ell - 1}(W_2) - h_{\ell - 1}(W_1) \rVert  + \sum_{\ell = 1}^L \lVert h_{\ell - 1}(W_1) \rVert \lVert \delta_\ell(W_2) - \delta_\ell(W_1) \rVert,\\
    \overset{(\ref{eq:l2})}{\leq} & \sum_{\ell = 2}^L\lVert \delta_\ell(W_2) \rVert  \lVert z_{\ell - 1}(W_2) - z_{\ell - 1}(W_1) \rVert  + \sum_{\ell = 1}^L  \lVert h_{\ell - 1}(W_1) \rVert \lVert \delta_\ell(W_2) - \delta_\ell(W_1) \rVert\\
    = & \sum_{\ell = 2}^L\lVert \delta_\ell(W_2) \rVert  \lVert z_{\ell - 1}(W_2) - z_{\ell - 1}(W_1) \rVert + \sum_{\ell = 1}^{L - 1}  \lVert h_{\ell - 1}(W_1) \rVert \lVert \delta_\ell(W_2) - \delta_\ell(W_1) \rVert \\
    &+\lVert h_{L - 1}(W_1) \rVert \lVert \delta_L(W_2) - \delta_L(W_1) \rVert.
\end{align*}

We bound these three terms. For the first term, we have the sum is bounded as follows,
\begin{align*}
    \sum_{\ell = 2}^L\lVert \delta_\ell(W_2) \rVert  \lVert z_{\ell - 1}(W_2) - z_{\ell - 1}(W_1) \rVert \overset{\text{Lemma \ref{lemma:deltabound}}}{\leq} &  \sum_{\ell = 2}^L\lVert W_2 \rVert^{L - \ell}  \lVert z_{\ell - 1}(W_2) - z_{\ell - 1}(W_1) \rVert,\\
    \overset{\text{Lemma \ref{lemma:zchange}}}{\leq} & \sum_{\ell = 2}^L\lVert W_2 \rVert^{L - \ell}  d_{max}^{1/2} \lVert W_2 - W_1 \rVert \sum_{k = 0}^{\ell - 2} \sum_{j = 0}^{\ell - k - 2}  \lVert W_1 \rVert^j \lVert W_2 \rVert^{k} ,\\
    = & d_{max}^{1/2} \lVert W_2 - W_1 \rVert \sum_{\ell = 2}^L  \sum_{k = 0}^{\ell - 2} \lVert W_2 \rVert^{L - \ell + k} \sum_{j = 0}^{\ell - k - 2}  \lVert W_1 \rVert^j  ,\\
    = & d_{max}^{1/2} \lVert W_2 - W_1 \rVert \sum_{\ell = 2}^L  \sum_{k = L - \ell}^{L - 2} \lVert W_2 \rVert^{k} \sum_{j = 0}^{L - k - 2}  \lVert W_1 \rVert^j  ,\\
    \leq & d_{max}^{1/2} \lVert W_2 - W_1 \rVert  L \sum_{k = 0}^{L - 2}   \lVert W_2 \rVert^{k} \sum_{j = 0}^{L - k - 2}  \lVert W_1 \rVert^j  .
\end{align*}
We can bound the third term as 
\begin{align*}
    \lVert h_{L - 1}(W_1) \rVert \lVert \delta_L(W_2) - \delta_L(W_1) \rVert  \overset{\text{Lemma \ref{lemma:hbound} }}{\leq} & d_{max}^{1/2} \Big(\sum_{m = 0}^{L - 1} 
    \lVert W_1 \rVert^m \Big) \lVert \delta_L(W_2) - \delta_L(W_1) \rVert \\
    \overset{\text{Lemma \ref{lemma:delta_change}}}{\leq} & d_{max}\lVert W_2 - W_1 \rVert \Big(\sum_{m = 0}^{L - 1} 
    \lVert W_1 \rVert^m \Big) \sum_{k = 0}^{L - 1} \sum_{j = 0}^{L - k - 1}  \lVert W_1 \rVert^j \lVert W_2 \rVert^{k},\\
    = & d_{max}\lVert W_2 - W_1 \rVert  
     \sum_{k = 0}^{L - 1} \lVert W_2 \rVert^{k} \sum_{m = 0}^{L - 1} \sum_{j = 0}^{L - k - 1}  \lVert W_1 \rVert^{j+m} ,\\
     = & d_{max}\lVert W_2 - W_1 \rVert  
     \sum_{k = 0}^{L - 1} \lVert W_2 \rVert^{k} \sum_{m = 0}^{L - 1} \sum_{j = m}^{L - k - 1 + m}  \lVert W_1 \rVert^{j} ,\\
     \leq & L d_{max}\lVert W_2 - W_1 \rVert  
     \sum_{k = 0}^{L - 1} \lVert W_2 \rVert^{k}  \sum_{j = 0}^{2L - 2- k }  \lVert W_1 \rVert^{j}. 
\end{align*}
For the second term, we have the sum is bounded as
\begin{align*}
    \sum_{\ell = 1}^{L - 1}  \lVert h_{\ell - 1}(W_1) \rVert \lVert \delta_\ell(W_2) - &\delta_\ell(W_1) \rVert \overset{\text{Lemma \ref{lemma:hbound}}}{\leq} \sum_{\ell = 1}^{L - 1}  d_{max}^{1/2} \Big(\sum_{m = 0}^{\ell - 1} 
    \lVert W_1 \rVert^m \Big) \lVert \delta_\ell(W_2) - \delta_\ell(W_1) \rVert \\
    \overset{\text{Lemma \ref{lemma:delta_change}}}{\leq} & \sum_{\ell = 1}^{L - 1}  d_{max}^{1/2} \Big(\sum_{m = 0}^{\ell - 1} 
    \lVert W_1 \rVert^m \Big) \Bigg(d_{max}^{1/2} \lVert W_2 - W_1 \rVert \sum_{k = \ell}^{L}  \sum_{i = k - \ell}^{2k - 1 - \ell}  \lVert W_2 \rVert^i \sum_{j = L - k}^{L - i + k - \ell - 1} \lVert W_1 \rVert^j \\
    &+ \sum_{k = 0}^{L-\ell - 1} \lVert W_2 \rVert^{k}  \lVert W_1 \rVert^{L - k - \ell - 1} \lVert W_2 - W_1 \rVert \Bigg)\\
    = & d_{max} \lVert W_2 - W_1 \rVert \sum_{\ell = 1}^{L-1}   \Big(\sum_{m = 0}^{\ell - 1} 
    \lVert W_1 \rVert^m\Big) \Bigg( \sum_{k = \ell}^{L}  \sum_{i = k - \ell}^{2k - 1 - \ell}  \lVert W_2 \rVert^i \sum_{j = L - k}^{L - i + k - \ell - 1} \lVert W_1 \rVert^j \\
    &+ \sum_{k = 0}^{L-\ell - 1} \lVert W_2 \rVert^{k}  \lVert W_1 \rVert^{L - k - \ell - 1}   \Bigg)\\
    = & d_{max} \lVert W_2 - W_1 \rVert \Bigg[\sum_{\ell = 1}^{L-1}   \Big(\sum_{m = 0}^{\ell - 1} 
    \lVert W_1 \rVert^m\Big) \Bigg( \sum_{k = \ell}^{L}  \sum_{i = k - \ell}^{2k - 1 - \ell}  \lVert W_2 \rVert^i \sum_{j = L - k}^{L - i + k - \ell - 1} \lVert W_1 \rVert^j \Bigg)\\
    &+ \sum_{\ell = 1}^{L - 1} \Big(\sum_{m = 0}^{\ell - 1} 
    \lVert W_1 \rVert^m\Big) \sum_{k = 0}^{L-\ell - 1} \lVert W_2 \rVert^{k}  \lVert W_1 \rVert^{L - k - \ell - 1}  \Bigg]
\end{align*}

We have the quintuple sum can be bounded and simplified as follows
\begin{align*}
    \sum_{\ell = 1}^{L-1}   \Big(\sum_{m = 0}^{\ell - 1} 
    \lVert W_1 \rVert^m\Big) \Bigg( \sum_{k = \ell}^{L}  \sum_{i = k - \ell}^{2k - 1 - \ell}  \lVert W_2 \rVert^i \sum_{j = L - k}^{L - i + k - \ell - 1} \lVert W_1 \rVert^j \Bigg) = & \sum_{\ell = 1}^{L-1}   
    \sum_{k = \ell}^{L}  \sum_{i = k - \ell}^{2k - 1 - \ell}  \lVert W_2 \rVert^i \sum_{m = 0}^{\ell - 1}  \sum_{j = L - k}^{L - i + k - \ell - 1} \lVert W_1 \rVert^{m + j},\\
    = & \sum_{\ell = 1}^{L-1}   
    \sum_{k = \ell}^{L}  \sum_{i = k - \ell}^{2k - 1 - \ell}  \lVert W_2 \rVert^i \sum_{m = 0}^{\ell - 1}  \sum_{j = L - k + m}^{L - i + k - \ell - 1 + m} \lVert W_1 \rVert^{j},\\
    \leq & \sum_{\ell = 1}^{L-1}   
    \sum_{k = \ell}^{L}  \sum_{i = k - \ell}^{2k - 1 - \ell}  \lVert W_2 \rVert^i \sum_{m = 0}^{\ell - 1}  \sum_{j = L - k}^{L - i + k - 2} \lVert W_1 \rVert^{j},\\
    \leq & L \sum_{\ell = 1}^{L-1}   
    \sum_{k = \ell}^{L}  \sum_{i = k - \ell}^{2k - 1 - \ell}  \lVert W_2 \rVert^i  \sum_{j = L - k}^{L - i + k - 2} \lVert W_1 \rVert^{j},\\
    \leq & L \sum_{\ell = 1}^{L-1}   
    \sum_{k = \ell}^{L}  \sum_{i = k - \ell}^{2k - 1 - \ell}  \lVert W_2 \rVert^i  \sum_{j = 0}^{2L - i- 2} \lVert W_1 \rVert^{j},\\
    \leq & L^2 \sum_{\ell = 1}^{L-1}   \sum_{i = 0}^{2L - 1 - \ell}  \lVert W_2 \rVert^i  \sum_{j = 0}^{2L - i- 2} \lVert W_1 \rVert^{j},\\
    \leq & L^3  \sum_{i = 0}^{2L - 2}  \lVert W_2 \rVert^i  \sum_{j = 0}^{2L - i- 2} \lVert W_1 \rVert^{j}.
\end{align*}

And the triple sum can be bounded as
\begin{align*}
    \sum_{\ell = 1}^{L - 1} \Big(\sum_{m = 0}^{\ell - 1} 
    \lVert W_1 \rVert^m\Big) \sum_{k = 0}^{L-\ell - 1} \lVert W_2 \rVert^{k}  \lVert W_1 \rVert^{L - k - \ell - 1} = & \sum_{\ell = 1}^{L - 1}
 \sum_{k = 0}^{L-\ell - 1} \lVert W_2 \rVert^{k}   \sum_{m = 0}^{\ell - 1} \lVert W_1 \rVert^{L - k - \ell - 1 + m} \\
 = & \sum_{\ell = 1}^{L - 1}
 \sum_{k = 0}^{L-\ell - 1} \lVert W_2 \rVert^{k}   \sum_{m = L - k - \ell - 1}^{L - k - 2} \lVert W_1 \rVert^{m} \\
 \leq & \sum_{\ell = 1}^{L - 1}
 \sum_{k = 0}^{L-\ell - 1} \lVert W_2 \rVert^{k}   \sum_{m = 0}^{L - k - 2} \lVert W_1 \rVert^{m} \\
 \leq & L  \sum_{k = 0}^{L- 2} \lVert W_2 \rVert^{k}   \sum_{m = 0}^{L - k - 2} \lVert W_1 \rVert^{m}.
\end{align*}

Putting it all together and using the fact that $L \geq 1$, $d_{max} \geq 1$, results in 
\begin{align*}
    \lVert \nabla_W F(x_i ; W_2) &- \nabla_W F(x_i ; W_1) \rVert \\
    \leq& L^3 d_{max} \lVert W_2 - W_1 \rVert \Bigg( \sum_{k = 0}^{L - 2}   \lVert W_2 \rVert^{k} \sum_{j = 0}^{L - k - 2}  \lVert W_1 \rVert^j \\
    &+ 
     \sum_{k = 0}^{L - 1} \lVert W_2 \rVert^{k}  \sum_{j = 0}^{2L - 2- k }   \lVert W_1 \rVert^{j} +  \sum_{i = 0}^{2L - 2}  \lVert W_2 \rVert^i  \sum_{j = 0}^{2L - i- 2} \lVert W_1 \rVert^{j}+   \sum_{k = 0}^{L- 2} \lVert W_2 \rVert^{k}   \sum_{m = 0}^{L - k - 2} \lVert W_1 \rVert^{m} \Bigg) \\
    \leq &  4 d_{max} L^3 \lVert W_2 - W_1 \rVert \sum_{i = 0}^{2L - 2}  \lVert W_2 \rVert^i  \sum_{j = 0}^{2L - i- 2} \lVert W_1 \rVert^{j}, \\
    = & 4 d_{max} L^3 \lVert W_2 - W_1 \rVert \sum_{j = 0}^{2L - 2}  \lVert W_1 \rVert^j  \sum_{i = 0}^{2L - j- 2} \lVert W_2 \rVert^{i}. 
\end{align*}
where in the last step we exchange the summations.
    
\end{proof}

\subsection{Proof of Lemma \ref{lemma:Lipschitzsmooth}}
\begin{proof}

We decompose the change in gradient $\lVert \nabla \mathcal{L}(W_2) - \nabla \mathcal{L}(W_1) \rVert$ in terms of $|F(x_i ; W_2) - F(x_i ; W_1) |$ and $\lVert \nabla_W F(x_i ; W_2) - \nabla_W F(x_i ; W_1) \rVert$ via Lemma \ref{lemma:LLsmooth}. In order to leverage Lemma \ref{lemma:LLsmooth}, we require the following bound on the gradient norm.

\begin{align*}
    \lVert \nabla_W F(x_i ; W) \rVert \leq & \sum_{\ell = 1}^{L}  \lVert \delta_{\ell} (W) h_{\ell - 1}(W)^T \rVert, \\
    \leq & \sum_{\ell = 1}^{L} \lVert \delta_{\ell} (W) \rVert \lVert h_{\ell - 1}(W) \rVert, \\
    \overset{\text{Lemmas \ref{lemma:deltabound} and \ref{lemma:hbound}}}{\leq} & \sum_{\ell = 1}^{L} \lVert W \rVert^{L - \ell} d_{max}^{1/2} \sum_{k = 0}^{\ell - 1} \lVert W \rVert^k, \\
    = & d_{max}^{1/2} \sum_{\ell = 1}^{L} \sum_{k = 0}^{\ell - 1} \lVert W \rVert^{L - \ell + k},\\
    = & d_{max}^{1/2} \sum_{\ell = 1}^{L} \sum_{k = L - \ell}^{L - 1} \lVert W \rVert^{k}.
\end{align*}
So we have the bound
\begin{align}
\label{eq:gradfbound}
    \lVert \nabla_W F(x_i ; W) \rVert \leq d_{max}^{1/2} \sum_{\ell = 1}^{L} \sum_{k = 0}^{L - 1} \lVert W \rVert^{k} = d_{max}^{1/2} L \sum_{k = 0}^{L - 1} \lVert W \rVert^{k}.
    \end{align}

Then we can use Lemmas \ref{lemma:lip_f} and \ref{lemma:lip_nablaf} to achieve an upper bound in terms of $\lVert W_2 - W_1 \rVert$, $\lVert W_1 \rVert$, and $\lVert W_2 \rVert$.

\begin{align*}
    \lVert \nabla \mathcal{L}(W_2) - \nabla \mathcal{L}(W_1) \rVert 
     \overset{\text{Lemma \ref{lemma:LLsmooth}}}{\leq} &  \frac{1}{n} \sum_{i = 1}^n  c_J |F(x_i ; W_2) - F(x_i ; W_1) |  \lVert\nabla_W F(x_i ; W_2) \rVert \\
    &+ \frac{1}{n} \sum_{i = 1}^n |J_i'(F(x_i;W_1))| \lVert \nabla_W F(x_i ; W_2) - \nabla_W F(x_i ; W_1) \rVert 
     \\
    \overset{\text{Lemma \ref{lemma:lip_f} and (\ref{eq:gradfbound})}}{\leq} &  c_J d_{max}^{1/2} \lVert W_2 - W_1 \rVert \sum_{j = 0}^{L - 1} \sum_{k = 0}^{L - j - 1}  \lVert W_1 \rVert^j \lVert W_2 \rVert^{k}  \cdot d_{max}^{1/2} L \sum_{i = 0}^{L - 1} \lVert W_2 \rVert^{i}\\
    &+ \frac{1}{n} \sum_{i = 1}^n |J_i'(F(x_i;W_1))| \lVert \nabla_W F(x_i ; W_2) - \nabla_W F(x_i ; W_1) \rVert \\
    = &  c_J d_{max} L \lVert W_2 - W_1 \rVert \sum_{j = 0}^{L - 1} \lVert W_1 \rVert^j \sum_{k = 0}^{L - j - 1}  \sum_{i = k}^{L - 1 + k} \lVert W_2 \rVert^{i}\\
    &+ \frac{1}{n} \sum_{i = 1}^n |J_i'(F(x_i;W_1))| \lVert \nabla_W F(x_i ; W_2) - \nabla_W F(x_i ; W_1) \rVert \\
    \overset{\text{Lemma \ref{lemma:lip_nablaf}}}{\leq} &  c_J d_{max} L \lVert W_2 - W_1 \rVert \sum_{j = 0}^{L - 1} \lVert W_1 \rVert^j \sum_{k = 0}^{L - j - 1}  \sum_{i = 0}^{2L - 2 - j} \lVert W_2 \rVert^{i}\\
    &+ 4 d_{max} L^3 \lVert W_2 - W_1 \rVert \sum_{j = 0}^{2L - 2}  \lVert W_1 \rVert^j  \sum_{i = 0}^{2L - j- 2} \lVert W_2 \rVert^{i}\cdot \frac{1}{n} \sum_{i = 1}^n |J_i'(F(x_i;W_1))|.
    \end{align*}
    
    From Lipschitz smoothness of $J$, we have for all $W$ (See  Lemma 2.28 in \cite{garrigos2024handbookconvergencetheoremsstochastic}), 
\begin{equation}
\label{eq:boundongrad}
    | J'(F(x_i; W)) |^2 \leq  2c_J J_i(F(x_i; W)),
\end{equation} 
    \begin{align*}
    \frac{1}{n} \sum_{i = 1}^n|J_i'(F(x_i;W))| \leq& \frac{1}{n} \sum_{i = 1}^n (2 c_J J_i(F(x_i; W)))^{1/2} \\ 
    \leq & ( \frac{1}{n}\sum_{i = 1}^n 2 c_J J_i(F(x_i; W)))^{1/2}\\
    \leq & ( 2 c_J \mathcal{L}( W))^{1/2}.
    \end{align*}
So we have
    \begin{equation}
    \label{eq:boundloss}
        \frac{1}{n} \sum_{i = 1}^n |J_i'(F(x_i; W))| \leq ( 2 c_J \mathcal{L}(W))^{1/2}.
    \end{equation}
    
    \begin{align*}
    \lVert \nabla \mathcal{L}(W_2) - \nabla \mathcal{L}(W_1) \rVert \overset{(\ref{eq:boundloss})}{\leq} &  2 d_{max} L^2 \lVert W_2 - W_1 \rVert \sum_{j = 0}^{L - 1} \lVert W_1 \rVert^j \sum_{i = 0}^{2L - 2 - j} \lVert W_2 \rVert^{i}\\
    &+ 4 (2 c_J \mathcal{L}(W_1))^{1/2} d_{max} L^3 \lVert W_2 - W_1 \rVert \sum_{j = 0}^{2L - 2}  \lVert W_1 \rVert^j  \sum_{i = 0}^{2L - j- 2} \lVert W_2 \rVert^{i},\\
    \leq &  8 c_J d_{max} L^3  \lVert W_2 - W_1 \rVert \Bigg[  \sum_{j = 0}^{L - 1} \lVert W_1 \rVert^j \sum_{i = 0}^{2L - 2 - j} \lVert W_2 \rVert^{i}+ \mathcal{L}(W_1)^{1/2}   \sum_{j = 0}^{2L - 2}  \lVert W_1 \rVert^j  \sum_{i = 0}^{2L - j- 2} \lVert W_2 \rVert^{i} \Bigg]\\
    \leq &  8 c_J d_{max} L^3  \lVert W_2 - W_1 \rVert (1 + \mathcal{L}(W_1)^{1/2})   \sum_{j = 0}^{2L - 2}  \lVert W_1 \rVert^j  \sum_{i = 0}^{2L - j- 2} \lVert W_2 \rVert^{i}.
\end{align*}
\end{proof}

\subsection{Proof of Lemma \ref{lemma:descent}}
\begin{proof}

We denote $v = W_2 - W_1$, and parametrize $\mathcal{L}$ between $W_1$ and $W_2$ such that for $t \in [0, 1]$, we have
\begin{align*}
    \phi(t) =& \mathcal{L}(W_1 + tv),\\
    \phi'(t) =& \langle \nabla \mathcal{L}(W_1 + tv),  v \rangle_F.  
\end{align*}

By the fundamental theorem of calculus, we have
\begin{align*}
    \mathcal{L}(W_2) - \mathcal{L}(W_1) &= \phi(1) - \phi(0) = \int_0^1 \phi'(t) dt,\\
    & = \int_0^1 \langle \nabla \mathcal{L}(W_1), v \rangle_F \;dt + \int_0^1 \langle \nabla \mathcal{L}(W_1 + tv) - \nabla \mathcal{L}(W_1), v \rangle_F \; dt ,\\
    & = \langle \nabla \mathcal{L}(W_1), v \rangle_F + \int_0^1 \langle \nabla \mathcal{L}(W_1 + tv) - \nabla \mathcal{L}(W_1), v\rangle_F \; dt,\\
    & \leq \langle \nabla \mathcal{L}(W_1), v \rangle + \int_0^1 \lVert \nabla \mathcal{L}(W_1 + tv) - \nabla \mathcal{L}(W_1) \rVert \lVert v\rVert dt, 
\end{align*} 
where in the last step we use the Cauchy-Schwarz inequality and the fact that integrals preserve inequalities. For the integral, we have by Lemma \ref{lemma:Lipschitzsmooth},
\begin{align*}
    \int_0^1 \lVert \nabla \mathcal{L}(W_1 + tv) - \nabla \mathcal{L}(W_1) \rVert \lVert v\rVert dt \leq & 8 c_J d_{max} L^3  (1 + \mathcal{L}(W_1)^{1/2})  \lVert v \rVert^2   \sum_{j = 0}^{2L - 2}  \lVert W_1 \rVert^j   \int_0^1  t \sum_{i = 0}^{2L - j- 2}   \lVert W_1 + tv \rVert^i    dt,
\end{align*}
where we pull out any terms not dependent on $t$ from the integral. For any nonnegative integer $N$, we have by the binomial expansion and reindexing
\begin{align*}
    \int_0^1 t \sum_{i = 0}^{N} \lVert W_1 + t v \rVert^i \, dt \leq & \int_0^1 t \sum_{i = 0}^{N} (\lVert W_1 \rVert  + \lVert t v \rVert)^i \, dt \\
    = & \int_0^1 t \sum_{i = 0}^{N} \sum_{k = 0}^{i} \binom{i}{k} \lVert W_1 \rVert^{i - k} t^k \lVert v \rVert^k \, dt \\
    = & \int_0^1 t \sum_{k = 0}^{N} \sum_{i = k}^{N} \binom{i}{k} \lVert W_1 \rVert^{i - k} t^k \lVert v \rVert^k \, dt \\
    = & \sum_{k = 0}^{N} \sum_{i = k}^{N} \binom{i}{k} \lVert W_1 \rVert^{i - k} \lVert v \rVert^k \int_0^1 t^{k + 1}  \, dt \\
    = & \sum_{k = 0}^{N} \sum_{i = k}^{N} \binom{i}{k} \lVert W_1 \rVert^{i - k} \lVert v \rVert^k \frac{1}{k + 2} \\
    \leq & 2^{N} \sum_{k = 0}^{N} 
    \frac{1}{k + 2} \lVert v \rVert^k   \sum_{i = k}^{N}  \lVert W_1 \rVert^{i - k} 
    =  2^{N} \sum_{k = 0}^{N} 
    \frac{1}{k + 2} \lVert v \rVert^k   \sum_{i = 0}^{N - k}  \lVert W_1 \rVert^{i},
\end{align*}
where in the last inequality we use the fact that $\binom{i}{k} \leq 2^{i} \leq 2^{N}$.
Therefore we have
\begin{align*}
    \sum_{j = 0}^{2L - 2}  \lVert W_1 \rVert^j   \int_0^1  t \sum_{i = 0}^{2L - j- 2}   \lVert W_1 + tv \rVert^i    dt \leq & \sum_{j = 0}^{2L - 2} 2^{2L - j- 2} \sum_{k = 0}^{2L - j- 2} 
    \frac{1}{k + 2} \lVert v \rVert^k   \sum_{i = 0}^{2L - j- 2 - k}  \lVert W_1 \rVert^{i + j} \\
    = & \sum_{j = 0}^{2L - 2} 2^{2L - j- 2} \sum_{k = 0}^{2L - j- 2} 
    \frac{1}{k + 2} \lVert v \rVert^k   \sum_{i = j}^{2L - 2 - k}  \lVert W_1 \rVert^{i } \\
    \leq & 2^{2L - 2} \sum_{j = 0}^{2L - 2} 
    \sum_{i = j}^{2L - 2 }  \lVert W_1 \rVert^{i }  \sum_{k = 0}^{2L - j- 2} 
    \frac{1}{k + 2} \lVert v \rVert^k   \\
    \leq & 2^{2L - 2} (2L - 1) 
    \sum_{i = 0}^{2L - 2 }  \lVert W_1 \rVert^{i }  \sum_{k = 0}^{2L - 2} 
    \frac{1}{k + 2} \lVert v \rVert^k .
\end{align*}

and
\begin{align*}
    \int_0^1  \lVert \nabla \mathcal{L}(W_1 &+ tv) - \nabla \mathcal{L}(W_1) \rVert \lVert v\rVert dt \\
    \leq & 8 c_J  d_{max} L^3  (1 + \mathcal{L}(W_1)^{1/2})  \lVert v \rVert^2   \cdot 2^{2L - 2} (2L - 1) 
    \sum_{i = 0}^{2L - 2 }  \lVert W_1 \rVert^{i }  \sum_{k = 0}^{2L - 2} 
    \frac{1}{k + 2} \lVert v \rVert^k \\
    \leq & 2^{2L + 2} c_J d_{max} L^4  (1 + \mathcal{L}(W_1)^{1/2})  
    \sum_{i = 0}^{2L - 2 }  \lVert W_1 \rVert^{i }  \sum_{k = 0}^{2L - 2} 
    \frac{1}{k + 2} \lVert v \rVert^{k+2}
\end{align*}
 Let $C =  2^{2L + 2} c_J  d_{max} L^4 $. Then putting it all together gives
\begin{align*}
    \mathcal{L}(W_2) - \mathcal{L}(W_1) \leq &\langle \nabla \mathcal{L}(W_1), W_2 - W_1 \rangle \\
    &+ C \Big[ (1 + \mathcal{L}(W_1)^{1/2})  
    \sum_{i = 0}^{2L - 2 }  \lVert W_1 \rVert^{i } \Big]  \sum_{k = 0}^{2L - 2} 
    \frac{1}{k + 2} \lVert  W_2 - W_1 \rVert^{k+2}. 
\end{align*}    
\end{proof}

\subsection{Proof of Lemma \ref{lemma:onestep}}
Applying the descent lemma to $W_{t+1}$ and $W_t$ results in sums of powers of $\eta_t$. In the following, we set $\eta_t$ small enough by considering the coefficients of the $\eta_t^2$ terms. We then demonstrate that the terms corresponding to higher powers of $\eta_t$ are also controlled to be small, resulting in descent in function value.

For $W_{t+1} = W_t - \eta_t \nabla \mathcal{L}(W_t)$, we have by Lemma \ref{lemma:descent}
\begin{align*}
    \mathcal{L}(W_{t+1}) - \mathcal{L}(W_{t}) \leq &\langle \nabla \mathcal{L}(W_t), W_{t+1} - W_{t} \rangle + C (1 + \mathcal{L}(W_t)^{1/2})  
    \sum_{i = 0}^{2L - 2 }  \lVert W_t \rVert^{i }\sum_{k = 0}^{2L - 2} 
    \frac{1}{k + 2} \lVert  W_{t+1} - W_t \rVert^{k+2}  \\
    = & - \eta_t \lVert \nabla \mathcal{L}(W_t) \rVert^2 + C (1 + \mathcal{L}(W_t)^{1/2})  
    \sum_{i = 0}^{2L - 2 }  \lVert W_t \rVert^{i } \sum_{k = 0}^{2L - 2} 
    \frac{1}{k + 2} \lVert  \nabla \mathcal{L}(W_t) \rVert^{k+2} \eta_t^{k+2} .
\end{align*}
We observe the right hand side consists of $2L - 1$ nonnegative higher-order terms of the form $\eta_t^\gamma  \lVert  \nabla \mathcal{L}(W_t) \rVert^\gamma$, for $\gamma = 2,..., 2L$. 
We set 
\begin{equation*}
    \eta_t =  \frac{1}{2 d_{max}^{1/2} L C (1 + \mathcal{L}(W_t)^{1/2})  
    \sum_{i = 0}^{2L - 2 }  \lVert W_t \rVert^{i } } = \frac{1}{\rho (1 + \mathcal{L}(W_t)^{1/2})  
    \sum_{i = 0}^{2L - 2 }  \lVert W_t \rVert^{i } }
\end{equation*}
where $\rho = 2 d_{max}^{1/2} L C = 2^{2L + 3} c_J d_{max}^{3/2} L^5$. We want to demonstrate that $\eta_t$ is small enough such that each term is less than or equal to
\begin{equation*}
    \frac{\eta_t}{2L} \lVert  \nabla \mathcal{L}(W_t) \rVert^2.
\end{equation*}
The $\gamma = 2$ case of $\eta_t^2 \lVert \nabla \mathcal{L}(W_t) \rVert^2 $ is trivial. We have
\begin{align*}
    \eta_t^2 \lVert  \nabla \mathcal{L}(W_t) \rVert^2 \frac{C}{2} (1 + \mathcal{L}(W_t)^{1/2})  
    \sum_{i = 0}^{2L - 2 }  \lVert W_t \rVert^{i } \leq &   \eta_t \cdot   \frac{ \frac{C}{2}  (1 + \mathcal{L}(W_t)^{1/2})  
    \sum_{i = 0}^{2L - 2 }  \lVert W_t \rVert^{i } }{2LC (1 + \mathcal{L}(W_t)^{1/2})  
    \sum_{i = 0}^{2L - 2 }  \lVert W_t \rVert^{i } }  \lVert  \nabla \mathcal{L}(W_t) \rVert^2 \\
    \leq &  \frac{\eta_t}{2L}\lVert  \nabla \mathcal{L}(W_t) \rVert^2. 
\end{align*}

Now for $k = 1,..., 2L - 2$, where $k + 2 = \gamma$, we want to show the following  statement,
\begin{align*}
    \eta_t^{k + 2} \lVert  \nabla \mathcal{L}(W_t) \rVert^{k + 2} \frac{C}{k + 2}  (1 + \mathcal{L}(W_t)^{1/2})  
    \sum_{i = 0}^{2L - 2 }  \lVert W_t \rVert^{i }  \leq & \frac{\eta_t}{2L} \lVert  \nabla \mathcal{L}(W_t) \rVert^2.
    \end{align*}
We have 
\begin{align}
\label{eq:etagrad}
\eta_t \lVert \nabla \mathcal{L}(W_t) \rVert \overset{(\ref{eq:gradfbound})}{\leq} & \frac{(2 c_J \mathcal{L}(W_t))^{1/2} d^{1/2}_{max} L \sum_{k = 0}^{L - 1} \lVert W_t \rVert^k}{2 d_{max}^{1/2} L C (1 + \mathcal{L}(W_t)^{1/2})  
    \sum_{i = 0}^{2L - 2 }  \lVert W_t \rVert^{i } } \leq \frac{1}{C},
\end{align}
so we can bound powers of this term,
\begin{align*}
    \eta_t^{k + 2} \lVert  \nabla \mathcal{L}(W_t) & \rVert^{k + 2} \frac{C}{k + 2}  (1 + \mathcal{L}(W_t)^{1/2})  
    \sum_{i = 0}^{2L - 2 }  \lVert W_t \rVert^{i }  \\= & \Big(\eta_t \lVert \nabla \mathcal{L}(W_t) \rVert^{2}\Big) \Big(\eta_t  \frac{C}{k + 2}  (1 + \mathcal{L}(W_t)^{1/2})  
    \sum_{i = 0}^{2L - 2 }  \lVert W_t \rVert^{i } \Big) \eta_t^{k} \lVert  \nabla \mathcal{L}(W_t) \rVert^{k } \\
    \leq & \Big(\eta_t \lVert \nabla \mathcal{L}(W_t) \rVert^{2}\Big) \frac{1}{2Ld_{max}^{1/2}} \eta_t^{k} \lVert  \nabla \mathcal{L}(W_t) \rVert^{k } \\
    \overset{(\ref{eq:etagrad})}{\leq} & \Big(\eta_t \lVert \nabla \mathcal{L}(W_t) \rVert^{2}\Big) \frac{1}{2Ld_{max}^{1/2}} \frac{1}{C^k} \leq \frac{\eta_t}{2L} \lVert \nabla \mathcal{L}(W_t) \rVert^{2}.
\end{align*}

So we have shown the statement. We have
\begin{align*}
    \mathcal{L}(W_{t+1}) - \mathcal{L}(W_t) \leq & - \eta_t \lVert \nabla \mathcal{L}(W_t) \rVert^2 + \eta_t \frac{2L-1}{2L} \lVert \nabla \mathcal{L}(W_t)\rVert^2 \\
    = & - \frac{\eta_t}{2L} \lVert \nabla \mathcal{L} (W_t) \rVert^2.
\end{align*}
Rearranging and summing on both sides yields 
\begin{align*}
    \sum_{t = 0}^{T - 1}\frac{\eta_t}{2L} \lVert \nabla \mathcal{L} (W_t) \rVert^2 \leq & \sum_{t = 0}^{T - 1}\mathcal{L}(W_{t}) - \mathcal{L}(W_{t+1}) \\
    \sum_{t = 0}^{T - 1} \eta_t \lVert \nabla \mathcal{L} (W_t) \rVert^2 \leq & 2L (\mathcal{L}(W_0) - \mathcal{L}(W_T)).
    \end{align*}
Since $\mathcal{L}(W_T) \geq 0$, we have the following bound, which we will leverage multiple times in the proof:
\begin{equation}
    \label{eq:cumulativeappend}
    \sum_{t = 0}^{T - 1} \eta_t \lVert \nabla \mathcal{L} (W_t) \rVert^2 \leq 2L \mathcal{L}(W_0).
\end{equation}
We can therefore bound the minimum squared gradient norm as follows.
    \begin{align*}
    \min_{t=0,...,T - 1} \lVert \nabla \mathcal{L} (W_t) \rVert^2 \sum_{t = 0}^{T - 1} \eta_t \leq & 2L \mathcal{L}(W_0), \\
    \min_{t=0,...,T - 1} \lVert \nabla \mathcal{L} (W_t) \rVert^2 \leq & \frac{2L \mathcal{L}(W_0)}{ \sum_{t = 0}^{T - 1} \eta_t}.
\end{align*}

\subsection{Proof of Theorem \ref{thm:generalNN}}
To show convergence of the gradient norm to zero, we need to show that the sum of learning rates $\sum_{t = 0}^{T - 1} \eta_t$ diverges. We achieve this by lower bounding with another series, which we show diverges. First, we note that since $\mathcal{L}(W_t)$ is nonincreasing with $t$, we have $(\mathcal{L}(W_t))^{1/2} \leq (\mathcal{L}(W_0))^{1/2}$. Moreover, we have for all $0 \leq j \leq N$,
\begin{equation}
    \label{eq:boundoneplus}
    \lVert W \rVert^j \leq \lVert W \rVert^N + 1.
\end{equation}
So we have for $N = 2L - 2$,
\begin{align*}
    \sum_{i = 0}^{2L - 2 }  \lVert W_t \rVert^{i } \leq &  
    (2L - 1)(1 + \lVert W_t \rVert^N)  
\end{align*}
 Let $D_t = \rho (1 + \mathcal{L}(W_0)^{1/2})  
    (2L - 1)(1 + \lVert W_t \rVert^N) $, where $\rho = 2 d_{max}^{1/2} L C = 2^{2L + 3} d_{max}^{3/2} L^5$. Then we have

\begin{align*}
    \eta_t = \frac{1}{\rho (1 + \mathcal{L}(W_t)^{1/2})  
    \sum_{i = 0}^{2L - 2 }  \lVert W_t \rVert^{i } } \geq \frac{1}{D_t}
\end{align*}

To show the divergence of $\sum_{t = 0}^{\infty} \eta_t$, it suffices to show the divergence of $\sum_{t = 0}^{\infty} \frac{1}{D_t}$.  We therefore want to track the growth of $\lVert W_t \rVert^N$ as $t$ increases.

We first observe that for any $ 0 \leq j \leq N \leq 2L - 2$, that 
\begin{align}
\label{eq:exponentbound}
    \lVert W_t \rVert^{j} \eta_t^{\frac{j}{N}} \leq & \lVert W_t \rVert^{j} \Big( \frac{1}{\rho \lVert W_t \rVert^{N}}\Big)^{\frac{j}{N}} \leq 1,
\end{align}
where we also use that $\rho > 1$. We can therefore bound for any $N \leq 2L - 2$,
\begin{align*}
    \lVert W_{t+1} \rVert^N = & \lVert W_t - \eta_t \nabla \mathcal{L}(W_t) \rVert^N \\
    \leq & (\lVert W_t \rVert + \eta_t \lVert \nabla \mathcal{L}(W_t) \rVert)^N \\
    = & \sum_{k = 0}^N \binom{N}{k} \lVert W_t \rVert^{N - k} \eta_t^k \lVert \nabla \mathcal{L}(W_t) \rVert^k \\
    = & \lVert W_t \rVert^N + N \lVert W_t \rVert^{N -1} \eta_t \lVert \nabla \mathcal{L}(W_t) \rVert + \sum_{k = 2}^N \binom{N}{k} \lVert W_t \rVert^{N - k} \eta_t^k \lVert \nabla \mathcal{L}(W_t) \rVert^k \\
    = & \lVert W_t \rVert^N + N \lVert W_t \rVert^{N -1} \eta_t^{\frac{N - 1}{N}} \eta_t^{\frac{1}{N}} \lVert \nabla \mathcal{L}(W_t) \rVert + \sum_{k = 2}^N \binom{N}{k} \lVert W_t \rVert^{N - k} \eta_t^{k/2} \eta_t^{k/2} \lVert \nabla \mathcal{L}(W_t) \rVert^k \\
    \overset{\eta_t \leq 1}{\leq} & \lVert W_t \rVert^N + N \lVert W_t \rVert^{N -1} \eta_t^{\frac{N - 1}{N}} \eta_t^{\frac{1}{N}} \lVert \nabla \mathcal{L}(W_t) \rVert + \sum_{k = 2}^N \binom{N}{k} \lVert W_t \rVert^{N - k} \eta_t \eta_t^{k/2} \lVert \nabla \mathcal{L}(W_t) \rVert^k \\
    \overset{(\ref{eq:exponentbound})}{\leq} & \lVert W_t \rVert^N + N \eta_t^{1/N} \lVert \nabla \mathcal{L}(W_t) \rVert + \sum_{k = 2}^N \binom{N}{k} \eta_t^{k/2} \lVert \nabla \mathcal{L}(W_t) \rVert^k \\
    \lVert W_{t+1} \rVert^N - \lVert W_{t} \rVert^N \leq & N \eta_t^{1/N}\lVert \nabla \mathcal{L}(W_t) \rVert + \sum_{k = 2}^N \binom{N}{k} \eta_t^{k/2} \lVert \nabla \mathcal{L}(W_t) \rVert^k 
\end{align*}
When we sum from $t = 0$ to $T-1$ on both sides, we achieve a telescoping sum as follows,
\begin{align*}
    \sum_{t = 0}^{T - 1}\lVert W_{t+1} \rVert^N - \lVert W_{t} \rVert^N \leq & N  \sum_{t = 0}^{T - 1} \eta_t^{1/N} \lVert \nabla \mathcal{L}(W_t) \rVert + \sum_{t = 0}^{T - 1} \sum_{k = 2}^N \binom{N}{k} \eta_t^{k/2} \lVert \nabla \mathcal{L}(W_t) \rVert^k, \\
    \lVert W_T \rVert^N \leq & \lVert W_0 \rVert^N +  N  \sum_{t = 0}^{T - 1} \eta_t^{1/N} \lVert \nabla \mathcal{L}(W_t) \rVert + \sum_{t = 0}^{T - 1} \sum_{k = 2}^N \binom{N}{k}  \eta_t^{k/2} \lVert \nabla \mathcal{L}(W_t) \rVert^k. 
\end{align*}
The double sum last term can be bounded via (\ref{eq:cumulativeappend}) as follows (see also, Lemma \ref{lemma:lpnorms})
\begin{align}
\label{eq:WtN}
    \lVert W_T \rVert^N \leq & \lVert W_0 \rVert^N + N \sum_{t = 0}^{T - 1} \eta_t^{1/N} \lVert \nabla \mathcal{L}(W_t) \rVert +\sum_{k = 2}^N \binom{N}{k} (2L \mathcal{L}(W_0))^{k/2}. 
\end{align}
The crux of this proof is bounding $\sum_{t = 0}^{T - 1} \eta_t^{1/N} \lVert \nabla \mathcal{L}(W_t) \rVert$. When $N = 2$, which is true in the two-layer case, this term can be directly bounded using (\ref{eq:cumulativeappend}), leading to an $O(\sqrt{T})$ growth rate. However, for $N > 2$, we need to control the cumulative terms more carefully. For $N = 2L - 2$, we can decompose the sum as follows:
\begin{align*}
    \sum_{t = 0}^{T - 1} \eta_t^{1/N} \lVert \nabla \mathcal{L}(W_t) \rVert &= \sum_{t = 0}^{T - 1} \eta_t^{1/N - 1/2} \eta_t^{1/2} \lVert \nabla \mathcal{L}(W_t) \rVert = \sum_{t = 0}^{T - 1} \eta_t^{\frac{2 - N}{2N}} \eta_t^{1/2} \lVert \nabla \mathcal{L}(W_t) \rVert, \\
    \overset{\text{Cauchy-Schwarz}}{\leq} & \Big( \sum_{t = 0}^{T - 1} \eta_t^\frac{2 - N}{N} \Big)^{1/2} \Big( \sum_{t = 0}^{T - 1} \eta_t \lVert \nabla \mathcal{L}(W_t) \rVert^2 \Big)^{1/2}\\
    \overset{(\ref{eq:cumulativeappend})}{\leq} & \Big( \sum_{t = 0}^{T - 1} \eta_t^\frac{2 - N}{N} \Big)^{1/2} (2L \mathcal{L}(W_0))^{1/2}.
\end{align*}
We have
\begin{align*}
    \eta_t^\frac{2 - N}{N} = & \Big(\frac{1}{\eta_t} \Big)^{\frac{N - 2}{N}} = (\rho (1 + \mathcal{L}(W_t)^{1/2})  
    \sum_{i = 0}^{2L - 2 }  \lVert W_t \rVert^{i })^{\frac{N - 2}{N}} \\
    \overset{\text{Lemma \ref{lemma:onestep}}}{\leq} & (\rho (1 + \mathcal{L}(W_0)^{1/2}) 
    \sum_{i = 0}^{2L - 2 }  \lVert W_t \rVert^{i })^{\frac{N - 2}{N}} \\
    \overset{(\ref{eq:boundoneplus})}{\leq} & (\rho (1 + \mathcal{L}(W_0)^{1/2}) 
    \sum_{i = 0}^{2L - 2 }  (1 + \lVert W_t \rVert^N))^{\frac{N - 2}{N}} \\
    = & (\rho (1 + \mathcal{L}(W_0)^{1/2}) 
    (2L - 1)  (1 + \lVert W_t \rVert^N))^{\frac{N - 2}{N}} \\
\end{align*}
Plugging this back into (\ref{eq:WtN}), we have
\begin{align*}
    \lVert W_T \rVert^N \leq & \lVert W_0 \rVert^N + N \Big( \sum_{t = 0}^{T - 1} (\rho (1 + \mathcal{L}(W_0)^{1/2}) 
    (2L - 1)  (1 + \lVert W_t \rVert^N))^{\frac{N - 2}{N}}  \Big)^{1/2} (2L \mathcal{L}(W_0))^{1/2}  +\sum_{k = 2}^N \binom{N}{k} (2L \mathcal{L}(W_0))^{k/2}\\
    = & \lVert W_0 \rVert^N + N (\rho (1 + \mathcal{L}(W_0)^{1/2}) 
    (2L - 1) )^{\frac{N - 2}{2N}}\Big( \sum_{t = 0}^{T - 1}  (1 + \lVert W_t \rVert^N)^{\frac{N - 2}{N}}  \Big)^{1/2} (2L \mathcal{L}(W_0))^{1/2}  +\sum_{k = 2}^N \binom{N}{k} (2L \mathcal{L}(W_0))^{k/2}.
\end{align*}
We therefore have that $\lVert W_T \rVert^N$ is bounded as
\begin{align*}
    \lVert W_T \rVert^N \leq & A \Big( \sum_{t = 0}^{T - 1}  (1 + \lVert W_t \rVert^N)^{\frac{N - 2}{N}}  \Big)^{1/2} + B. 
\end{align*}
where $A = N (\rho (1 + \mathcal{L}(W_0)^{1/2}) 
    (2L - 1) )^{\frac{N - 2}{2N}}(2L \mathcal{L}(W_0))^{1/2}$ and $B = \lVert W_0 \rVert^N + \sum_{k = 2}^N \binom{N}{k} (2L \mathcal{L}(W_0))^{k/2}$.

So the growth of $\lVert W_T \rVert^N$ depends on the growth of the $t = 0,..., T-1$ iterates before; for example, as long as those can be shown to grow at most linearly, then $\lVert W_T \rVert^N$ will also be linear in $T$. In particular, we can show the growth is \textit{sublinear} by proving with induction that there exists $\alpha > 0$, $\beta > 0$, where for all $T > 0$
\begin{align*}
    \lVert W_T \rVert^T \leq \alpha T^{\frac{N}{N+2}} + \beta.
\end{align*}
Let $\beta = B$.
For the base case, we have
\begin{align*}
    \lVert W_0 \rVert^N \leq B.
\end{align*}

For the inductive step, suppose the hypothesis holds for $t \leq T - 1$. We have 
\begin{align*}
    \lVert W_T \rVert^N \leq & A \Big( \sum_{t = 0}^{T - 1} (1 + \alpha t^{\frac{N}{N + 2}} + \beta)^{\frac{N-2}{N}} \Big)^{1/2} + \beta \\
    \overset{\frac{N-2}{N} < 1}{\leq} & A \Big( \sum_{t = 0}^{T - 1} (1 + \beta)^{\frac{N-2}{N}} + (\alpha t^{\frac{N}{N + 2}} )^{\frac{N-2}{N}} \Big)^{1/2} + \beta \\
    = & A \Big( \sum_{t = 0}^{T - 1} ((1 + \beta)^{\frac{N-2}{N}} + \alpha^{\frac{N-2}{N}}  t^{\frac{N - 2}{N + 2}}) \Big)^{1/2} + \beta \\
    \leq & A \Big( T(1 + \beta)^{\frac{N-2}{N}} + T \cdot \alpha^{\frac{N-2}{N}}  T^{\frac{N - 2}{N + 2}} \Big)^{1/2} + \beta \\
    = & A \Big( T(1 + \beta)^{\frac{N-2}{N}} + \alpha^{\frac{N-2}{N}}  T^{\frac{2N}{N + 2}} \Big)^{1/2} + \beta \\
    \overset{\frac{2N}{N + 2} > 1}{\leq} & A \Big( ((1 + \beta)^{\frac{N-2}{N}} + \alpha^{\frac{N-2}{N}})  T^{\frac{2N}{N + 2}} \Big)^{1/2} + \beta \\
    = & A ((1 + \beta)^{\frac{N-2}{N}} + \alpha^{\frac{N-2}{N}})^{1/2}  T^{\frac{N}{N + 2}}  + \beta \\
    \leq & A ((1 + \beta)^{\frac{N-2}{2N}} + \alpha^{\frac{N-2}{2N}}) T^{\frac{N}{N + 2}}  + \beta.
\end{align*}
To achieve the proof, we need to show there exists $\alpha$ such that 
\begin{align*}
A ((1 + \beta)^{\frac{N-2}{2N}} + \alpha^{\frac{N-2}{2N}}) T^{\frac{N}{N + 2}}  + \beta \leq & \alpha T^{\frac{N}{N + 2}} + \beta \\
A ((1 + \beta)^{\frac{N-2}{2N}} + \alpha^{\frac{N-2}{2N}}) \leq \alpha.
\end{align*}
Since the right hand side grows linearly in $\alpha$ while the left hand side is sublinear, growing as $\alpha^{\frac{N-2}{2N}}$, we are guaranteed to find $\alpha^*$ large enough to satisfy this condition. We therefore have that $\sum_{t = 0}^{T - 1} \frac{1}{D_t} $ is lower bounded by a $p$-series that grows as $T^{1/L}$, which also lower bounds $\sum_{t=0}^{T-1} \eta_t$.
\begin{align*}
    \sum_{t=0}^{T-1} \eta_t \geq \sum_{t = 0}^{T - 1} \frac{1}{D_t} \geq \sum_{t = 0}^{T - 1} \frac{1}{\rho (1 + \mathcal{L}(W_0)^{1/2})  
    (2L - 1)(1 + \alpha^* t^{\frac{L - 1}{L}
    } + \beta )} = \Theta(T^{1/L}).
\end{align*}

We therefore have that 
\begin{align*}
    \min_{t=0,...,T - 1} \lVert \nabla \mathcal{L} (W_t) \rVert^2 \leq & \frac{2L \mathcal{L}(W_0)}{ \sum_{t = 0}^{T - 1} \eta_t} = O(\frac{1}{T^{1/L}}),
\end{align*}
which concludes the proof.

\subsection{Bounds on $z_{\ell}(W)$, $h_{\ell}(W)$, and $\delta_{\ell}(W)$}

\begin{lemma}
\label{lemma:zbound}
    For $z_{\ell}(W)$ defined in (\ref{eq:definez}), we have for all $W \in \mathbb{R}^{D_2 \times D_1}$
    \begin{equation*} 
        \lVert z_\ell(W) \rVert \leq d_{max}^{1/2} \sum_{i = 1}^{\ell} \lVert W \rVert^i.
    \end{equation*}
\end{lemma}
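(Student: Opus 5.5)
We prove Lemma~\ref{lemma:zbound} by induction on $\ell$, unrolling the recursion $z_\ell(W) = E_\ell[W] h_{\ell-1}(W)$, $h_{\ell-1}(W) = \sigma(z_{\ell-1}(W))$.

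Two elementary facts are needed. First, the extraction operator does not increase the norm: $\lVert E_\ell[W]\rVert = \lVert w_\ell \rVert \le \lVert W \rVert$, since $w_\ell$ is a diagonal block of $W$. Second, the operator norm is bounded by the Frobenius norm, so $\lVert E_\ell[W] h\rVert \le \lVert W\rVert \lVert h\rVert$.

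\textbf{Base case $\ell=1$.} We have $z_1(W) = E_1[W] x_i$. By Assumptions~\ref{assump:boundeddata} and~\ref{assump:bounded1},
\[
\lVert z_1(W)\rVert \le \lVert W\rVert \lVert x_i\rVert \le c_x d^{1/2}\lVert W\rVert \le d_{max}^{1/2}\lVert W\rVert,
\]
which is the claimed bound.

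\textbf{Inductive step.} Suppose the bound holds for $z_{\ell-1}$. The step requires a bound on $\lVert h_{\ell-1}(W)\rVert$. By the linear growth condition (\ref{eq:l0}) applied entrywise and the triangle inequality in $\mathbb{R}^{d_{\ell-1}}$,
\[
\lVert \sigma(z)\rVert \le \lVert c_0 \mathbf{1}\rVert + c_1\lVert z\rVert \le c_0 d_{\ell-1}^{1/2} + c_1 \lVert z\rVert .
\]
With $c_0, c_1 \le 1$ this gives
\[
\lVert h_{\ell-1}(W)\rVert \le d_{max}^{1/2} + \lVert z_{\ell-1}(W)\rVert \le d_{max}^{1/2}\Big(1 + \sum_{i=1}^{\ell-1}\lVert W\rVert^i\Big) = d_{max}^{1/2}\sum_{i=0}^{\ell-1}\lVert W\rVert^i .
\]
The same estimate covers $h_0 = x_i$, whose norm is at most $d_{max}^{1/2}$. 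Multiplying by $\lVert E_\ell[W]\rVert \le \lVert W\rVert$ then yields
\[
\lVert z_\ell(W)\rVert \le \lVert W\rVert \,\lVert h_{\ell-1}(W)\rVert \le d_{max}^{1/2}\sum_{i=1}^{\ell}\lVert W\rVert^i ,
\]
which closes the induction.

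The only step needing care is the constant term: the vector $c_0\mathbf{1}$ has norm $c_0 d_{\ell-1}^{1/2}$, not $c_0$. This is exactly why the factor $d_{max}^{1/2}$ appears, and we must check that it does not compound across layers. It does not, because the inductive hypothesis already carries the factor $d_{max}^{1/2}$ and the constant term contributes $d_{max}^{1/2}\cdot 1$ additively, so the factor stays at $d_{max}^{1/2}$ rather than growing to $d_{max}^{\ell/2}$. As a by-product, the intermediate estimate $\lVert h_{\ell-1}(W)\rVert \le d_{max}^{1/2}\sum_{m=0}^{\ell-1}\lVert W\rVert^m$ is the content of Lemma~\ref{lemma:hbound}, which is used later.
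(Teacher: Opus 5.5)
Your proof is correct and follows essentially the same route as the paper: induction on $\ell$, with the base case from Assumptions~\ref{assump:boundeddata} and~\ref{assump:bounded1}, and the inductive step combining $\lVert E_\ell[W]\rVert \le \lVert W\rVert$ with the entrywise linear-growth bound $\lVert\sigma(z_{\ell-1}(W))\rVert \le c_0 d_{\ell-1}^{1/2} + c_1 \lVert z_{\ell-1}(W)\rVert$ and $c_0, c_1 \le 1$. You also spell out the final absorption step that the paper leaves implicit: the constant $d_{max}^{1/2}$ becomes the $i=0$ term, and multiplying by $\lVert W\rVert$ shifts the sum to $i=1,\dots,\ell$.
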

\begin{proof}
For the base case $\ell = 1$, we have
\begin{align*}
    \lVert z_1(W) \rVert = \lVert E_1[W] x_i \rVert \leq \lVert W \rVert \lVert x_i \rVert \leq \lVert W \rVert c_x d^{1/2}.
\end{align*}
For the general case, we have
\begin{align*}
    \lVert z_\ell(W) \rVert = & \lVert E_{\ell}[W] \sigma(z_{\ell - 1}(W)) \rVert \\
    \leq & \lVert W \rVert \lVert \sigma(z_{\ell - 1}(W)) \rVert \\
    \overset{(\ref{eq:l0})}{\leq} & \lVert W \rVert (c_0 d^{1/2}_{\ell - 1} + c_1 \lVert z_{\ell - 1}(W)) \rVert \\
    \leq & d_{max}^{1/2} \sum_{i = 1}^{\ell} \lVert W \rVert^i.
\end{align*}
\end{proof}

\begin{lemma} 
\label{lemma:zchange}
For all $W_1, W_2 \in \mathbb{R}^{D_2 \times D_1}$ and $\ell = 1,...,L$, we have
    \begin{align*}
        \lVert z_{\ell}(W_2) - z_{\ell}(W_1) \rVert
    \leq &  d_{max}^{1/2} \lVert W_2 - W_1 \rVert \sum_{k = 0}^{\ell - 1} \sum_{j = 0}^{\ell - k - 1}  \lVert W_1 \rVert^j \lVert W_2 \rVert^{k}.
    \end{align*}
\end{lemma}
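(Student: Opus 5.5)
We will prove Lemma \ref{lemma:zchange} by induction on $\ell$, using a one-step recursion for $z_\ell(W_2)-z_\ell(W_1)$. Write $a = \lVert W_1 \rVert$, $b = \lVert W_2\rVert$, $\Delta = \lVert W_2 - W_1\rVert$, and let $S_\ell = \sum_{k=0}^{\ell-1}\sum_{j=0}^{\ell-k-1} a^j b^k$. The target is $\lVert z_\ell(W_2)-z_\ell(W_1)\rVert \le d_{max}^{1/2}\Delta S_\ell$.

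\textbf{Base case.} For $\ell=1$ we have $z_1(W_2)-z_1(W_1) = E_1[W_2-W_1]x_i$. Since $E_1$ extracts a block, $\lVert E_1[M]\rVert \le \lVert M\rVert$, and the spectral norm is bounded by the Frobenius norm. Assumptions \ref{assump:boundeddata} and \ref{assump:bounded1} then give
\[
\lVert z_1(W_2)-z_1(W_1)\rVert \le \Delta\, d^{1/2} \le d_{max}^{1/2}\Delta .
\]
This matches $S_1 = 1$.

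\textbf{Inductive step.} For $\ell \ge 2$, add and subtract $E_\ell[W_1] h_{\ell-1}(W_2)$:
\[
z_\ell(W_2)-z_\ell(W_1) = E_\ell[W_2-W_1]\,h_{\ell-1}(W_2) + E_\ell[W_1]\big(h_{\ell-1}(W_2)-h_{\ell-1}(W_1)\big).
\]
For the second term, apply (\ref{eq:l2}) with $c_2\le 1$ entrywise. This gives $\lVert h_{\ell-1}(W_2)-h_{\ell-1}(W_1)\rVert \le \lVert z_{\ell-1}(W_2)-z_{\ell-1}(W_1)\rVert$, so the second term is at most $a\, d_{max}^{1/2}\Delta S_{\ell-1}$ by the induction hypothesis.

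For the first term we need a bound on $\lVert h_{\ell-1}(W_2)\rVert$. By (\ref{eq:l0}) with $c_0,c_1\le1$ and Lemma \ref{lemma:zbound},
\[
\lVert h_{\ell-1}(W)\rVert \le d_{max}^{1/2} + \lVert z_{\ell-1}(W)\rVert \le d_{max}^{1/2}\sum_{k=0}^{\ell-1}\lVert W\rVert^k .
\]
This is presumably the content of Lemma \ref{lemma:hbound}, which I would establish alongside. The first term is therefore at most $d_{max}^{1/2}\Delta\sum_{k=0}^{\ell-1} b^k$.

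\textbf{Closing the induction.} It remains to verify the algebraic identity
\[
S_\ell = \sum_{k=0}^{\ell-1} b^k + a\,S_{\ell-1}.
\]
In $S_\ell$, split off the $j=0$ terms; these give $\sum_{k=0}^{\ell-1} b^k$. In the remaining terms ($j\ge1$), factor out one power of $a$. What is left is $\sum_{k=0}^{\ell-2}\sum_{j'=0}^{\ell-k-2} a^{j'}b^k = S_{\ell-1}$, since $j\ge1$ forces $k\le \ell-2$. Adding the two bounds therefore gives exactly $d_{max}^{1/2}\Delta S_\ell$, completing the induction.

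The only delicate points are two. First, operator norms must be bounded by Frobenius norms consistently: $\lVert E_\ell[W]v\rVert \le \lVert W\rVert\lVert v\rVert$. Second, the constant in the $h$-bound must absorb $c_0 d_{\ell-1}^{1/2}$ into $d_{max}^{1/2}$ without an extra factor. That works because $\lVert z_{\ell-1}\rVert$ already carries $d_{max}^{1/2}$, which is the main bookkeeping obstacle I expect.
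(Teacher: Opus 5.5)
Your proof is correct and follows essentially the paper's route. Both argue by induction on $\ell$: add and subtract to split $z_\ell(W_2)-z_\ell(W_1)$, bound the difference term by the Lipschitz property of $\sigma$ and the induction hypothesis, and bound the other term with Lemma~\ref{lemma:zbound} (the $h$-bound of Lemma~\ref{lemma:hbound}). The only difference is cosmetic: you pivot on $E_\ell[W_1]h_{\ell-1}(W_2)$ while the paper pivots on $E_\ell[W_2]h_{\ell-1}(W_1)$, so you close with $S_\ell=\sum_{k=0}^{\ell-1}b^k+aS_{\ell-1}$ rather than the paper's implicit $S_\ell=\sum_{j=0}^{\ell-1}a^j+bS_{\ell-1}$; both hold because $S_\ell$ is symmetric in $a$ and $b$.
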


\begin{proof}

We first show the bound for $\ell = 1$.
\begin{align*}
    \lVert z_{1}(W_2) - z_{1}(W_1) \rVert = & \lVert E_1[W_2] x_i - E_1[W_1] x_i \rVert \\
    \leq & c_x d^{1/2} \lVert W_2 - W_1 \rVert \leq d_{max}^{1/2} \lVert W_2 - W_1 \rVert.
\end{align*}

Then for general $\ell = 2,...,L$, we have
\begin{align*}
    \lVert z_{\ell}(W_2) - z_{\ell}(W_1) \rVert = & \lVert E_\ell[W_2] \sigma(z_{\ell-1}(W_2)) - E_\ell[W_1] \sigma(z_{\ell-1}(W_1)) \rVert \\
    \leq & \lVert W_2 \rVert \lVert \sigma(z_{\ell-1}(W_2)) - \sigma(z_{\ell-1}(W_1)) \rVert + \lVert \sigma(z_{\ell-1}(W_1)) \rVert \lVert W_2 - W_1 \rVert \\
    \leq & c_2 \lVert W_2 \rVert \lVert z_{\ell-1}(W_2) - z_{\ell-1}(W_1) \rVert +  \lVert \sigma(z_{\ell-1}(W_1)) \rVert \lVert W_2 - W_1 \rVert \\
    \leq & \lVert W_2 \rVert \lVert z_{\ell-1}(W_2) - z_{\ell-1}(W_1) \rVert +  (d_{\ell - 1}^{1/2} + \lVert z_{\ell - 1}(W_1) \rVert ) \lVert W_2 - W_1 \rVert \\
    \overset{\text{Lemma \ref{lemma:zbound}}}{\leq} & \lVert W_2 \rVert \lVert z_{\ell-1}(W_2) - z_{\ell-1}(W_1) \rVert +  d_{max}^{1/2} \Big(\sum_{i = 0}^{\ell - 1} \lVert W_1 \rVert^i \Big) \lVert W_2 - W_1 \rVert \\
    \leq & d_{max}^{1/2} \lVert W_2 - W_1 \rVert \sum_{k = 0}^{\ell - 1} \sum_{j = 0}^{\ell - k - 1}  \lVert W_1 \rVert^j \lVert W_2 \rVert^{k} 
\end{align*}

\end{proof}

\begin{lemma}
    \label{lemma:hbound}
For $h_{\ell}(W)$ defined in (\ref{eq:defineh}), we have for all $W \in \mathbb{R}^{D_2 \times D_1}$,
    \begin{equation*}
        \lVert h_\ell(W) \rVert \leq d_{max}^{1/2} \sum_{k = 0}^{\ell}   \lVert W \rVert^k.
    \end{equation*}
\end{lemma}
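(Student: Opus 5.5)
The bound on $\lVert h_\ell(W)\rVert$ comes from combining the linear-growth property of $\sigma$ with Lemma \ref{lemma:zbound}, treating $\ell = 0$ separately since $h_0$ is the raw input rather than an activation output.

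For $\ell = 0$ we have $h_0(W) = x_i$. Assumptions \ref{assump:boundeddata} and \ref{assump:bounded1} give $\lVert x_i\rVert \le c_x d^{1/2} \le d_{max}^{1/2}$. The claimed right-hand side is $d_{max}^{1/2}\lVert W\rVert^0 = d_{max}^{1/2}$, so the case holds directly.

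For $\ell \ge 1$ we have $h_\ell(W) = \sigma(z_\ell(W))$ with $z_\ell(W)\in\mathbb{R}^{d_\ell}$. Apply (\ref{eq:l0}) entrywise, $|\sigma(z)_j| \le c_0 + c_1|z_j|$. Taking the Euclidean norm and using the triangle inequality on the vector $(c_0,\dots,c_0) + c_1(|z_1|,\dots,|z_{d_\ell}|)$ gives
\[
\lVert h_\ell(W)\rVert \le c_0 d_\ell^{1/2} + c_1\lVert z_\ell(W)\rVert \le d_{max}^{1/2} + \lVert z_\ell(W)\rVert ,
\]
where the second step uses $c_0, c_1 \le 1$ and $d_\ell \le d_{max}$. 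This is the same estimate already used inside the proof of Lemma \ref{lemma:zbound}.

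Now invoke Lemma \ref{lemma:zbound}, which gives $\lVert z_\ell(W)\rVert \le d_{max}^{1/2}\sum_{i=1}^{\ell}\lVert W\rVert^i$. Substituting,
\[
\lVert h_\ell(W)\rVert \le d_{max}^{1/2} + d_{max}^{1/2}\sum_{i=1}^{\ell}\lVert W\rVert^i = d_{max}^{1/2}\sum_{k=0}^{\ell}\lVert W\rVert^k ,
\]
which is the claimed bound.

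No step here is a real obstacle. The only point needing care is that the constant $d_\ell^{1/2}$ from applying $c_0$ entrywise is absorbed into $d_{max}^{1/2}$, which supplies the $k=0$ term.

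If one wanted to avoid relying on Lemma \ref{lemma:zbound}, the alternative is a direct induction. It would use $\lVert z_\ell(W)\rVert \le \lVert E_\ell[W]\rVert\,\lVert h_{\ell-1}(W)\rVert \le \lVert W\rVert\,\lVert h_{\ell-1}(W)\rVert$, since extracting a block does not increase the Frobenius norm. Combined with the displayed linear bound, this gives $\lVert h_\ell\rVert \le d_{max}^{1/2} + \lVert W\rVert\, d_{max}^{1/2}\sum_{k=0}^{\ell-1}\lVert W\rVert^k$, which telescopes to the same expression.
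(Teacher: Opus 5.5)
Your proof is correct and matches the paper's argument. The $\ell=0$ case follows from $\lVert x_i\rVert \le c_x d^{1/2}$, and for $\ell \ge 1$ you combine the entrywise linear bound $\lVert \sigma(z_\ell(W))\rVert \le c_0 d_\ell^{1/2} + c_1\lVert z_\ell(W)\rVert$ with Lemma \ref{lemma:zbound}, absorbing $d_\ell^{1/2}$ into $d_{max}^{1/2}$, exactly as the paper does; your alternative induction via $\lVert z_\ell(W)\rVert \le \lVert W\rVert\,\lVert h_{\ell-1}(W)\rVert$ is also valid and simply inlines the proof of Lemma \ref{lemma:zbound}.
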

\begin{proof}
    For the base case $\ell = 0$, we have
    \begin{equation*}
        \lVert h_0(W) \rVert = \lVert x_i \rVert \leq d^{1/2} c_x \leq d^{1/2}.
    \end{equation*}
For the general case, we have by Lemma \ref{lemma:zbound},
\begin{align*}
    \lVert h_\ell(W) \rVert = & \lVert  \sigma(z_\ell(W)) \rVert \\
    \overset{\text{Lemma \ref{lemma:elementwisesigma}}}{\leq} & c_0 d_\ell^{1/2} + c_1 \lVert z_\ell(W) \rVert, \\
    \leq & d_\ell^{1/2} + d_{max}^{1/2} \sum_{i = 1}^{\ell} \lVert W \rVert^i. 
\end{align*}
\end{proof}

\begin{lemma}
\label{lemma:hchange}
    For all $W_1, W_2 \in \mathbb{R}^{D_2 \times D_1}$ and 
    $\ell = 0$, we have 
    \begin{equation*}
    \lVert h_0(W_2) - h_0(W_1) \rVert = 0.
\end{equation*}
For $\ell = 1,...,L$, we have
    \begin{equation*}
        \lVert h_\ell(W_2) - h_\ell(W_1) \rVert \leq d_{max}^{1/2} \lVert W_2 - W_1 \rVert \sum_{k = 0}^{\ell - 1} \sum_{j = 0}^{\ell - k - 1} \lVert W_1 \rVert^j   \lVert W_2 \rVert^{k}.
    \end{equation*}
\end{lemma}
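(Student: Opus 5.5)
The statement to prove is Lemma \ref{lemma:hchange}. The plan is to reduce it directly to Lemma \ref{lemma:zchange}, using the Lipschitz continuity of $\sigma$ from Assumption \ref{assump:lipschitz}.

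\textbf{Case $\ell = 0$.} By definition $h_0(W) = x_i$ for every $W$, so $h_0(W_2) - h_0(W_1) = x_i - x_i = 0$.

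\textbf{Case $\ell = 1,\dots,L$.} By (\ref{eq:defineh}),
\[
h_\ell(W_2) - h_\ell(W_1) = \sigma(z_\ell(W_2)) - \sigma(z_\ell(W_1)),
\]
where $\sigma$ acts entrywise. Applying (\ref{eq:l2}) to each coordinate gives $|\sigma(a_j) - \sigma(b_j)| \le c_2 |a_j - b_j|$. Squaring and summing over coordinates yields
\[
\lVert h_\ell(W_2) - h_\ell(W_1) \rVert \le c_2 \lVert z_\ell(W_2) - z_\ell(W_1) \rVert .
\]
Assumption \ref{assump:bounded1} gives $c_2 \le 1$, so
\[
\lVert h_\ell(W_2) - h_\ell(W_1) \rVert \le \lVert z_\ell(W_2) - z_\ell(W_1) \rVert .
\]
Now invoke Lemma \ref{lemma:zchange} at the same index $\ell$. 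Its right-hand side is exactly $d_{max}^{1/2}\lVert W_2 - W_1\rVert \sum_{k=0}^{\ell-1}\sum_{j=0}^{\ell-k-1}\lVert W_1\rVert^j\lVert W_2\rVert^k$, which is the claimed bound.

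\textbf{Main obstacle.} The only point needing care is the passage from the scalar Lipschitz bound (\ref{eq:l2}) to the Euclidean norm on vectors. It is handled by the coordinatewise squaring argument above, and it is the same step already used in the proof of Lemma \ref{lemma:lip_f}. Once this is in place the lemma is a one-line corollary of Lemma \ref{lemma:zchange}; no new induction is needed.
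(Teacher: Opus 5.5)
Your proposal is correct and follows the paper's proof exactly. The $\ell = 0$ case holds because $h_0 = x_i$. For $\ell \ge 1$, you bound $\lVert h_\ell(W_2) - h_\ell(W_1)\rVert$ by $c_2\lVert z_\ell(W_2) - z_\ell(W_1)\rVert$ with $c_2 \le 1$, then apply Lemma \ref{lemma:zchange} at the same index; your coordinatewise justification is what the paper's helper lemma on elementwise Lipschitz extensions provides.
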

\begin{proof}

For $\ell = 0$, we have
\begin{equation*}
    \lVert h_0(W_2) - h_0(W_1) \rVert = 0.
\end{equation*}

For $\ell = 1,.., L$
\begin{align*}
    \lVert h_\ell(W_2) - h_\ell(W_1) \rVert \leq & \lVert \sigma(z_\ell(W_2)) - \sigma(z_\ell(W_1)) \rVert, \\
    \leq & c_2 \lVert z_\ell(W_2) - z_\ell(W_1) \rVert, \\
    \overset{\text{Lemma \ref{lemma:zchange}}}{\leq} & d_{max}^{1/2} \lVert W_2 - W_1 \rVert \sum_{k = 0}^{\ell - 1} \sum_{j = 0}^{\ell - k - 1}  \lVert W_1 \rVert^j \lVert W_2 \rVert^{k}.
\end{align*}
    
\end{proof}

\begin{lemma}
\label{lemma:deltabound}
For $\delta_\ell(W)$ defined in (\ref{eq:deltadefine}), we have for all $W \in \mathbb{R}^{D_2 \times D_1}$,
    \begin{equation*}
        \lVert \delta_\ell(W) \rVert \leq \lVert W \rVert^{L - \ell}.
    \end{equation*}
\end{lemma}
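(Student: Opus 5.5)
The plan is a backward induction on $\ell$, running from $\ell = L$ down to $\ell = 1$. This mirrors the recursive definition (\ref{eq:deltadefine}) of $\delta_\ell(W)$.

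For the base case $\ell = L$, we have $\delta_L(W) = \sigma'(z_L(W)) \in \mathbb{R}^{d_L} = \mathbb{R}$, since $d_L = 1$. Assumption \ref{assump:lipschitz} gives $|\sigma'(w)| \leq c_2$ for all $w$, and Assumption \ref{assump:bounded1} gives $c_2 \leq 1$. Hence $\lVert \delta_L(W) \rVert \leq 1 = \lVert W \rVert^0$. It matters that the output dimension is one: otherwise the base case would only give a factor $d_L^{1/2}$.

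For the inductive step, suppose $\lVert \delta_{\ell+1}(W) \rVert \leq \lVert W \rVert^{L-\ell-1}$. I would bound the Hadamard product entrywise. For vectors $a, b$ with $|a_j| \leq c_2 \leq 1$,
\[
\lVert a \odot b \rVert^2 = \sum_j a_j^2 b_j^2 \leq \sum_j b_j^2 = \lVert b \rVert^2 ,
\]
so $\lVert a \odot b \rVert \leq \lVert b \rVert$. This is the natural place to cite or prove a small helper fact (presumably the role of Lemma \ref{lemma:elementwisesigma} or Lemma \ref{hlemma:odot} in the appendix). Applying it with $a = \sigma'(z_\ell(W))$ gives
\[
\lVert \delta_\ell(W) \rVert \leq \lVert E_{\ell+1}[W]^T \delta_{\ell+1}(W) \rVert \leq \lVert E_{\ell+1}[W] \rVert \, \lVert \delta_{\ell+1}(W) \rVert .
\]
Here I use that the operator norm is at most the Frobenius norm, and that transposition preserves the Frobenius norm.

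Next, $E_{\ell+1}[W] = w_{\ell+1}$ is a diagonal block of $W$, and every other entry of $W$ enters $\lVert W \rVert_F^2$ nonnegatively. Therefore $\lVert E_{\ell+1}[W] \rVert \leq \lVert W \rVert$. Combining this with the inductive hypothesis yields $\lVert \delta_\ell(W) \rVert \leq \lVert W \rVert^{L-\ell}$, which closes the induction.

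There is no serious obstacle. The only points needing care are the two norm facts: the Hadamard bound, which uses $|\sigma'| \leq 1$, and the submultiplicativity $\lVert M^T v \rVert \leq \lVert M \rVert_F \lVert v \rVert$. One should also note that the base case relies on $d_L = 1$.
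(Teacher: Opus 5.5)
Your proof is correct and takes the same route as the paper. It uses the same backward induction, the same base case $\lVert \delta_L(W) \rVert \le c_2 \le 1$ (which relies on $d_L = 1$), and the same inductive step: the Hadamard bound from $|\sigma'| \le c_2 \le 1$ followed by $\lVert E_{\ell+1}[W]^T \delta_{\ell+1}(W) \rVert \le \lVert W \rVert \, \lVert \delta_{\ell+1}(W) \rVert$. The only cosmetic difference is that the paper cites Lemma~\ref{lemma:hademard} for the Hadamard step (applied to $\sigma'$ with $c_0 = c_2$, $c_1 = 0$), whereas you verify it directly.
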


\begin{proof}

For the base case $\ell = L$, we have the following due to Assumption \ref{assump:lipschitz} and $d_L = 1$, 
\begin{align*}
    \lVert \delta_L (W) \rVert = & \lVert \sigma'(z_L (W)) \rVert \leq c_2 \leq 1.
\end{align*}
For general $\ell$, we have
    \begin{align*}
        \lVert \delta_\ell(W) \rVert  = &\lVert \sigma'(z_\ell(W)) \odot (E_{\ell + 1}[W]^T \delta_{\ell + 1}(W)) \rVert, \\
        \overset{\text{Lemma \ref{lemma:hademard}}}{\leq} &  c_2 \lVert E_{\ell + 1}[W]^T \delta_{\ell + 1}(W)  \rVert, \\
        \leq & \lVert W \rVert \lVert \delta_{\ell + 1}(W) \rVert,\\
        \leq &  \lVert W \rVert^{L - \ell}.
    \end{align*}
\end{proof}

\begin{lemma} 
\label{lemma:delta_change}
For all $W_1, W_2 \in \mathbb{R}^{D_2 \times D_1}$, we have
\begin{align*}
    \lVert \delta_L(W_2) - \delta_L(W_1) \rVert \leq &  \lVert z_L(W_2) - z_L(W_1) \rVert \leq d_{max}^{1/2} \lVert W_2 - W_1 \rVert \sum_{k = 0}^{L - 1} \sum_{j = 0}^{L - k - 1}  \lVert W_1 \rVert^j \lVert W_2 \rVert^{k}.
\end{align*}
and for $\ell = 1,.., L - 1$, we have
\begin{align*}
    \lVert \delta_\ell(W_2) - \delta_\ell(W_1) \rVert \leq & d_{max}^{1/2} \lVert W_2 - W_1 \rVert \Big[\sum_{k = \ell}^{L}  \sum_{i = k - \ell}^{2k - 1 - \ell}  \lVert W_2 \rVert^i \sum_{j = L - k}^{L - i + k - \ell - 1} \lVert W_1 \rVert^j  + \sum_{k = 0}^{L-\ell-1} \lVert W_2 \rVert^{k}  \lVert W_1 \rVert^{L - k - \ell - 1} \Big]. 
\end{align*}
\end{lemma}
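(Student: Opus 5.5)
We prove Lemma \ref{lemma:delta_change} by backward induction on $\ell$, from $\ell = L$ down to $\ell = 1$, mirroring the recursion (\ref{eq:deltadefine}).

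\textbf{Base case $\ell = L$.} We have $\delta_L(W) = \sigma'(z_L(W))$ with $d_L = 1$. The Lipschitz smoothness (\ref{eq:l3}) with $c_3 \leq 1$ gives $\lVert \delta_L(W_2) - \delta_L(W_1)\rVert \leq \lVert z_L(W_2) - z_L(W_1)\rVert$. Lemma \ref{lemma:zchange} with $\ell = L$ then yields the stated bound.

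\textbf{Recursive step $\ell < L$.} We split the difference of Hadamard products as
\begin{align*}
\delta_\ell(W_2) - \delta_\ell(W_1) = &\big(\sigma'(z_\ell(W_2)) - \sigma'(z_\ell(W_1))\big) \odot \big(E_{\ell+1}[W_2]^T \delta_{\ell+1}(W_2)\big) \\
&+ \sigma'(z_\ell(W_1)) \odot \big(E_{\ell+1}[W_2]^T\delta_{\ell+1}(W_2) - E_{\ell+1}[W_1]^T \delta_{\ell+1}(W_1)\big).
\end{align*}
For the first piece we use $\lVert a \odot b\rVert \leq \lVert a\rVert \lVert b \rVert$ (valid for the Frobenius norm), the bound $c_3 \leq 1$, Lemma \ref{lemma:zchange}, and Lemma \ref{lemma:deltabound}. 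This gives
\[
\lVert W_2\rVert^{L-\ell} \, d_{max}^{1/2}\lVert W_2 - W_1\rVert \sum_{k,j}\lVert W_1\rVert^j \lVert W_2\rVert^k,
\]
where the double sum is the one from Lemma \ref{lemma:zchange} at layer $\ell$.

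For the second piece we use $|\sigma'| \leq c_2 \leq 1$ and add and subtract $E_{\ell+1}[W_2]^T\delta_{\ell+1}(W_1)$. This gives
\[
\lVert W_2\rVert \, \lVert \delta_{\ell+1}(W_2) - \delta_{\ell+1}(W_1)\rVert + \lVert W_2 - W_1\rVert \, \lVert W_1\rVert^{L-\ell-1},
\]
where the last factor again comes from Lemma \ref{lemma:deltabound}.

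\textbf{Unrolling.} Writing $a_\ell = \lVert \delta_\ell(W_2) - \delta_\ell(W_1)\rVert$, we get a linear recursion $a_\ell \leq \lVert W_2\rVert a_{\ell+1} + b_\ell$. Unrolling it gives
\[
a_\ell \leq \sum_{k=\ell}^{L} \lVert W_2\rVert^{k-\ell} b_k,
\]
with $b_L$ equal to the base case bound.

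Two groups of terms result. The terms $\lVert W_2\rVert^{k-\ell-1}\lVert W_1\rVert^{L-k}\lVert W_2 - W_1\rVert$, for $k = \ell+1,\dots,L$, reindex to the second sum $\sum_{k=0}^{L-\ell-1}\lVert W_2\rVert^k \lVert W_1\rVert^{L-k-\ell-1}$ in the statement. The terms coming from the $\sigma'$-differences at layers $k = \ell,\dots,L$ carry the prefactor $\lVert W_2\rVert^{k-\ell}\lVert W_2\rVert^{L-k}$ (for $k = L$ the $\delta$-bound factor is $1$). These combine with the double sum of Lemma \ref{lemma:zchange} at layer $k$, and after shifting the $W_2$ exponent they produce the triple sum in the statement.

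\textbf{Main obstacle.} The hardest part is this final index bookkeeping: matching the exact summation ranges ($i$ from $k-\ell$ to $2k-1-\ell$, and $j$ up to $L-i+k-\ell-1$) after merging exponents. The plan is to substitute $i = k' + (\text{shift})$ and check the range endpoints.

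If the ranges do not match exactly, we would instead verify that each generated monomial $\lVert W_2\rVert^i\lVert W_1\rVert^j$ lies within the claimed ranges. Since all terms are nonnegative, an upper bound by a superset of monomials suffices. The statement's lower limit $j \geq L-k$ looks suspicious, so we would check it carefully and loosen it to $j \geq 0$ if needed, which is harmless for later use in Lemma \ref{lemma:lip_nablaf}.
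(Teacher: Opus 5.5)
Your overall scheme (backward recursion on $\ell$, unrolling, then Lemma \ref{lemma:zchange} and Lemma \ref{lemma:deltabound}) is the paper's. The gap is that the particular Hadamard split you chose does not produce the stated bound. Your claim that the $\sigma'$-difference terms ``produce the triple sum in the statement'' after shifting exponents is false.

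Write $u_m = E_{\ell+1}[W_m]^T\delta_{\ell+1}(W_m)$. You pair $\sigma'(z_\ell(W_2))-\sigma'(z_\ell(W_1))$ with $u_2$, so that term carries $\lVert u_2\rVert\le\lVert W_2\rVert^{L-\ell}$. After unrolling, your first group is $\lVert W_2\rVert^{L-\ell}\sum_{k=\ell}^{L}\lVert z_k(W_2)-z_k(W_1)\rVert$. Its monomials are $\lVert W_2\rVert^{L-\ell+i}\lVert W_1\rVert^{j}$ with $i+j\le k-1$, whereas the statement has $\lVert W_2\rVert^{k-\ell+i}\lVert W_1\rVert^{L-k+j}$. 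No reindexing reconciles these, and your inequality does not imply the stated one.

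Concretely, take $L=2$, $\ell=1$ and write both in the form $d_{max}^{1/2}\lVert W_2-W_1\rVert[\cdots]$. Your bracket is $1+2\lVert W_2\rVert+\lVert W_1\rVert\lVert W_2\rVert+\lVert W_2\rVert^2$. The statement's bracket is $1+\lVert W_1\rVert+\lVert W_2\rVert+\lVert W_1\rVert\lVert W_2\rVert+\lVert W_2\rVert^2$. Yours is larger by $\lVert W_2\rVert-\lVert W_1\rVert$.

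Neither fallback rescues this:
\begin{itemize}
\item Each of your monomials does lie in the statement's $k=L$ block, but with the wrong multiplicity. For example, $\lVert W_2\rVert^{L-\ell}$ occurs $L-\ell+1$ times in your sum and only once in the statement. So a ``superset of monomials'' argument is invalid.
\item Relaxing $j\ge L-k$ to $j\ge 0$ proves a weaker lemma, not this one.
\end{itemize}

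The fix is to split the other way,
\[
\delta_\ell(W_2)-\delta_\ell(W_1)=\sigma'(z_\ell(W_2))\odot(u_2-u_1)+\big(\sigma'(z_\ell(W_2))-\sigma'(z_\ell(W_1))\big)\odot u_1 .
\]
Now the $\sigma'$-difference is paired with $\lVert u_1\rVert\le\lVert W_1\rVert\,\lVert\delta_{\ell+1}(W_1)\rVert\le\lVert W_1\rVert^{L-\ell}$. Your treatment of $u_2-u_1$ (add and subtract $E_{\ell+1}[W_2]^T\delta_{\ell+1}(W_1)$) is unchanged. The recursion becomes
\[
a_\ell\le\lVert W_2\rVert a_{\ell+1}+\lVert W_1\rVert^{L-\ell-1}\lVert W_2-W_1\rVert+\lVert W_1\rVert^{L-\ell}\lVert z_\ell(W_2)-z_\ell(W_1)\rVert .
\]
This unrolls to your second group plus $\sum_{k=\ell}^{L}\lVert W_2\rVert^{k-\ell}\lVert W_1\rVert^{L-k}\lVert z_k(W_2)-z_k(W_1)\rVert$.

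Substituting $i\mapsto i+k-\ell$ and $j\mapsto j+L-k$ in the double sum of Lemma \ref{lemma:zchange} then gives exactly the stated ranges. The lower limit $j\ge L-k$ you found suspicious is precisely the factor $\lVert W_1\rVert^{L-k}$. This is the paper's proof.
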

\begin{proof}
For the base case $\ell = L$, we have
\begin{align*}
\lVert \delta_L(W_2) - \delta_L(W_1) \rVert = & \lVert \sigma'(z_L(W_2)) - \sigma'(z_L(W_1)) \rVert \\
\overset{(\ref{eq:l3})}{\leq} & c_3 \lVert z_L(W_2) - z_L(W_1) \rVert, \\
\overset{\text{Lemma \ref{lemma:zchange}}}{\leq} & d_{max}^{1/2} \lVert W_2 - W_1 \rVert \sum_{k = 0}^{L - 1} \sum_{j = 0}^{L - k - 1}  \lVert W_1 \rVert^j \lVert W_2 \rVert^{k}.
\end{align*}
For $\ell \leq L - 1$, we have
\begin{align*}
    \lVert \delta_\ell(W_2) - \delta_\ell(W_1) \rVert = & \lVert \sigma'(z_\ell(W_2)) \odot (E_{\ell + 1}[W_2]^T \delta_{\ell + 1}(W_2)) - \sigma'(z_\ell(W_1)) \odot (E_{\ell + 1}[W_1]^T \delta_{\ell + 1}(W_1))\rVert \\
    \leq & \lVert \sigma'(z_{\ell}(W_2)) 
    \odot (E_{\ell + 1}[W_2]^T \delta_{\ell + 1}(W_2) - E_{\ell + 1}[W_1]^T \delta_{\ell + 1}(W_1))\rVert   \\
    &+ \lVert \sigma'(z_\ell(W_2)) - \sigma'(z_\ell(W_1)) \rVert \lVert E_{\ell + 1}[W_1]^T \delta_{\ell + 1}(W_1) \rVert \\
    \leq & c_2 \lVert E_{\ell + 1}[W_2]^T \delta_{\ell + 1}(W_2) - E_{\ell + 1}[W_1]^T \delta_{\ell + 1}(W_1)\rVert   \\
    &+ \lVert \sigma'(z_\ell(W_2)) - \sigma'(z_\ell(W_1)) \rVert \lVert E_{\ell + 1}[W_1]^T \delta_{\ell + 1}(W_1) \rVert \\
    \overset{(\ref{eq:l3})}{\leq} & c_2 \lVert E_{\ell + 1}[W_2]^T \delta_{\ell + 1}(W_2) - E_{\ell + 1}[W_1]^T \delta_{\ell + 1}(W_1)\rVert   \\
    &+ c_3 \lVert z_\ell(W_2) - z_\ell(W_1) \rVert \lVert W_1 \rVert \lVert \delta_{\ell + 1}(W_1) \rVert \\
    \leq &  \lVert \delta_{\ell + 1}(W_1)\rVert \lVert W_2 - W_1 \rVert +  \lVert W_2 \rVert \lVert \delta_{\ell + 1}(W_2) - \delta_{\ell + 1}(W_1) \rVert   \\
    &+ \lVert z_\ell(W_2) - z_\ell(W_1) \rVert \lVert W_1 \rVert \lVert \delta_{\ell + 1}(W_1) \rVert \\
    \overset{\text{Lemma \ref{lemma:deltabound}}}{\leq} & \lVert W_1 \rVert^{L - \ell - 1} \lVert W_2 - W_1 \rVert + \lVert W_2 \rVert \lVert \delta_{\ell + 1}(W_2) - \delta_{\ell + 1}(W_1) \rVert   \\
    &+ \lVert z_\ell(W_2) - z_\ell(W_1) \rVert \lVert W_1 \rVert \lVert W_1 \rVert^{L - \ell - 1} \\
    = & \lVert W_1 \rVert^{L - \ell - 1} \lVert W_2 - W_1 \rVert + \lVert W_2 \rVert \lVert \delta_{\ell + 1}(W_2) - \delta_{\ell + 1}(W_1) \rVert   \\
    &+ \lVert z_\ell(W_2) - z_\ell(W_1) \rVert  \lVert W_1 \rVert^{L - \ell } 
\end{align*}
Solving the recursive relationship leads to
\begin{align*}
    \lVert \delta_\ell(W_2) - \delta_\ell(W_1) \rVert \leq & \sum_{k = \ell}^{L} \lVert W_2 \rVert^{k - \ell} \lVert z_k (W_2) - z_k (W_1) \rVert  \lVert W_1 \rVert^{L - k} + \sum_{k = \ell}^{L-1} \lVert W_2 \rVert^{k - \ell} \lVert W_1 \rVert^{L - k - 1} \lVert W_2 - W_1 \rVert\\
    \overset{\text{Lemma \ref{lemma:zchange}}}{\leq} & d_{max}^{1/2} \lVert W_2 - W_1 \rVert \sum_{k = \ell}^{L}  \lVert W_2 \rVert ^{k - \ell}( \sum_{i = 0}^{k - 1} \sum_{j = 0}^{k - i - 1} \lVert W_1 \rVert^j \lVert W_2 \rVert^i) \lVert W_1 \rVert^{L - k}  \\
    &+ \sum_{k = \ell}^{L-1} \lVert W_2 \rVert^{k - \ell}  \lVert W_1 \rVert^{L - k - 1} \lVert W_2 - W_1 \rVert \\
    = & d_{max}^{1/2} \lVert W_2 - W_1 \rVert \sum_{k = \ell}^{L}  \sum_{i = 0}^{k - 1} \sum_{j = 0}^{k - i - 1} \lVert W_1 \rVert^{L - k + j} \lVert W_2 \rVert^{k - \ell + i}  \\
    &+ \sum_{k = \ell}^{L-1} \lVert W_2 \rVert^{k - \ell}  \lVert W_1 \rVert^{L - k - 1} \lVert W_2 - W_1 \rVert \\
    = & d_{max}^{1/2} \lVert W_2 - W_1 \rVert \sum_{k = \ell}^{L}  \sum_{i = k - \ell}^{2k - 1 - \ell}  \lVert W_2 \rVert^i \sum_{j = L - k}^{L - i + k - \ell - 1} \lVert W_1 \rVert^j \\
    & + \sum_{k = \ell}^{L-1} \lVert W_2 \rVert^{k - \ell}  \lVert W_1 \rVert^{L - k - 1} \lVert W_2 - W_1 \rVert \\
    = & d_{max}^{1/2} \lVert W_2 - W_1 \rVert \sum_{k = \ell}^{L}  \sum_{i = k - \ell}^{2k - 1 - \ell}  \lVert W_2 \rVert^i \sum_{j = L - k}^{L - i + k - \ell - 1} \lVert W_1 \rVert^j \\
    & + \sum_{k = 0}^{L-\ell-1} \lVert W_2 \rVert^{k}  \lVert W_1 \rVert^{L - k - \ell - 1} \lVert W_2 - W_1 \rVert \\
    \leq & d_{max}^{1/2} \lVert W_2 - W_1 \rVert \Big[\sum_{k = \ell}^{L}  \sum_{i = k - \ell}^{2k - 1 - \ell}  \lVert W_2 \rVert^i \sum_{j = L - k}^{L - i + k - \ell - 1} \lVert W_1 \rVert^j  + \sum_{k = 0}^{L-\ell-1} \lVert W_2 \rVert^{k}  \lVert W_1 \rVert^{L - k - \ell - 1} \Big].
\end{align*}
    
\end{proof}

\pagebreak

\subsection{Helper Lemmas}

In this section, we include some well-known theorems and identities required for the analysis.

\begin{lemma}
\label{lemma:elementwisesigma}
Let $v \in \mathbb{R}^d$ and let $\sigma: \mathbb{R} \to \mathbb{R}$ be a scalar function such that $|\sigma(x)| \leq c_0 + c_1|x|$ for all $x \in \mathbb{R}$, where $c_0, c_1 \geq 0$. If $\sigma(v)$ denotes the element-wise application of $\sigma$ to $v$, then the Euclidean ($L_2$) norm of $\sigma(v)$ satisfies the inequality:

$$ \lVert \sigma(v) \rVert_2 \leq c_0 \sqrt{d} + c_1 \lVert v \rVert_2 $$
\end{lemma}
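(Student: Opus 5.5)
The plan is to prove the bound coordinate by coordinate and then sum, using nothing beyond the triangle inequality. Write $v = (v_1,\dots,v_d)$. By definition $\sigma(v)$ is the vector $(\sigma(v_1),\dots,\sigma(v_d))$. The hypothesis $|\sigma(x)| \leq c_0 + c_1|x|$ therefore gives, for each $j$,
\[
|\sigma(v_j)| \leq c_0 + c_1 |v_j| = a_j + b_j,
\]
where $a_j = c_0$ and $b_j = c_1|v_j|$ are both nonnegative.

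Next, because the Euclidean norm is monotone on vectors with nonnegative entries, this entrywise domination lifts to the norm:
\[
\lVert \sigma(v) \rVert_2 = \Big(\sum_{j=1}^d \sigma(v_j)^2\Big)^{1/2} \leq \Big(\sum_{j=1}^d (a_j+b_j)^2\Big)^{1/2} = \lVert a + b \rVert_2 .
\]
Squaring preserves the inequality since both sides of $|\sigma(v_j)| \leq a_j + b_j$ are nonnegative.

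Then we apply Minkowski's inequality, i.e.\ the triangle inequality for $\lVert\cdot\rVert_2$:
\[
\lVert a+b\rVert_2 \leq \lVert a\rVert_2 + \lVert b\rVert_2 .
\]
Here $a = c_0(1,\dots,1)$, so $\lVert a\rVert_2 = c_0\sqrt d$. Also $b = c_1(|v_1|,\dots,|v_d|)$, so $\lVert b\rVert_2 = c_1\lVert v\rVert_2$. Combining the three displays gives $\lVert \sigma(v)\rVert_2 \leq c_0\sqrt d + c_1\lVert v\rVert_2$.

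There is no genuine obstacle. The only point requiring care is to use the vector triangle inequality rather than expanding $(c_0 + c_1|v_j|)^2$ and bounding cross terms. Expanding would leave a term $2c_0c_1\sum_j |v_j|$, which would then need a Cauchy–Schwarz step ($\sum_j |v_j| \leq \sqrt d\,\lVert v\rVert_2$) to recover the same bound; Minkowski avoids this.
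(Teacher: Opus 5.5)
Your proposal is correct and matches the paper's proof: bound each entry by $c_0 + c_1|v_j|$, lift this to the Euclidean norm, and apply the triangle (Minkowski) inequality to split the bound into $c_0\sqrt{d} + c_1\lVert v\rVert_2$. Your remark that squaring is valid because both sides are nonnegative is a small justification the paper leaves implicit.
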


\begin{proof}
We can show this using the triangle inequality.
    \begin{align*}
        \lVert \sigma(v) \rVert = \sqrt{\sum_{i = 1}^d \sigma(v_i)^2} \leq \sqrt{\sum_{i = 1}^d (c_0 + c_1 |v_i |)^2} \leq \sqrt{\sum_{i = 1}^d c_0^2} + \sqrt{\sum_{i = 1}^d c_1^2 |v_i |^2} \leq c_0 \sqrt{d} + c_1 \lVert v \rVert_2.
    \end{align*}
\end{proof}

\begin{lemma}
    (Binomial theorem)
For any nonnegative integer $n$ and scalars $x, y$ we have
\begin{equation*}
    (x + y)^n = \sum_{k = 0}^n \binom{n}{k} x^k y^{n- k}.
\end{equation*}
    
\end{lemma}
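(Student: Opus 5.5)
We would prove the binomial theorem by induction on $n$, using only the distributive law and Pascal's rule for binomial coefficients. Here $x,y$ are scalars in a commutative ring, which is all we need since the paper only applies the identity to nonnegative reals such as $\lVert W_t \rVert$ and $\eta_t \lVert \nabla \mathcal{L}(W_t) \rVert$.

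\emph{Base case.} For $n = 0$ both sides equal $1$, with the convention $x^0 = y^0 = 1$ and $\binom{0}{0} = 1$.

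\emph{Inductive step.} Assume the identity holds for $n$. First write $(x+y)^{n+1} = (x+y)(x+y)^n$. Substituting the inductive hypothesis gives
\begin{align*}
(x+y)^{n+1} = \sum_{k=0}^n \binom{n}{k} x^{k+1} y^{n-k} + \sum_{k=0}^n \binom{n}{k} x^k y^{n+1-k}.
\end{align*}
Next reindex the first sum by $k \mapsto k-1$, so it runs over $k = 1,\dots,n+1$ with coefficient $\binom{n}{k-1}$ on $x^k y^{n+1-k}$. Extend both sums to $k = 0,\dots,n+1$ using the convention $\binom{n}{-1} = \binom{n}{n+1} = 0$. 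Combining the two sums, the coefficient of $x^k y^{n+1-k}$ is $\binom{n}{k-1} + \binom{n}{k}$.

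The one nontrivial ingredient is Pascal's rule, $\binom{n}{k-1} + \binom{n}{k} = \binom{n+1}{k}$ for $1 \le k \le n$. It follows from the factorial formula by putting the two terms over the common denominator $k!\,(n+1-k)!$:
\begin{align*}
\frac{n!\,k + n!\,(n+1-k)}{k!\,(n+1-k)!} = \frac{(n+1)!}{k!\,(n+1-k)!}.
\end{align*}
At the endpoints $k=0$ and $k=n+1$ the combined coefficient is $1$, which equals $\binom{n+1}{0}$ and $\binom{n+1}{n+1}$ respectively. This gives the identity for $n+1$ and completes the induction.

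There is no real obstacle here; the only care needed is the index bookkeeping at the boundary terms $k = 0$ and $k = n+1$.
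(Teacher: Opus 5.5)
Your induction is correct and complete: the expansion of $(x+y)(x+y)^n$, the shift $k \mapsto k-1$, Pascal's rule via the common denominator $k!\,(n+1-k)!$, and the boundary coefficients at $k=0$ and $k=n+1$ all check out. Assuming commutativity costs nothing, since the paper only uses real scalars. The paper itself gives no proof of this lemma; it lists it among the ``well-known theorems and identities'' in the helper section, so there is no alternative route to compare against, and your argument is the standard one.
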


\begin{lemma}
\label{hlemma:innerprod}
    For vectors $a \in \mathbb{R}^{d_1}$, $b \in \mathbb{R}^{d_2}$ and matrix $M \in \mathbb{R}^{d_1 \times d_2}$, we have
    \begin{equation*}
        \langle a, M b \rangle = \langle M^T a, b \rangle.
    \end{equation*}
    
\end{lemma}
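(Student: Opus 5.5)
The statement is the helper identity $\langle a, Mb\rangle = \langle M^T a, b\rangle$ for $a \in \mathbb{R}^{d_1}$, $b \in \mathbb{R}^{d_2}$, $M \in \mathbb{R}^{d_1\times d_2}$. I would prove it directly from the paper's definition of the inner product, $\langle A,B\rangle = \operatorname{tr}(A^TB)$, specialized to column vectors.

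For vectors, the trace is of a $1\times 1$ matrix. So $\langle a, Mb\rangle = \operatorname{tr}(a^T M b) = a^T M b$, which is a scalar. Similarly, $\langle M^T a, b\rangle = (M^Ta)^T b$.

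Using $(M^Ta)^T = a^T (M^T)^T = a^T M$, the second expression equals $a^T M b$. Hence the two sides coincide.

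As a sanity check, one can also write both sides in coordinates. Each equals $\sum_{i=1}^{d_1}\sum_{j=1}^{d_2} a_i M_{ij} b_j$: the left side groups this sum as $\sum_i a_i (Mb)_i$, and the right side groups it as $\sum_j (M^Ta)_j b_j$. Both groupings are finite sums of the same terms, so exchanging the order of summation gives equality.

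There is no real obstacle here. The only point needing care is confirming that the paper's trace-based inner product reduces to the ordinary dot product for vectors, and that the dimensions are consistent: $Mb \in \mathbb{R}^{d_1}$ pairs with $a$, and $M^Ta \in \mathbb{R}^{d_2}$ pairs with $b$.
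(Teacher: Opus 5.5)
Your proof is correct and matches the paper's: both unpack the inner product as a matrix product and use the transpose rule, $(M^Ta)^T = a^T M$ in your version and $\langle a, Mb\rangle = b^T M^T a = \langle b, M^T a\rangle$ in the paper's (which then tacitly uses symmetry of the inner product). The coordinate check is a harmless redundancy.
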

\begin{proof}
\begin{align*}
    \langle a, M b \rangle & = b^T M^T a = \langle b, M^T a \rangle.
\end{align*}
\end{proof}

\begin{lemma}
\label{hlemma:odot}
    For vectors $a, b, c \in \mathbb{R}^d$, we have
    \begin{equation*}
        \langle a \odot b, c \rangle = \langle b, a \odot c \rangle.
    \end{equation*}
\end{lemma}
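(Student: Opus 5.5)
The claim is the identity $\langle a \odot b, c \rangle = \langle b, a \odot c \rangle$ for $a,b,c \in \mathbb{R}^d$. I will prove it by expanding both inner products coordinatewise.

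By definition of the Hadamard product, $(a \odot b)_i = a_i b_i$. Since the inner product on $\mathbb{R}^d$ is $\langle u, v\rangle = \sum_{i=1}^d u_i v_i$, the left-hand side becomes
\begin{equation*}
\langle a \odot b, c \rangle = \sum_{i=1}^d (a_i b_i) c_i .
\end{equation*}
Real multiplication is commutative and associative, so each summand can be rewritten as $(a_i b_i) c_i = b_i (a_i c_i) = b_i (a \odot c)_i$. Summing over $i$ gives
\begin{equation*}
\sum_{i=1}^d b_i (a \odot c)_i = \langle b, a \odot c \rangle ,
\end{equation*}
which is the right-hand side, and the identity follows.

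Equivalently, one can note that $a \odot b = \operatorname{diag}(a)\, b$, where $\operatorname{diag}(a)$ is symmetric. Lemma \ref{hlemma:innerprod} then gives $\langle \operatorname{diag}(a) b, c\rangle = \langle b, \operatorname{diag}(a) c\rangle$, which is the same statement.

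There is no real obstacle here. The only point to watch is that the argument uses nothing beyond the coordinatewise definitions, so it holds for arbitrary real vectors. This generality is what the proof of Lemma \ref{lemma:gradformula} needs, where $a = \sigma'(z_{\ell-1}(W))$.
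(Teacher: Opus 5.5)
Your proof is correct. It is essentially the paper's argument: the paper writes $a \odot b = \operatorname{diag}(a)\, b$ and uses the symmetry of $\operatorname{diag}(a)$ to move it across the inner product, which is exactly your second remark, and your coordinatewise expansion is that same fact written out entry by entry.
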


\begin{proof}
Denote $diag(a)$ as the diagonal $d \times d$ matrix with $a$ on the diagonal. Then
\begin{align*}
    a \odot b = & diag(a) b, \\
    \langle a \odot b, c \rangle = & \langle diag(a) b, c \rangle , \\
    = & \langle b, diag(a)^T c \rangle, \\
    = & \langle b, diag(a) c \rangle, \\
    = & \langle b, a \odot c \rangle.
\end{align*}
\end{proof}

\begin{lemma} (Lp norms)
\label{lemma:lpnorms}
    For a sequence of scalars $a_1,..,a_n$, we have for $p \geq 2$,
    \begin{equation*}
        \big( \sum_{i = 1}^n |a_i|^p \big)^{1/p} \leq \big( \sum_{i = 1}^n |a_i|^2 \big)^{1/2} \leq \sum_{i = 1}^n |a_i| \leq \sqrt{n} \big( \sum_{i = 1}^n |a_i|^2 \big)^{1/2}.
    \end{equation*}
\end{lemma}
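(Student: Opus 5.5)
This is the standard monotonicity/sandwich bound between $\ell_p$ norms, and I would prove the three inequalities separately. If all $a_i = 0$, every quantity is zero and there is nothing to show. Otherwise I would first normalize: set $S = (\sum_{i=1}^n |a_i|^2)^{1/2} > 0$ and $b_i = |a_i|/S$. Then $\sum_i b_i^2 = 1$, and each term satisfies $0 \le b_i \le 1$.

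\textbf{First inequality.} For $p \ge 2$ and $0 \le b_i \le 1$ we have $b_i^p \le b_i^2$. Summing gives $\sum_i b_i^p \le 1$, so $(\sum_i b_i^p)^{1/p} \le 1$. Multiplying back by $S$, using homogeneity of degree one of both sides, yields $(\sum_i |a_i|^p)^{1/p} \le S$.

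\textbf{Second inequality.} Expand the square:
\begin{equation*}
\Big(\sum_i |a_i|\Big)^2 = \sum_i |a_i|^2 + \sum_{i \neq j} |a_i||a_j| \ge \sum_i |a_i|^2,
\end{equation*}
because the cross terms are nonnegative. Taking square roots gives $S \le \sum_i |a_i|$.

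\textbf{Third inequality.} Apply the Cauchy--Schwarz inequality to the vectors $(1,\dots,1)$ and $(|a_1|,\dots,|a_n|)$. This gives $\sum_i |a_i| \le \sqrt{n}\,(\sum_i |a_i|^2)^{1/2}$.

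None of the three steps is a real obstacle. The only point needing care is the normalization used for the first inequality: the bound $b_i \le 1$ is what makes $b_i^p \le b_i^2$ valid, and it requires the reduction to the nonzero case made at the start.
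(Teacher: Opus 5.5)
Your proof is correct and complete. The normalization $b_i = |a_i|/S$ forces $b_i \le 1$, hence $b_i^p \le b_i^2$ for every real $p \ge 2$. The middle inequality follows from discarding the nonnegative cross terms, and the last from Cauchy--Schwarz against $(1,\dots,1)$.

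The paper gives no proof to compare against, since it lists this lemma among well-known helper facts. Your argument covers exactly the way the paper uses it. In the bound on $\lVert W_T \rVert^N$, it applies the first inequality with integer $p = k \ge 2$ and $a_t = \eta_t^{1/2}\lVert \nabla \mathcal{L}(W_t) \rVert$ to get $\sum_t a_t^k \le (\sum_t a_t^2)^{k/2} \le (2L\mathcal{L}(W_0))^{k/2}$.
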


\begin{lemma} Let $E_{\ell}$ and $E_{\ell}^*$ be defined as in (\ref{eq:defineE}) and (\ref{eq:defineEstar}).
For all $W \in \mathbb{R}^{D_2 \times D_1}$ and $M \in \mathbb{R}^{d_\ell \times d_{\ell - 1}}$, we have
    \begin{align*}
        \lVert E_{\ell}[W] \rVert \leq \lVert W \rVert,\\
        \lVert E_{\ell}^*[M] \rVert \leq \lVert M \rVert.
    \end{align*}
\end{lemma}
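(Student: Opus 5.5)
The claim concerns the extraction operator $E_\ell[W]=e_{\ell1}We_{\ell2}$ and its adjoint $E^*_\ell[M]=e^T_{\ell1}Me^T_{\ell2}$. Both inequalities follow from the fact that $e_{\ell1}$ and $e_{\ell2}$ are selection matrices: each is a block row (resp. block column) containing one identity block and zeros elsewhere. The plan is to identify explicitly which entries each operator touches, and then compare Frobenius norms entrywise. No earlier results are needed beyond the definitions (\ref{eq:defineE}) and (\ref{eq:defineEstar}).

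\textbf{First inequality.} Left-multiplying $W$ by $e_{\ell1}$ keeps exactly the rows of $W$ indexed by the $\ell$th row block, namely rows $\sum_{m<\ell}d_m+1$ through $\sum_{m\le\ell}d_m$ of the $D_2$ rows. Right-multiplying by $e_{\ell2}$ then keeps exactly the columns of the $\ell$th column block. Hence $E_\ell[W]$ is the $d_\ell\times d_{\ell-1}$ submatrix of $W$ occupying that block position, so its entries are a subset of the entries of $W$. Therefore
\[
\lVert E_\ell[W]\rVert^2=\sum_{(p,q)\in\text{block }\ell}W_{pq}^2\le\sum_{p,q}W_{pq}^2=\lVert W\rVert^2 .
\]
This holds for an arbitrary $W$, not only a block-diagonal one. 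Equivalently, $e_{\ell1}$ and $e^T_{\ell2}$ have orthonormal rows, so $\lVert e_{\ell1}Ae_{\ell2}\rVert_F\le\lVert e_{\ell1}\rVert_2\lVert A\rVert_F\lVert e_{\ell2}\rVert_2\le\lVert A\rVert_F$, since the spectral norm of each selector is at most $1$.

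\textbf{Second inequality.} By the same reasoning, $e^T_{\ell1}Me^T_{\ell2}$ is the $D_2\times D_1$ matrix with $M$ placed in the $\ell$th diagonal block and zeros elsewhere. Its Frobenius norm therefore equals $\lVert M\rVert$ exactly, which gives the stated inequality with equality.

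The only step requiring care is the index bookkeeping, in particular confirming that the dimensions make $e_{\ell1}We_{\ell2}$ and $e_{\ell1}^TMe_{\ell2}^T$ well-defined. Here $e_{\ell1}$ is $d_\ell\times D_2$ and $e_{\ell2}$ is $D_1\times d_{\ell-1}$, so $e_{\ell1}We_{\ell2}$ is $d_\ell\times d_{\ell-1}$ and $e_{\ell1}^TMe_{\ell2}^T$ is $D_2\times D_1$, as required. Beyond this check there is no real obstacle.
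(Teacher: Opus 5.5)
The paper lists this among its helper lemmas without giving any proof, so there is nothing to compare against. Your argument is correct and is the natural one: $E_\ell[W]$ is the $(\ell,\ell)$ block submatrix of $W$, so its Frobenius norm is at most $\lVert W\rVert$. Likewise, $E^*_\ell[M]$ is $M$ zero-padded into that block, so its norm equals $\lVert M\rVert$. Your dimension check and the alternative bound $\lVert e_{\ell1}Ae_{\ell2}\rVert_F\le\lVert e_{\ell1}\rVert_2\lVert A\rVert_F\lVert e_{\ell2}\rVert_2$ are also sound.
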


\begin{lemma} Suppose a scalar function $\sigma:\mathbb{R} \to \mathbb{R}$ is Lipschitz continuous with constant $c$, such that for any $w_1, w_2 \in \mathbb{R}$, we have
\begin{equation*}
    |\sigma(w_1) - \sigma(w_2) | \leq c |w_1 - w_2 |. 
\end{equation*}
Then the elementwise extension of $\sigma$ to matrices is also Lipschitz continuous with the same constant, such that for $W, W' \in \mathbb{R}^{d_1 \times d_2}$, we have
\begin{equation*}
    \lVert \sigma(W) - \sigma(W') \rVert_F \leq c \lVert W - W' \rVert_F.
\end{equation*}
\end{lemma}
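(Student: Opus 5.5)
The last statement is the helper lemma: an elementwise $c$-Lipschitz scalar function $\sigma$ stays $c$-Lipschitz in the Frobenius norm when applied entrywise to matrices. I will prove it directly from the definition of $\lVert\cdot\rVert_F$ as the square root of the sum of squared entries.

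The steps are as follows.

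1. Fix $W, W' \in \mathbb{R}^{d_1 \times d_2}$. By definition of the elementwise extension, the $(i,j)$ entry of $\sigma(W) - \sigma(W')$ is $\sigma(W_{ij}) - \sigma(W'_{ij})$.

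2. Apply the scalar Lipschitz hypothesis entry by entry to get $|\sigma(W_{ij}) - \sigma(W'_{ij})| \le c\,|W_{ij} - W'_{ij}|$. Since both sides are nonnegative, squaring preserves the inequality, giving $(\sigma(W_{ij}) - \sigma(W'_{ij}))^2 \le c^2 (W_{ij} - W'_{ij})^2$.

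3. Sum over $i = 1,\dots,d_1$ and $j = 1,\dots,d_2$:
\begin{equation*}
\lVert \sigma(W) - \sigma(W') \rVert_F^2 = \sum_{i,j} (\sigma(W_{ij}) - \sigma(W'_{ij}))^2 \le c^2 \sum_{i,j} (W_{ij} - W'_{ij})^2 = c^2 \lVert W - W' \rVert_F^2 .
\end{equation*}

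4. Take square roots. This is legitimate because both sides are nonnegative and $c \ge 0$, so $\sqrt{c^2} = c$. The result is the claimed bound.

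No step is a real obstacle. The only point needing care is that the entrywise bounds are combined through squares rather than absolute values, so that they add up to the squared Frobenius norm on both sides.

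The same argument covers vectors, as the special case $d_2 = 1$. This is the form used earlier, for example $\lVert \sigma(z_\ell(W_2)) - \sigma(z_\ell(W_1)) \rVert \le c_2 \lVert z_\ell(W_2) - z_\ell(W_1) \rVert$. The identical computation with $\sigma'$ and constant $c_3$ gives the bound $\lVert \sigma'(z_\ell(W_2)) - \sigma'(z_\ell(W_1))\rVert \le c_3 \lVert z_\ell(W_2) - z_\ell(W_1)\rVert$ used in Lemma \ref{lemma:delta_change}.
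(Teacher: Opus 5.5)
Your proof is correct and is essentially the paper's argument. Both apply the scalar Lipschitz bound entrywise, square and sum to get the squared Frobenius norms, and pull out $c$ under the square root.
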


\begin{proof}
    \begin{align*}
        \lVert \sigma(W) - \sigma(W') \rVert_F = & \sqrt{\sum_{i,j} (\sigma(W_{ij}) -\sigma(W'_{ij}))^2} \\
        \leq & \sqrt{\sum_{i,j} c^2 (W_{ij} -W'_{ij})^2}\\
        = & c \lVert W - W' \rVert_F.
    \end{align*}
\end{proof}

\begin{lemma}
\label{lemma:hademard}
Let $A, B \in \mathbb{R}^{m \times n}$ be real-valued matrices, and let $\sigma: \mathbb{R} \to \mathbb{R}$ be an element-wise function such that $|\sigma(x)| \leq c_0 + c_1|x|$ for constants $c_0, c_1 \geq 0$ and for all $x \in \mathbb{R}$. Then we have
\[
\|\sigma(A) \odot B\|_F \leq (c_0 + c_1 \| A \|_F) \|B\|_F
\]

\end{lemma}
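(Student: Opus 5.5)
The plan is to argue entrywise and then sum, reducing the Frobenius norm of the Hadamard product to a uniform bound on the multiplying factor. By definition of the Hadamard product, $(\sigma(A)\odot B)_{ij} = \sigma(A_{ij}) B_{ij}$. Hence
\begin{equation*}
\|\sigma(A)\odot B\|_F^2 = \sum_{i=1}^m \sum_{j=1}^n \sigma(A_{ij})^2 B_{ij}^2 .
\end{equation*}

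The key step is to bound each factor $|\sigma(A_{ij})|$ by a single constant that does not depend on $(i,j)$. The linear growth hypothesis gives $|\sigma(A_{ij})| \le c_0 + c_1 |A_{ij}|$. Every entry of a matrix is bounded in absolute value by its Frobenius norm, since $|A_{ij}|^2 \le \sum_{k,l} A_{kl}^2$. Because $c_1 \ge 0$, this yields
\begin{equation*}
|\sigma(A_{ij})| \le c_0 + c_1 \|A\|_F \quad \text{for all } i,j .
\end{equation*}

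Substituting this into the sum and factoring out the constant gives
\begin{equation*}
\|\sigma(A)\odot B\|_F^2 \le (c_0 + c_1\|A\|_F)^2 \sum_{i,j} B_{ij}^2 = (c_0 + c_1\|A\|_F)^2 \|B\|_F^2 .
\end{equation*}
Since $c_0 + c_1\|A\|_F \ge 0$, taking square roots gives the claim.

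There is no real obstacle; the only point needing care is that the bound $|A_{ij}|\le \|A\|_F$ is crude but sufficient, and that nonnegativity of $c_0, c_1$ is used both to keep the bound monotone and to take the square root. As a remark, the way the lemma is invoked in Lemma \ref{lemma:deltabound} and Lemma \ref{lemma:delta_change} is with $\sigma'$ in place of $\sigma$. There the bound $|\sigma'|\le c_2$ from Assumption \ref{assump:lipschitz} is exactly the hypothesis with $c_0 = c_2$ and $c_1 = 0$, so the same argument gives $\|\sigma'(z)\odot v\| \le c_2\|v\|$.
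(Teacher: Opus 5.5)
Your proof is correct and follows essentially the same entrywise argument as the paper. The only difference is in the last step: you bound every factor uniformly, $|\sigma(A_{ij})|\le c_0+c_1\|A\|_F$ via $|A_{ij}|\le\|A\|_F$, before summing, whereas the paper first splits $\bigl(\sum_{i,j}(c_0|B_{ij}|+c_1|A_{ij}||B_{ij}|)^2\bigr)^{1/2}$ with the triangle inequality into $c_0\|B\|_F + c_1\bigl(\sum_{i,j}A_{ij}^2B_{ij}^2\bigr)^{1/2}$ and then bounds the last term by $c_1\|A\|_F\|B\|_F$.
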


\begin{proof}
We have
\begin{align*}
    \|\sigma(A) \odot B\|_F = \sqrt{\sum_{i=1}^m \sum_{j=1}^n |(\sigma(A) \odot B)_{ij}|^2} &= \sqrt{\sum_{i=1}^m \sum_{j=1}^n |\sigma(A_{ij}) B_{ij}|^2} \\
    \leq & \sqrt{\sum_{i=1}^m \sum_{j=1}^n ((c_0 + c_1 | A_{ij}| )|B_{ij}|)^2} \\
    \leq & \sqrt{\sum_{i=1}^m \sum_{j=1}^n (c_0 |B_{ij}| + c_1 | A_{ij}| |B_{ij}|)^2} \\
    \leq &  \sqrt{\sum_{i=1}^m \sum_{j=1}^n c_0^2 |B_{ij}|^2 } + \sqrt{\sum_{i=1}^m \sum_{j=1}^n c_1^2 |A_{ij}|^2 |B_{ij}|^2 } \\
    \leq & c_0 \lVert B \rVert + c_1 \lVert A \rVert \lVert B \rVert.
\end{align*}

\end{proof}

\begin{lemma}
\label{lemma:boundsquaredsum}
    For a nonnegative value $a$, we have
    \begin{equation*}
        (\sum_{i = 0}^n a^i)^2 \leq  (n+1) \sum_{i = 0}^{n} a^{2i} \leq (n+1) \sum_{i = 0}^{2n} a^i.
    \end{equation*}
\end{lemma}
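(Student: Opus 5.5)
The statement to prove is Lemma \ref{lemma:boundsquaredsum}: for $a \geq 0$,
\[
\Big(\sum_{i=0}^n a^i\Big)^2 \leq (n+1)\sum_{i=0}^n a^{2i} \leq (n+1)\sum_{i=0}^{2n} a^i .
\]
The plan is to prove the two inequalities separately, each by an elementary argument.

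For the first inequality, apply the Cauchy--Schwarz inequality to two vectors in $\mathbb{R}^{n+1}$: the all-ones vector $(1,\dots,1)$ and the vector $(a^0, a^1, \dots, a^n)$. Their inner product is $\sum_{i=0}^n a^i$. The squared norm of the first vector is $n+1$, and the squared norm of the second is $\sum_{i=0}^n a^{2i}$. Squaring the Cauchy--Schwarz bound gives exactly $(\sum_{i=0}^n a^i)^2 \leq (n+1)\sum_{i=0}^n a^{2i}$. Equivalently, this is the last inequality of Lemma \ref{lemma:lpnorms} with $a_i = a^{i}$, after squaring both sides, so that lemma can be cited directly.

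For the second inequality, compare index sets. The exponents $2i$ for $i=0,\dots,n$ form the set $\{0,2,4,\dots,2n\}$, which is a subset of $\{0,1,\dots,2n\}$. Since $a \geq 0$, every term $a^j$ is nonnegative, so adding the odd-exponent terms can only increase the sum. Hence $\sum_{i=0}^n a^{2i} \leq \sum_{j=0}^{2n} a^j$, and multiplying by $n+1>0$ finishes the proof.

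The only point needing care is the nonnegativity of $a$, which is used in the second step when dropping terms; otherwise the argument is routine.
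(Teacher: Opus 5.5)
Your proof is correct and matches the paper's: the paper applies Cauchy--Schwarz to the all-ones vector and $(a^0,\dots,a^n)$ to get the first inequality. The paper never writes out the second inequality, and your comparison of exponent sets, using $a \geq 0$, fills in that step correctly.
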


\begin{proof}
    By the Cauchy-Schwarz inequality, we have 
    \begin{align*}
        (\sum_{i = 0}^n a^i)^2 = & (\sum_{i = 0}^n 1 \cdot a^i)^2 \leq (\sum_{i = 0}^n 1^2) (\sum_{i = 0}^n (a^i)^2) = (n + 1) \sum_{i = 0}^{n} a^{2i}
    \end{align*}
\end{proof}

\end{document}